\newcounter{note}[section]
\newcommand{\yuetodo}[1]{{\large\color{green}[Yue todo: #1]}}
\definecolor{yue}{rgb}{0.7, 0, 0}
\renewcommand{\yuetodo}[1]{}
\newcommand{\mcal}[1]{\ensuremath{\mathcal {#1}}}
\definecolor{darkgreen}{rgb}{0,0.5,0}
\definecolor{lightblue}{RGB}{0,176,240}
\definecolor{darkblue}{RGB}{0,112,192}
\definecolor{lightpurple}{RGB}{124, 66, 168}
\definecolor{grey}{RGB}{139, 137, 137}
\definecolor{maroon}{RGB}{178, 34, 34}
\definecolor{green}{RGB}{34, 139, 34}
\definecolor{types}{RGB}{72, 61, 139}
\definecolor{gold}{rgb}{0.8, 0.33, 0.0}
\definecolor{darkgray}{gray}{0.3}
\definecolor{darkred}{rgb}{0.5, 0, 0}
\definecolor{darkgreen}{rgb}{0, 0.5, 0}
\definecolor{darkblue}{rgb}{0,0,0.5}
\newcommand\markx[2]{}
\newcommand{\R}{\mathbb{R}}
\newcommand{\I}{\mathbb{I}}
\newcommand{\ignore}[1]{}
\newcounter{task}
\newtheorem{theorem}{Theorem}[section]
\newtheorem{corollary}[theorem]{Corollary}
\newtheorem{fact}[theorem]{Fact}
\newtheorem{lemma}[theorem]{Lemma}
\newtheorem{assumption}[theorem]{Assumption}
\theoremstyle{definition}
\newtheorem{definition}[theorem]{Definition}
\newtheorem{remark}[theorem]{Remark}
\newcounter{cnt:challenge}
\newcommand*\samethanks[1][\value{footnote}]{\footnotemark[#1]}
\newcommand{\renyi}{R\'{e}nyi\xspace}
\newcommand{\D}{\mathsf{D}}
\newcommand{\Dz}{\mathsf{D^z}}
\newcommand{\RDP}{\ensuremath{\mathsf{RDP}}}
\newcommand{\zCDP}{\ensuremath{\mathsf{zCDP}}}
\newcommand{\E}{\mathbf{E}}
\newcommand{\supp}{\ensuremath{\mathsf{supp}}}
\newcommand{\W}{\ensuremath{\mathsf{W}}}
\newcommand{\M}{\mathcal{M}}
\newcommand{\ts}{\textstyle}
\newcommand{\f}{\mathfrak{f}}
\begin{document}
\title{Privacy Amplification by Iteration for ADMM with (Strongly) Convex Objective Functions}

\author{T-H. Hubert Chan\thanks{Department of Computer Science, the University of Hong Kong.} \and Hao Xie\samethanks \and Mengshi Zhao\samethanks}

\date{}

\begin{titlepage}

\maketitle

\begin{abstract}
We examine a private ADMM variant for (strongly) convex objectives which is a primal-dual iterative method. Each iteration has a user with a private function used to update the primal variable, masked by Gaussian noise for local privacy, without directly adding noise to the dual variable. Privacy amplification by iteration explores if noises from later iterations can enhance the privacy guarantee when releasing final variables after the last iteration.

Cyffers et al. [ICML 2023] explored privacy amplification by iteration for the proximal ADMM variant, where a user's entire private function is accessed and noise is added to the primal variable. In contrast, we examine a private ADMM variant requiring just one gradient access to a user's function, but both primal and dual variables must be passed between successive iterations.

To apply Balle et al.'s [NeurIPS 2019] coupling framework to the gradient ADMM variant, we tackle technical challenges with novel ideas. First, we address the non-expansive mapping issue in ADMM iterations by using a customized norm. Second, because the dual variables are not masked with any noise directly, their privacy guarantees are achieved by treating two consecutive noisy ADMM iterations as a Markov operator.

Our main result is that the privacy guarantee for the gradient ADMM variant can be amplified proportionally to the number of iterations. For strongly convex objective functions, this amplification exponentially increases with the number of iterations. These amplification results align with the previously studied special case of stochastic gradient descent.

\ignore{

We study a private variant of ADMM with (strongly) convex objective functions, which is a primal-dual iterative optimization method. We consider a privacy model in which each iteration corresponds to a user whose private function is accessed only to update the primal variable, which is masked with Gaussian noise to achieve local privacy, while no noise is directly added to the dual variable.

Privacy amplification by iteration can be interpreted from the perspective of the user from the first iteration, i.e., whether noises generated from subsequent iterations can amplify the privacy guarantee of releasing final variables after the last iteration.

A recent work by Cyffers et al. [ICML 2023] has investigated privacy amplification by iteration for the proximal variant of ADMM, in which the entire private function of a user is accessed.  In this variant, only one variable needs to be passed between successive iterations, which can be masked by the noise added to the primal variable. In contrast, we consider a private variant of ADMM that needs only one gradient access to a user's private function. However, in this variant, both the primal and the dual variables need to be passed between successive iterations.

In order to apply the coupling framework in Balle et al. [NeurIPS 2019] to this gradient variant of ADMM, we need some novel ideas to resolve the following technical hurdles. First, we need that each ADMM iteration corresponds to a non-expansive mapping that unfortunately does not hold under the usual norm on the joint space of the primal and the dual variables, which we resolve by considering a customized norm.  Second, since the dual variable is not masked with any noise in each iteration, one iteration of noisy ADMM does not have a strong enough privacy guarantee, which we resolve by considering two consecutive noisy ADMM iterations together as a Markov operator.  

Our main result is that the privacy guarantee for the gradient variant of ADMM can be amplified by a factor proportional to the number of iterations; for strongly convex objective functions, the amplification is exponential in the number of iterations. These amplification results are consistent with the previous special case of stochastic gradient descent.

}
\end{abstract}

\thispagestyle{empty}
\end{titlepage}

\section{Introduction}
\label{sec:intro}

\emph{Alternating direction method of multipliers}~\cite{Gabay1976ADA} (ADMM) has been
 designed
for convex programs whose objective functions
can be decomposed as the sum $\f(x) + g(y)$ of two convex 
functions\footnote{It will be clear soon
why we use a different font for $\f$.},
where the \emph{primal} variables $x$ and $y$ are restricted
by some linear constraint $Ax + By = c$.

Decomposing the objective function into the sum of two convex functions offers several advantages. Firstly, different optimization algorithms can be applied to each part of the function. Secondly, in a distributed learning setting, the function $\f $ refers to the \emph{loss functions} from various users which can be optimized in parallel, while the function $g$ refers to a \emph{regularizer} term that can typically be optimized by a central server.

In ADMM, a \emph{dual} variable $\lambda$ keeps track
of how much the linear constraint is violated.
The method is an iterative procedure that minimizes some
\emph{Lagrangian} function $\mcal{L}(x, y, \lambda)$, which 
is defined in terms\ of $\f$ and $g$.  In each iteration, the three variables $x$, $y$ and $\lambda$ are updated in sequential order.
While the primal variables $x$ and $y$ are each updated
(while keeping other variables constant) to minimize $\mcal{L}$,
the dual variable $\lambda$ is updated to encourage the feasibility
of the linear constraint.
The alternating nature of variable updates makes
the method widely adaptable in large-scale
distributed contexts~\cite{DBLP:journals/ftml/BoydPCPE11}.

In this paper, we focus on a \emph{stochastic} version of ADMM proposed by~\cite{DBLP:conf/icml/OuyangHTG13}, in which the function $\f$
can be viewed as an expectation of functions sampled from
some distribution $\mcal{D}$. 
Each iteration~$t$ is associated with some user,
whose (private) data is some function $f_t$ sampled from $\mcal{D}$.
Instead of directly accessing $\f$, each iteration~$t$ only has access
to the corresponding user's function $f_t$.
While the sequence $f_1, f_2, \ldots, f_T$ of
sampled functions arises from user data,
the function $g$ is publicly known.
In one ADMM iteration, $f_t$ is only needed for updating the $x$ variable.
In the \emph{proximal} variant, the whole function $f_t$ is used
in some optimization step to update~$x$.
Instead, we will focus on the more computationally efficient
\emph{gradient} variant that uses the first order
approximation of $f_t$~\cite{DBLP:journals/siamis/OuyangCLP15, DBLP:journals/jscic/LiL19}, where
only one access to the gradient oracle $\nabla f_t(\cdot)$ is sufficient.

All variants of \emph{differential privacy}~\cite{DBLP:conf/icalp/Dwork06,BunS16,Miro17} are based on the principle that a mechanism or procedure achieves its privacy guarantee through the incorporation of randomness.
\ignore{
are based on the premise that the privacy guarantee of a mechanism or procedure
is achieved via randomness.  
Specifically, given two input scenarios that are ``close'',
the corresponding two distributions of outputs (that 
can be observed by the \emph{adversary}) are also ``close''.  Each variant of differential privacy formally defines a symmetric binary \emph{neighboring} relation
to capture close inputs and uses some notion of \emph{divergence} to quantify
the closeness of output distributions,
where a smaller value for the divergence means stronger privacy.  
}
Private variants
of ADMM have been considered by adding noises to the variables~\cite{DBLP:conf/ccs/ZhangZ16}.
To apply this privacy framework to ADMM, the function~$f_t$ 
is considered as the private input of the user in iteration~$t$.
Hence, one possible way~\cite{DBLP:journals/tifs/ShangXLLSG21} to achieve \emph{local privacy} (against an adversary that can observe the variables after each iteration)
for the user~$t$ is to sample some noise~$N_t$, which
is added to the result of the gradient oracle oracle $\nabla f_t(\cdot)$.
A popular choice for sampling $N_t$ is Gaussian noise, for which the \renyi~$\D_\alpha$ and 
zero-concentrated~$\Dz$ divergences (formally explained in Section~\ref{sec:privacy_prelim}) are suitable to measure the
closeness of the resulting output distributions.

In the literature, \emph{privacy amplification} loosely refers to the
improvement of privacy analysis for a user using extra sources of randomness
other than the noise used for achieving its local privacy.  An example
is the randomness used in sampling data~\cite{DBLP:conf/crypto/ChaudhuriM06,DBLP:conf/nips/BalleBG18}; for ADMM, this can refer to the randomness in sampling each $f_t$ from $\mcal{D}$.  In applications where data from different users can be
processed in any arbitrary order, extra randomness from
\emph{shuffling} users' data have been considered~\cite{DBLP:conf/soda/ErlingssonFMRTT19, DBLP:conf/eurocrypt/CheuSUZZ19, DBLP:conf/crypto/BalleBGN19};
for ADMM, this can mean that the order of the users in the iterative process is randomly permuted. Privacy amplification \emph{by iteration}~\cite{DBLP:conf/focs/FeldmanMTT18}
has been proposed to analyze an iterative procedure in which
some noise is sampled in each iteration to achieve local privacy
for the user in that iteration.  The improved privacy analysis
is from the perspective of the user from the \textbf{first} iteration.
The intuition is that by exploiting the extra randomness generated in subsequent iterations,
the privacy guarantee against an adversary that observes
only the result at the end of the final iteration can be improved.
In this paper, we consider privacy amplification by iteration for ADMM;
in other words, we consider a deterministic sequence $f_1, f_2, \ldots, f_T$ of functions,
where the function $f_t$ is used in iteration~$t$ of ADMM, and
the only source of randomness is the noise $N_t$ sampled in each iteration~$t$,
which is used to mask only the $x$ variable (during access to the gradient oracle).

Loosely speaking, each iteration in the iterative process considered
in~\cite{DBLP:conf/focs/FeldmanMTT18,BalleBGG19}
corresponds to a \emph{non-expansive} mapping,
and an independent copy of Gaussian noise is added to the
result of each iteration before passing to the next iteration.
From the perspective of the user from the first iteration,
the privacy guarantee of the final output after $T$ iterations,
when measured with the $\Dz$-divergence\footnote{The results in~\cite{DBLP:conf/focs/FeldmanMTT18,BalleBGG19} are stated equivalently in terms of \renyi divergence.},
is improved by a multiplicative factor of $T$.

\ignore{
The factor~$T$ comes from the intuition that
a Gaussian noise with a variance of $\sigma^2$
leads to a $\Dz$-divergence proportional to $\frac{1}{\sigma^2}$ (see Fact~\ref{fact:zcdp_gaussian});
hence, for the special case where each non-expansive function
is the identity function, the sum of $T$ independent
Gaussian noises has a variance of $T \sigma^2$,
which explains a shrinking factor of $T$ in the divergence.
}

A recent work~\cite{DBLP:conf/icml/CyffersBB23}
employed this framework to examine privacy amplification by iteration
in the \textbf{proximal} variant of ADMM, 
for the purpose of analyzing privacy leakage to both the adversary and among different users.

\noindent \textbf{Our Contribution.} The main purpose
of this paper is to apply the approaches in~\cite{DBLP:conf/focs/FeldmanMTT18,BalleBGG19} to achieve privacy amplification by iteration for the
\textbf{gradient} variant (that uses only the gradient oracle to update the variable~$x$).
When one iteration of ADMM is considered,
the proximal variant as considered in~\cite{DBLP:conf/icml/CyffersBB23}
needs to pass only one variable between successive iterations,
while the gradient variant needs
to pass both the $x$ and $\lambda$ variables.
However, when one analyzes the transition
of variables in the $(x, \lambda)$-space,
there turns out to be two major technical hurdles,
which we give high levels ideas for how we resolve them
(where more details are described in Section~\ref{sec:challenges}).

\begin{compactitem}
\item \emph{Non-expansive iteration.}  It is crucial
in~\cite{DBLP:conf/focs/FeldmanMTT18,BalleBGG19} that before adding noise,
each iteration corresponds to a non-expansive mapping acting on the variable space.
While it is possible to rephrase one ADMM iteration (in Algorithm~\ref{alg:one_iteration}) as a transition in the $(x, \lambda)$-space,
it can be shown (in Section~\ref{sec:challenges}) that this transition
may correspond to a strictly expanding mapping under the usual norm.

Our novel idea is to design a customized norm in the $(x, \lambda)$-space 
that (i)~is suitable for analyzing 
the privacy of ADMM and (ii)~satisfies the condition
that one ADMM iteration corresponds to a non-expansive
mapping under this customized norm.
For strongly convex objective functions, we further refine our customized norm
under which each iteration becomes a strictly contractive mapping.

\item \emph{One-step privacy.}
In~\cite{DBLP:conf/focs/FeldmanMTT18,BalleBGG19},
the variable produced in each iteration is totally masked
at every coordinate with Gaussian noise before passing
to the next iteration.  Hence, it is somehow straightforward
(also using the aforementioned non-expansive property)
to achieve some privacy guarantee 
for one iteration in terms of $\Dz$-divergence.

However, the case for ADMM is more complicated.
As aforementioned, in each iteration, the sampled noise 
is used to mask only the $x$ variable.  This means that for
the variable in the $(x, \lambda)$-space returned in
one iteration, the $\lambda$-component receives no
noise and is totally exposed.  Therefore, no matter how much
noise is used to mask $x$ variable, the resulting privacy analysis
for one ADMM iteration will still give a $\Dz$-divergence of $+\infty$.

Our innovative idea is to consider one step as consisting of two noisy ADMM
iterations. The very informal intuition is that the two copies of
independent noises from two iterations can each be used to mask one component of $(x, \lambda)$ in the result at the end of the two iterations.
However, the formal argument is more involved.  To avoid complicated
integral calculations, we perform the relevant $\Dz$-divergence analysis
using the tools of adaptive composition of private mechanisms.  The most creative
part of this argument is to find a suitable intermediate variable
that can both (i)~facilitate the privacy composition proof and 
(ii)~utilize the aforementioned customized norm under which one ADMM
iteration is non-expansive.
\end{compactitem}

\ignore{
\noindent \emph{Non-expansive iteration.}  
In~\cite{DBLP:conf/focs/FeldmanMTT18,BalleBGG19}, it is essential that each iteration involves a non-expansive mapping in the variable space before adding noise. This also applies to the proximal variant~\cite{DBLP:conf/icml/CyffersBB23}. However, for the ADMM gradient oracle variant, one iteration signifies a transition in the 
$(x, \lambda)$-space, which can be a strictly expanding under the usual norm.
We resolve this by creating a custom norm in the $(x, \lambda)$-space that allows for privacy analysis in ADMM and ensures non-expansiveness (or contractiveness for strongly convex objectives). 

\noindent \emph{One-step privacy.}
In~\cite{DBLP:conf/focs/FeldmanMTT18,BalleBGG19}, each iteration masks variables with Gaussian noise, allowing for straightforward privacy guarantees in terms of $\Dz$-divergence. This applies to the proximal ADMM variant~\cite{DBLP:conf/icml/CyffersBB23}, where noise is added to the $x$ variable and linearly transformed to mask the $\lambda$ variable.
However, passing both $x$ and $\lambda$ variables is more complex. The sampled noise masks only the $x$variable, leaving the $\lambda$ variable exposed, resulting in infinite $\Dz$-divergence. Our innovative idea involves considering one step as two noisy ADMM iterations, using independent noises to mask each component of $(x, \lambda)$. 


}

\noindent \textbf{Our Informal Statements.}  We show that from the perspective of the user
from the first iteration, the final variables after $T$ noisy ADMM iterations
achieve privacy amplification in the sense that the $\Dz$-divergence
is proportional to $\frac{1}{T}$;
for strongly convex objective functions,
the privacy amplification is improved to $\frac{L^T}{T}$, for some $0 < L < 1$.
The formal results for the general convex case are
in Theorem~\ref{th:ADMM_privacy} and Corollary~\ref{cor:first_user}.
The formal statements for the strongly convex case
are given in Section~\ref{sec:strongly_convex}.

\noindent \textbf{Privacy for Other Users.}
We analyze the privacy guarantee from the perspective of the first user to make the presentation clearer.
As pointed out in~\cite{DBLP:conf/focs/FeldmanMTT18}, very simple techniques can extend the privacy
guarantees to all users: (1) random permutation of all users; or
(2) \emph{random stopping}: if there are $n$ users, stop after a random number $R \in [1..n]$ of iterations. (Hence, with constant probability,
a user is either not included in the sample, or the number of iterations after it is $\Omega(n)$.)  We give the details in Section~\ref{sec:other_users}.

\noindent \textbf{Convergence Rates.}  We emphasize that our contribution is to analyze privacy amplification 
for private variants of ADMM that have already appeared in the literature~\cite{DBLP:conf/ccs/ZhangZ16,DBLP:journals/tifs/ShangXLLSG21}, whose applications and convergence rates have already been analyzed. 
However, for completeness, we present the tradeoff between utility (measured
by the convergence rate) and privacy (measured by the
variance of privacy noise) in Section~\ref{sec:utility}.


\noindent \textbf{Experimental Results.}
Despite being primarily theoretical, we conduct experiments on a general Lasso problem. Specifically, we empirically examine the effects of strong convexity and privacy noise magnitude on convergence rates.
The details are given in Section~\ref{sec:experiment}.

\noindent \textbf{Limitations or Social Concerns.}
Since we operate under the same privacy and threat model as in~\cite{DBLP:conf/focs/FeldmanMTT18,BalleBGG19},
we inherit the same limitations and social concerns 
as the previous works.  For instance,
our privacy amplification results assume that
subsequent users generate the randomness honestly and do not collude with the adversary.

\noindent \textbf{Paper Organization.}
While the most relevant works are mentioned in this section, further details
on related work are given in Section~\ref{sec:related}.
Background on ADMM is given in Section~\ref{sec:admm_prelim}
and formal privacy notions are given in Section~\ref{sec:privacy_prelim}.
In Section~\ref{sec:coupling}, we give a review of the previous 
coupling approach~\cite{BalleBGG19} that achieves privacy amplification
by iteration.
We explain the technical hurdles in Section~\ref{sec:challenges} when applying it to ADMM.
The details for the general convex case are given in Section~\ref{sec:technical};
the strongly convex case is given in Section~\ref{sec:strongly_convex}.
In Section~\ref{sec:other_users}, we apply the techniques in \cite{DBLP:conf/focs/FeldmanMTT18} to extend the privacy guarantees to all users.
The trade-off between privacy and utility is given in Section~\ref{sec:utility}.
We refer to Section~\ref{sec:experiment} for a numerical illustration of our algorithms on a general Lasso problem.
We have empirically confirmed that, as predicted theoretically, both the contraction factor and noise variance indeed affect the algorithm's convergence rates.
In Section~\ref{sec:improvement},
we discuss potential improvements of parameters
in our bounds.

\section{More Related Work}
\label{sec:related}

\ignore{
\begin{itemize}
            \item (1jtH, Xzxx) \cite{DBLP:conf/icml/CyffersBB23} does not consider the first order approximation of $f$.
            \item (Xzxx) \cite{DBLP:journals/tsp/MaoYHGSY20, DBLP:journals/siamis/OuyangCLP15} consider first order approximation.
            \item (D5qa) \cite{DBLP:conf/nips/ChourasiaYS21, DBLP:conf/nips/0001S22, DBLP:journals/corr/abs-2204-01585, DBLP:journals/corr/abs-2212-12629}, about Langevin framework, multiple epochs and mini-batches.
            \item (Xzxx) \cite{DBLP:journals/tifs/LiuGSALZF22} has the idea of Laplacian smoothing.
        \end{itemize}
				}

\noindent \emph{ADMM Background.}  More detailed explanation
of ADMM can be found in the book~\cite{DBLP:journals/ftml/BoydPCPE11}.
As many machine learning problems can be cast as the minimization
of some loss function,
many works have applied ADMM as a toolkit to solve the corresponding problems efficiently, e.g. \cite{DBLP:conf/icml/SiahkamariALGSK22, DBLP:conf/icml/0007YCSMJRT0W21, DBLP:conf/nips/HolmesZHW21, DBLP:conf/nips/LiYWX21, DBLP:conf/nips/VariciSST21, DBLP:conf/aaai/RenS021, DBLP:conf/aaai/HeA22}.


The convergence rate of ADMM has also been well investigated. For instance, based on the scheme of variational inequality, a clear proof for ADMM convergence has been give 
in~\cite{DBLP:journals/siamnum/HeY12}. Similar optimality conditions
used in the proof have also inspired us to analyze the non-expansive
inequality in Section~\ref{sec:non-expansion}.
\ignore{
When the objective functions $f$ and $g$ are assumed to be strongly convex, better convergence rates can be achieved\cite{DBLP:journals/siamis/GoldsteinOSB14}. 
}

\ignore{
Some works focus on the variants of the ADMM, such as simplified constraint\cite{DBLP:journals/moc/TianY19}, reducing the number of variables\cite{He2010OnTA}, and adding constraints to the matrices or objective functions\cite{DBLP:journals/jscic/DengY16}. 

\cite{DBLP:journals/jscic/DengY16} provides some linear convergence rate result for ADMM under some assumptions like strong convexity.
}
The gradient variant of ADMM is also focused in this paper,
which uses the first-order approximation
of the function $f(\cdot)$ such
that only access to the gradient oracle $\nabla f(\cdot)$ is sufficient
 during updates~\cite{DBLP:journals/siamis/OuyangCLP15, DBLP:journals/jscic/LiL19}. 
The advantage is that we can obtain a closed form for updating the~$x$~variable.
Recently, stochastic ADMM has attracted wide interests~\cite{DBLP:conf/icml/OuyangHTG13, DBLP:conf/ijcai/0004K16, DBLP:conf/icml/ZhongK14}. The main
insight is that by adding a zero-mean noise with bounded support or variance
to the result of the aforementioned gradient oracle,
convergence results can still be achieved. 
This enables the randomized framework of differential privacy~\cite{DBLP:conf/icalp/Dwork06,BunS16,Miro17} to be readily adopted, which typically adds
zero-mean noises to certain carefully selected variables in a procedure.


\noindent \emph{Private ADMM.}
Various privacy settings for ADMM have been considered in~\cite{DBLP:conf/ccs/ZhangZ16},
where noises are added to both the primal and the dual variables.
The privacy model where each iteration~$t$ uses some function $f_t$ from a user has been studied in~\cite{DBLP:conf/bigdataconf/DingZCXZP19, DBLP:conf/codaspy/ChenL20, DBLP:journals/tifs/ShangXLLSG21},
where local privacy can be achieved by adding noise to the $x$ variable.
For noisy gradient variant of ADMM~\cite{DBLP:journals/tifs/ShangXLLSG21}
adds noise to the result of the gradient oracle.
In Section~\ref{sec:admm_prelim}, we explain the technical issues for why we still add noise to the $x$ variable, as opposed to the gradient, even when we
focus on the gradient variant.

\ignore{
When the smoothness of $f(\cdot)$ is assumed, the noise used to perturbe the gradient or to perturbe the primal variables only differ in a constant magnitude if linear approximation of $f(\cdot)$ is used in minimizing $f(\cdot)$, which gives that these two perturbations are equivalent when the assmuption holds.
}


\noindent \emph{Privacy Amplification.}
To paraphrase~\cite{BalleBGG19},
privacy amplification refers to improving the privacy guarantees
by utilizing \emph{sources of randomness that are not accounted for
by standard composition rules},
where the ``standard'' noises are often interpreted
as those directly used to achieve local privacy for individual users.
For privacy amplification by iteration,
the amplification factor proportional to the number $T$ of iterations
 comes from the intuition that
a Gaussian noise with a variance of $\sigma^2$
leads to a $\Dz$-divergence proportional to $\frac{1}{\sigma^2}$ (see Fact~\ref{fact:zcdp_gaussian}).
Hence, for the special case where each non-expansive function
is the identity function, the sum of $T$ independent
Gaussian noises has a variance of $T \sigma^2$,
which explains a shrinking factor of $T$ in the divergence.

In addition to
privacy amplification by iteration~\cite{DBLP:conf/focs/FeldmanMTT18, BalleBGG19}
that is the focus of this paper, 
other approaches to privacy amplification have been investigated as follows.

\ignore{
The unavoidable trade-off between privacy and utility has promoted the development of further algorithm design to tight the privacy bounds. Privacy amplification refers to find ways to improve the privacy not depend on  standard composition\cite{BalleBGG19}. 
}

For methods that approximate the objective functions
by sampling user data,
privacy amplification by subsampling
utilizes the randomness involved in
sampling the data~\cite{DBLP:conf/crypto/ChaudhuriM06, DBLP:conf/focs/KasiviswanathanLNRS08, DBLP:conf/ccs/LiQS12, DBLP:conf/innovations/BeimelNS13, DBLP:conf/focs/BunNSV15, DBLP:conf/nips/BalleBG18, DBLP:conf/aistats/WangBK19}. It is shown that privacy
guarantees can be amplified proportionally with the square root of the size of dataset~\cite{DBLP:conf/nips/BalleBG18}. 

For aggregation methods that can 
process users' data in any arbitrary order,
extra randomness can be utilized to permute users' data
(that has already been perturbed by local noise) before sending
the shuffled data to the aggregator.
In the framework of privacy amplification by shuffling~\cite{DBLP:conf/soda/ErlingssonFMRTT19, DBLP:conf/eurocrypt/CheuSUZZ19, DBLP:conf/crypto/BalleBGN19},
the privacy guarantees for the shuffled randomized reports received by the aggregator 
can be amplified by a factor proportional to the square root of the number of users.

\ignore{
proposed . Usually, it require data to satisfy local differential privacy, and the transcripts between local agent and central are anonymized. It ensures that the reports from any local differential privacy protocol  will  also  satisfy  central  differential  privacy  at  a  per-report privacy-cost bound, which is also the square root of the size of dataset lower than the local differential privacy  bound\cite{DBLP:conf/crypto/BalleBGN19}.
}

\ignore{
While non-expansive mappings have been considered in~\cite{DBLP:conf/focs/FeldmanMTT18},
the privacy guarantees have been improved when each 
iteration corresponds to a strictly contracting mapping~\cite{BalleBGG19},
which is applicable when the objective functions involved
in stochastic gradient descent are strongly convex.
}

\ignore{
Privacy amplification by iteration mainly consider a sequence of post-processing functions, and show that during the consecutive operator there are a natural amplification phenomenon\cite{DBLP:conf/focs/FeldmanMTT18}.
}

\noindent \emph{Privacy Amplification for the Proximal ADMM Variant.}
Cyffers et al.~\cite{DBLP:conf/icml/CyffersBB23}
considered privacy amplification by both subsampling
and iteration for the proximal variant of ADMM.
They employed privacy amplification by iteration to analyze
the privacy leakage of one user's private data when another user observes the intermediate variables after a number of iterations.
In the proximal variant, it is possible to pass just
one variable $w = \lambda - \beta A x$ between successive iterations.
Indeed, they have shown that this can be captured by an abstract fixed-point iteration,
to which the framework of Feldman et al.~\cite{DBLP:conf/focs/FeldmanMTT18}
can be readily applied.
In contrast, the gradient variant of ADMM requires two variables
to be passed between successive iterations.

\noindent \emph{Other ADMM Interpretations.}
Instead of having one user per iteration,
some works consider the case that each iteration involves
multiple number of users~\cite{DBLP:conf/ccs/ZhangZ16, DBLP:conf/icml/ZhangKL18, DBLP:journals/tifs/HuangHGCG20, DBLP:conf/securecomm/DingEZGPH19,DBLP:journals/corr/abs-2207-10226} and the purpose
is for all the users to learn some common model parameters.  In this case, the
the coordinates of the variable $x$ are partitioned among the users,
each of which has a local copy incorporated into~$x$.  The linear constraint $A x + B y = c$ has a special form that essentially states that all copies should be the same as some global copy represented by the variable~$y$.  Hence, ADMM in this case can be interpreted as a distributed system
that promotes the consensus of the users' parameters towards a global agreement.

\noindent \emph{Analyzing Noisy Iterative Procedures 
with Langevin Diffusion.}
Some works have analyzed the randomness in an iterative procedure such 
as stochastic gradient descent
using Langevin diffusion~\cite{DBLP:conf/nips/ChourasiaYS21, DBLP:conf/nips/0001S22, DBLP:journals/corr/abs-2204-01585, DBLP:journals/corr/abs-2212-12629}.
Both privacy amplification by iteration frameworks~\cite{DBLP:conf/focs/FeldmanMTT18, BalleBGG19} use a coupling approach to analyze the divergence
associated with Gaussian noise.  By using stochastic differential equations,
the Langevin diffusion approach offers more flexibility and
allows more nuanced analysis for multiple epochs and mini-batches, where 
a user's data may be used multiple number of times in different iterations.
As commented in~\cite{DBLP:conf/nips/0001S22}, 
the coupling approach of Balle et al.~\cite{BalleBGG19} can
analyze multiple epochs via the straightforward
privacy composition, but may not give the best privacy guarantees.
Therefore, it will be interesting future work to analyze the noisy ADMM variants
under the Langevin diffusion framework.

\section{Preliminaries}
\label{sec:admm_prelim}

\subsection{Basic ADMM Framework}


\noindent \textbf{ADMM Convex Program.}
Suppose for some positive
integers $n$ and $\ell$,
we have convex functions $\f: \R^n \rightarrow \R$
and $g: \R^\ell \rightarrow \R$.
Suppose for some positive integer $m$,
we have linear transformations (also viewed as matrices)
$A: \R^n \rightarrow \R^m$
and $B: \R^\ell \rightarrow \R^m$,
and a vector $c \in \R^m$.
The method ADMM is designed
to tackle convex programs of the form:

\begin{subequations}
\begin{align}
		\min_{x, y} \quad &   \f(x) + g(y) \label{objective} \\
		\text{s.t.} \quad & Ax + By = c \in \R^m \label{prob:modified} \\
		\quad & x \in \R^n, y \in \R^\ell
\end{align}
\end{subequations}

\noindent \textbf{Function $\f$ as an expectation functions.}
In some learning applications, the function $\f$ is derived from
a distribution $\mcal{D}$ of functions $f : \R^n \rightarrow \R$.
We use $f \in \mcal{D}$ to mean that $f$ is in the support of $\mcal{D}$
and $f \gets \mcal{D}$ to mean sampling $f$ from $\mcal{D}$.
Then, the function $\f$ has the form
$\f(x) = \E_{f \gets \mcal{D}}[f(x)]$.

In this work, we focus on the case that the functions $f \in \mcal{D}$
are differentiable.
In the basic version,
we assume that the algorithm has oracle access to the gradient $\nabla \f(\cdot)$.
However, in the stochastic version described
in Section~\ref{sec:stochastic_ADMM}, there are $T$ i.i.d. samples $(f_1, f_2, \ldots, f_T)$ from
$\mcal{D}$ and the algorithm only has oracle
access to the gradient $\nabla f_t(\cdot)$ for each $t \in [T]$.

\noindent \textbf{Notation.}
We use $\langle \cdot, \cdot \rangle$ to represent the standard
inner product operation and $\| x \| := \sqrt{\langle x, x \rangle}$
to mean the usual Euclidean norm.  We use
$\I$ to denote the identity map (or matrix) in the appropriate space.  Since a linear transformation~$A$
can be interpreted as a matrix multiplication,
we use the transpose notation \mbox{$A^\top: \R^m \rightarrow \R^n$} to denote the adjoint
of $A$.  We also consider the \emph{operator norm} $\| A \| := \sup_{x \neq y} \frac{\| A x - A y \|}{\|x - y\|}$.

\noindent \textbf{Smoothness Assumption.}
We assume that every function $f$ in the support of $\mcal{D}$ is differentiable and $\nu$-smooth for some $\nu > 0$;
in other words, $\nabla f(\cdot)$ is $\nu$-Lipschitz,
i.e., for all $x$ and $x'$,
we have $\|\nabla f(x) - \nabla f(x')\| \leq \nu \|x - x'\|$.
To avoid too many parameters, we will
mostly set $\eta = \frac{1}{\nu}$ for the general case.
(For the strongly convex case in Section~\ref{sec:strongly_convex}, we will set $\eta$ differently.)

\noindent \textbf{Augmented Lagrangian Function.}
Recall that we have primal variables $x \in \R^n$ and  $y \in \R^\ell$,
and the dual variable $\lambda \in \R^m$
corresponds to the feasibility constraint~(\ref{prob:modified}).
For some parameter~$\beta > 0$, the following
augmented Lagrangian function is considered in the literature:

\begin{align}
\f(x) + g(y) - \langle \lambda, A x + B y - c \rangle
+ \frac{\beta}{2} \|Ax + By - c\|^2.
\label{eq:basic_L}
\end{align}

The parameter $\beta > 0$ is chosen to offer a tradeoff between the approximations of the original objective
function $\f(x) + g(y)$ versus the feasibility constraint $Ax + By = c$,
where a larger value of $\beta$ means that more importance is placed
on the feasibility constraint.

On a high level, ADMM is an iterative method. In the literature, the description of one iteration consists of three steps in the following order:

\begin{compactitem}
\item Keeping $y$ and $\lambda$ fixed, update $x$ to minimize the Lagrangian function in (\ref{eq:basic_L}).
\item Keeping $x$ and $\lambda$ fixed, update $y$ to minimize the above Lagrangian function
in~(\ref{eq:basic_L}).
\item Update $\lambda$ accordingly to ``encourage'' the feasibility of $(x,y)$.
\end{compactitem}

However, since we later need to consider privacy leakage in each iteration,
in our formal description, it will be more convenient to start each iteration with the second bullet.
This is an equivalent description, because these three steps are executed in a round-robin
fashion.

\noindent \textbf{Augmented Lagrangian Function with First Order Approximation for Differentiable~$f$.}  As we shall see later,
in the stochastic version, the algorithm may have
access to only samples $f$ from $\mcal{D}$,
and eventually, we will need to add noise to preserve privacy.
Instead of assuming that the algorithm has full knowledge of the sampled $f \in \mcal{D}$,
the first order approximation of~$f$
(with respect to some current~$\widehat{x}$) is considered
in the literature~\cite{DBLP:journals/siamis/OuyangCLP15, DBLP:journals/jscic/LiL19}
as follows, where $\eta > 0$ is typically chosen
to be $\eta = \frac{1}{L}$ according to the smoothness parameter~$L$
as in the standard gradient descent method.
The advantage is that the algorithm only needs one access to
the gradient oracle $\nabla f(\cdot)$ at the point $\widehat{x}$.

\begin{subequations}
\begin{align}
\mcal{L}^{f}_{\widehat{x}}(x,y,\lambda) & :=  f(\widehat{x}) + \langle \nabla f(\widehat{x}), x - \widehat{x} \rangle
 + \mcal{H}(x,y,\lambda) + \frac{1}{2 \eta} \|x - \widehat{x}\|^2  \\
\mcal{H}(x,y,\lambda) & :=  g(y) - \langle \lambda, A x + B y - c \rangle
+ \frac{\beta}{2} \|Ax + By - c\|^2
\end{align}
\end{subequations}

\noindent \textbf{One ADMM Iteration.}  As mentioned above,
our description of each ADMM iteration starts
with updating the variable~$y$
in each ADMM iteration.  In Algorithm~\ref{alg:one_iteration},
the input to each iteration $t+1 \in [T]$ is $(x_{t}, \lambda_{t})$
from the previous iteration,
where $(x_0, \lambda_0)$ is initialized arbitrarily.
Each iteration uses some function $f_{t+1}$ (which is the
same as $\f$ in the basic version).
Observe that only oracle access to the gradient $\nabla f_{t+1} (\cdot)$ is sufficient,
but we assume that the algorithm has total knowledge of $g$.
Moreover, given $(x_{t+1}, \lambda_{t+1})$, we can recover
$y_{t+1} \gets \mcal{G}({\lambda}_{t+1} - \beta A x_{t+1} )$ deterministically as in Lemma~\ref{lemma:step2}.
Hence, we only need to pass variables in the  $(x, \lambda)$-space between
consecutive iterations and treat $y$ as an intermediate variable within each iteration.

\begin{algorithm}[H]
\caption{One ADMM Iteration}
\label{alg:one_iteration}

\KwIn{Previous $(x_{t}, \lambda_{t}) \in \R^n \times \R^m$
and function $f_{t+1} : \R^n \rightarrow \R$.}

\KwOut{$(x_{t+1}, 
\lambda_{t+1} 
)$}

	$y_{t} \gets \mcal{G}({\lambda}_{t} - \beta A x_{t})$
	\hfill	\Comment{pick canonical minimizer (see Lemma~\ref{lemma:step2})}

	$\lambda_{t+1} \gets {\lambda}_{t} - \beta (A x_{t} + B y_{t} - c)$

	$x_{t+1} \gets \mcal{F}^{f_{t+1}}(x_{t}, y_{t},{\lambda}_{t+1})$ \label{ln:x}
	\hfill	\Comment{oracle access to $\nabla f_t(\cdot)$ (see Lemma~\ref{lemma:step1})}


\KwRet $(x_{t+1}, \lambda_{t+1})$
\hfill	\Comment{$y_{t+1} \gets \mcal{G}({\lambda}_{t+1} - \beta A x_{t+1} )$ can be recovered from $(x_{t+1}, \lambda_{t+1})$}

\end{algorithm}

\begin{lemma}[Local Optimization for $g$]
\label{lemma:step2}
There exists $\mcal{G}: \R^m \rightarrow \R^\ell$ such that
for all $x$ and $\lambda$,
$\mcal{G}(\lambda - \beta A x) = \arg \min_{y} \mcal{H}(x,y,\lambda)$.\footnote{
Note that the minimizer might not be unique.  In practice,
some deterministic method can pick a canonical value,
or alternatively, we can invoke the Axiom of Choice such that $\mcal{G}$ returns
only one value in~$\R^\ell$.}

Moreover, for any $x \in \R^n$, $\lambda \in \R^m$,
 $y_1 = \mcal{G}(\lambda - \beta A x)$
and $y \in \R^\ell$, we have

$g(y_1) - g(y) \leq \langle \lambda - \beta (Ax + B y_1 - c), B(y_1 - y) \rangle$.
\end{lemma}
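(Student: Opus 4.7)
The plan is to derive both parts from the structure of $\mcal{H}(x,y,\lambda)$ viewed as a function of $y$ alone. First I would expand the squared term as $\|Ax + By - c\|^2 = \|Ax-c\|^2 + 2\langle Ax - c, By\rangle + \|By\|^2$ and collect all terms that do not depend on $y$ into a constant $C(x,\lambda)$. What remains is
$g(y) - \langle \lambda - \beta(Ax-c), By\rangle + \tfrac{\beta}{2}\|By\|^2$,
so the parameters $(x,\lambda)$ enter the $y$-minimization only through the single vector $w := \lambda - \beta A x$ (the fixed constants $c$ and $\beta$ contribute a translation that is absorbed into the objective structure). This immediately justifies the existence of $\mcal{G}: \R^m \to \R^\ell$ with $\mcal{G}(\lambda - \beta A x) \in \arg\min_y \mcal{H}(x,y,\lambda)$: the argmin set is non-empty because the objective is convex (sum of convex $g$, a linear term, and a convex quadratic), and a canonical selection rule (or the Axiom of Choice, as the footnote allows) pins down a single value.

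For the inequality, I would invoke the first-order optimality condition for the convex $y$-objective. Since the non-$g$ part of $\mcal{H}(x,\cdot,\lambda)$ is smooth, subdifferential calculus gives that $y_1$ is a minimizer iff
$0 \in \partial g(y_1) - B^\top \lambda + \beta B^\top(Ax + B y_1 - c)$,
i.e., the vector $\xi := B^\top\bigl(\lambda - \beta(Ax + B y_1 - c)\bigr)$ belongs to $\partial g(y_1)$. The subgradient inequality then yields $g(y) \geq g(y_1) + \langle \xi, y - y_1\rangle$ for every $y\in\R^\ell$, which rearranges to
$g(y_1) - g(y) \leq \langle \xi, y_1 - y\rangle = \langle \lambda - \beta(Ax + B y_1 - c),\, B(y_1 - y)\rangle$
after pushing $B^\top$ to the other side of the inner product. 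That is exactly the stated bound.

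The only subtle point is that the lemma does not assume $g$ to be differentiable, so the argument must run through subdifferentials rather than ordinary gradients; in the differentiable case, $\xi$ is simply $\nabla g(y_1)$ and the same calculation collapses. Beyond that, the proof is routine bookkeeping about how $x$ and $\lambda$ appear in $\mcal{H}$ and a standard convex optimality step, so I do not expect any substantive obstacle.
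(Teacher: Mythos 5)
Your proposal is correct and follows essentially the same route as the paper: isolate the $y$-dependent part of $\mcal{H}$ to see that the minimization depends on $(x,\lambda)$ only through $\lambda-\beta Ax$, then use the subdifferential optimality condition $B^\top\bigl(\lambda-\beta(Ax+By_1-c)\bigr)\in\partial g(y_1)$ and the subgradient inequality to get the stated bound. One small caveat: convexity alone does not guarantee the argmin is attained (your parenthetical reason is not quite a proof of nonemptiness), but the paper leaves this point implicit as well, so your argument matches its treatment.
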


\begin{proof}
Observe that we can express
$\mcal{H}(x,y,\lambda) = \varphi(y) - \langle \lambda - \beta A x, y \rangle
+ \vartheta(x, \lambda)$
for some functions $\varphi(y)$ and $\vartheta(x, \lambda)$.
Observe that the variables $(x, \lambda)$ and $y$ only interact
in the middle inner product term.
Therefore, fixing $(x, \lambda)$,
$\arg \min_y \mcal{H}(x,y,\lambda)$ is a function of $\lambda - \beta A x$.

The optimality of $y_1$
implies that $0 \in \partial_y \mcal{H}(x, y_1, \lambda) = \partial g(y_1) - B^{\top} \lambda + \beta B^{\top} (A x + B y_1 - c)$.

Hence, we have $s_1 := B^{\top} (\lambda - \beta (Ax + By_1 - c)) \in \partial g(y_1)$.
The convexity of $g$ implies that for all $y \in \R^\ell$,
$g(y_1) - g(y) \leq \langle s_1, y_1 - y \rangle
= \langle \lambda - \beta (Ax + B y_1 - c), B(y_1 - y) \rangle$.

\end{proof}

\begin{lemma}[Local Optimization for $f$]
\label{lemma:step1}
Given differentiable and convex $f: \R^n \rightarrow \R$,
define $\mcal{F}^{f}: \R^n \times \R^\ell \times
 \R^m \rightarrow \R^n$
by
$\mcal{F}^{f}(\widehat{x}, y, \lambda)
:= \arg \min_{x} \mcal{L}^{f}_{\widehat{x}}(x,y,\lambda)$.
Then, it follows that the minimum is uniquely attained by

$$\mcal{F}^{f}(\widehat{x}, y, \lambda) = (\I + \eta \beta A^{\top} A)^{-1}
\{\widehat{x} - \eta \cdot [\nabla f(\widehat{x}) + A^{\top} (\beta(B y - c) - \lambda )] \}.$$

Moreover, if $x_1 = \mcal{F}^{f}(\widehat{x}, y, \lambda)$,
then we have:

$$\nabla f(\widehat{x}) = A^{\top} (\lambda - \beta(A x_1 + By - c)) + \frac{1}{\eta} \cdot (\widehat{x} - x_1).$$

\end{lemma}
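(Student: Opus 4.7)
The proof is essentially a first-order optimality calculation together with a linear-algebra inversion, so my plan is as follows.

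First, I would observe that as a function of $x$, the objective $\mcal{L}^{f}_{\widehat{x}}(x,y,\lambda)$ is a sum of a linear term $\langle \nabla f(\widehat{x}), x\rangle$, the $x$-dependent part of $\mcal{H}$, namely $-\langle \lambda, Ax\rangle + \frac{\beta}{2}\|Ax + By - c\|^2$, and the proximal quadratic $\frac{1}{2\eta}\|x-\widehat{x}\|^2$. The Hessian in $x$ equals $\frac{1}{\eta}\I + \beta A^{\top}A$, which is positive definite because $\eta,\beta>0$ and $A^\top A$ is positive semidefinite. Hence $\mcal{L}^{f}_{\widehat{x}}(\cdot, y, \lambda)$ is strongly convex in $x$, so $\arg\min_x \mcal{L}^{f}_{\widehat{x}}(x,y,\lambda)$ exists and is unique, and it is characterized by vanishing of the gradient $\nabla_x \mcal{L}^{f}_{\widehat{x}}$.

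Second, I would compute the gradient directly:
\begin{equation*}
\nabla_x \mcal{L}^{f}_{\widehat{x}}(x,y,\lambda) \;=\; \nabla f(\widehat{x}) - A^{\top}\lambda + \beta A^{\top}(Ax + By - c) + \tfrac{1}{\eta}(x - \widehat{x}).
\end{equation*}
Setting this expression equal to $0$ at $x = x_1$ and isolating $\nabla f(\widehat{x})$ immediately yields
\begin{equation*}
\nabla f(\widehat{x}) \;=\; A^{\top}\bigl(\lambda - \beta(Ax_1 + By - c)\bigr) + \tfrac{1}{\eta}(\widehat{x} - x_1),
\end{equation*}
which is the second claim of the lemma.

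Third, to derive the closed-form expression for $x_1$, I would collect all $x_1$-terms on one side of the optimality condition. This gives
\begin{equation*}
\bigl(\tfrac{1}{\eta}\I + \beta A^{\top}A\bigr) x_1 \;=\; \tfrac{1}{\eta}\widehat{x} - \nabla f(\widehat{x}) + A^{\top}\lambda - \beta A^{\top}(By - c).
\end{equation*}
Multiplying both sides by $\eta$ and factoring, the right-hand side becomes $\widehat{x} - \eta\bigl[\nabla f(\widehat{x}) + A^{\top}(\beta(By-c) - \lambda)\bigr]$, while the left-hand side becomes $(\I + \eta\beta A^{\top}A) x_1$. Since $\I + \eta\beta A^{\top}A$ is positive definite and hence invertible, we may solve for $x_1$ to obtain the stated formula. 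No step is a serious obstacle; the only thing to keep track of is bookkeeping of signs when transitioning between the two equivalent forms.
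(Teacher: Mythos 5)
Your proposal is correct and follows essentially the same route as the paper's proof: compute $\nabla_x \mcal{L}^{f}_{\widehat{x}}(x,y,\lambda)$, set it to zero, and invert $\I + \eta\beta A^{\top}A$ (whose eigenvalues are at least 1, so it is invertible), with the second claim obtained by rearranging the same optimality condition. Your additional remarks on strong convexity and uniqueness are a slightly more explicit justification of the ``uniquely attained'' claim, but the substance is identical.
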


\begin{proof}
One can check that

$$\nabla_x \mcal{L}^f_{\widehat{x}}(x,y,\lambda) =
\nabla f(\widehat{x}) - A^{\top} \lambda
+ \beta A^{\top} (A x + B y - c) + \frac{1}{\eta} \cdot (x - \widehat{x}).$$

Setting $\nabla_x \mcal{L}^f_{\widehat{x}}(x_1,y,\lambda) = 0$
and observing that $(\I + \eta \beta A^\top A)$ has only eigenvalues at least 1 give the result.
\end{proof}


\ignore{

The subject of this work is to consider private variants
of ADMM in Algorithm~\ref{alg:ADMM}.

\begin{algorithm}[H]
\caption{Original ADMM}
\label{alg:ADMM}

\KwIn{Parameters $\beta$ and $\eta$ to define $\mcal{L}$ and $\mcal{H}$, and number $T$ of iterations.}

Initialize arbitrary $(x_0, y_0, \lambda_0) \in \R^n \times \R^\ell \times \R^m$.

\For{$t=1, \ldots, T$}{

	$\lambda_{t} \gets {\lambda}_{t-1} - \beta (A x_{t-1} + B y_{t-1} - c)$
    \label{ln:step3}

	$x_{t} \gets \mcal{F}^f(x_{t-1}, y_{t-1},{\lambda}_{t})$
	\label{ln:step1}
	
	$y_{t} \gets \mcal{G}({\lambda}_{t} - \beta A x_{t} )$
	\hfill	\Comment{pick canonical minimizer by Lemma~\ref{}}
	
	\label{ln:step2}


	
	
}

\KwRet $(x_T, y_T, \lambda_T)$

\end{algorithm}

\begin{remark}
Note that Algorithm~\ref{alg:ADMM} is the basic ADMM algorithm,
in which every iteration uses the same function~$f$.
Later, we will consider privacy settings, in which
each iteration~$t$ may use a different function~$f_{t}$.
\end{remark}

}

\subsection{Private Stochastic ADMM}
\label{sec:stochastic_ADMM}

\noindent \textbf{Private vs Public Information.}
As aforementioned, the algorithm does not always have direct access
to the function $\f = \E_{f \gets \mcal{D}}[f]$.  Instead, we consider
the scenario with $T$ users, where each user~$t \in [T]$ samples
some function $f_t$ from $\mcal{D}$ independently.  Each user~$t$
considers its function~$f_t$ as private information
and as seen in Algorithm~\ref{alg:one_iteration},
when a user participates in one each iteration of ADMM,
only oracle access to $\nabla f_t(\cdot)$ is sufficient,
but the resulting information may leak private information about $f_t$.
On the other hand, all other objects such as $g$, $A$, $B$, $c$
and the initialization $(x_0, \lambda_0)$ are considered public information.

\noindent \textbf{Randomness in Privacy Model.}  In the literature,
the privacy for the functions $(f_t : t \in [T])$
is studied from the \emph{sampling} perspective~\cite{DBLP:conf/crypto/ChaudhuriM06, DBLP:conf/focs/KasiviswanathanLNRS08, DBLP:conf/ccs/LiQS12, DBLP:conf/innovations/BeimelNS13, DBLP:conf/focs/BunNSV15, DBLP:conf/nips/BalleBG18, DBLP:conf/aistats/WangBK19},
as there is randomness in sampling each $f_t$ from $\mcal{D}$
(which is also needed for analyzing the quality of the
final solution $(x_T, y_T)$ with respect
to the original objective function $\f(\cdot) + g(\cdot)$).
Moreover, since $f_t$ is sampled i.i.d. from $\mcal{D}$,
one can also potentially consider a random permutation
of the users' private information under the \emph{shuffling} perspective~\cite{DBLP:conf/soda/ErlingssonFMRTT19, DBLP:conf/eurocrypt/CheuSUZZ19, DBLP:conf/crypto/BalleBGN19}
and exploit this randomness for privacy amplification.
However, in this work, our privacy model does not consider
the randomness involved in sampling the functions or shuffling the users.
Instead,
we assume that the (private) sequence of functions $(f_t: t \in [T])$ is fixed,
which also determines that in iteration $t \in [T]$ of ADMM,
the function $f_t$ from user~$t$ will be accessed (via the gradient oracle).

\noindent \textbf{Neighboring Notion.}  Under the differential privacy
framework, one needs to specify the \emph{neighboring} notion.

\begin{definition}[Neighboring Functions]
\label{defn:neighbor_functions}
For $\Delta \geq 0$, define a (symmetric) neighboring relation~$\sim_\Delta$
on functions in $\mcal{D}$ such that
two functions $f \sim_\Delta f'$ are neighboring
if for all $x \in \R^n$,

$\|\nabla f (x) - \nabla f'(x)\| \leq \Delta$.
\end{definition}


\noindent \textbf{Noisy ADMM for Local Privacy.} \emph{Where should noised be added?}
In iteration~$t+1 \in [T]$
for Algorithm~\ref{alg:one_iteration},
the private information $f_{t+1}$ of user~$t+1$
is accessed only in the computation of $x_{t+1}$ in
line~\ref{ln:x} via the gradient oracle $\nabla f_{t+1}(\cdot)$.

\noindent \emph{Informal definition of local privacy.}
Suppose some $(x_{t}, \lambda_{t})$ is returned at
the end of the iteration~$t$.  Consider two neighboring
functions $f_{t+1} \sim_{\Delta} f'_{t+1}$, which are used in two scenarios
of executing iteration~$t+1$ with the same input
$(x_{t}, \lambda_{t})$.  Local privacy for
user~$t+1$ means that as long as the two functions $f_{t+1}$ and $f'_{t+1}$ are neighboring,
the corresponding (random) $\widetilde{x}_{t+1}$ and $\widetilde{x}'_{t+1}$
from the two scenarios will have ``close'' distributions.

In Section~\ref{sec:privacy_prelim},
the closeness between two distributions will be formally quantified
using \emph{divergence}.
A standard way to achieve local privacy (with
respect to the neighboring notion defined above)
for user~$t+1$ is to sample some noise $N_{t+1} \in \R^n$, e.g.,
Gaussian noise $\mcal{N}(0, \sigma^2 \I)$ with some appropriate variance~$\sigma^2$.
We shall see that the privacy guarantee
will be determined by $\sigma^2$ and the parameter~$\Delta$
defined for the neighboring relation~$\sim_\Delta$ on the functions.
There are two possible ways to apply this noise.

\begin{compactitem}

\item The noise $N_{t+1}$ is added immediately to the result
of the gradient oracle.  Hence, when the query point is
$\widehat{x}$, the gradient oracle returns $\nabla f_{t+1}(\widehat{x}) + N_{t+1}$.

This is the most standard way noise is added to achieve local privacy in the literature~\cite{DBLP:journals/jmlr/ChaudhuriMS11, DBLP:journals/tifs/ShangXLLSG21}.
However, according to Lemma~\ref{lemma:step1},
the noise $N_{t+1}$ will be subjected to the transformation
$(\I + \eta \beta A^{\top} A)^{-1}$, which makes the calculation
slightly more complicated, because even though the transformation
has eigenvalues in the range $(0,1]$,
the ratio of the largest to the smallest eigenvalues may be large.

\item For simpler analysis,
we shall first compute $x_{t+1}$ without any noise.
Then, the generated noise $N_{t+1}$ is used to
return the masked value
$\widetilde{x}_{t+1} \gets x_{t+1} + N_{t+1}$.
In this paper, we will refer to this
as the \emph{noisy variant} of Algorithm~\ref{alg:one_iteration},
or \emph{noisy ADMM}.

The drawback is that in the analysis we might use noise with
a slightly larger variance than needed to achieve local privacy.
Observe that if we choose the parameters $\beta$ and $\eta$ such that
$\eta \beta \|A^\top A\| = O(1)$,
then the privacy guarantees for both approaches
are the same up to a constant multiplicative factor.
\end{compactitem}

Observe that as far as local privacy is concerned,
there is no need to mask $y_{t}$ or $\lambda_{t+1}$,
whose computation does not involve the private function $f_{t+1}$.

\noindent \textbf{Privacy Amplification by Iteration.}
Observe that in each iteration~$t \in [T]$,
some noise~$N_t$ is sampled to mask the value~$x_t$
to achieve local privacy for user~$t$.
Privacy amplification by iteration refers to the privacy analysis
from the perspective of the \textbf{first user} (i.e. $t = 1)$.  Since there is so much
randomness generated in all $T$ iterations,
will the privacy guarantee for the finally returned $(\widetilde{x}_T, y_T, \lambda_T)$
be amplified with respect to the first user?
The challenge here is that we are analyzing the ``by-product'' (for the benefit
of the first user) of an iterative process that is locally private for the user in each iteration.  In particular, observe that in each iteration~$t$,
noise is added only to the computation of the $x$ variable,
but not to the $y$ and $\lambda$ variables.

Finally, to achieve privacy amplification for ADMM,
we need to transform the problem instance into an appropriate form,
whose significance will be apparent in Section~\ref{sec:one_step_privacy}.

\begin{remark}[Transformation of the Linear Constraints]
\label{remark:transform}
By Gaussian elimination,
we may assume without loss of generality that $m \leq n$
and the matrix $A$ has the form $A = \left [\I_m \, | \, D \right ]$ for some $m \times (n-m)$ matrix $D$.  (The reason we need this assumption is that if $N$ is sampled from
the multivariate Gaussian distribution $\mcal{N}(0, \sigma^2 \I_n))$,
then the linear transformation $A N$ still contains a
fresh copy of $\mcal{N}(0, \sigma^2 \I_m)$.)

However,
after the process of Gaussian elimination, we may have
extra linear constraints of the form $\widehat{B} y = \widehat{c}$,
which can be absorbed into a modified convex function $\widehat{g}$ in the following:

$$ \widehat{g}(y) := \begin{cases}
g(y), & \text{if } \widehat{B} y = \widehat{c}; \\
+ \infty, & \text{otherwise.}
\end{cases} $$

Observe this transformation does not change $n$ and $\ell$.  However,
if initially $m$ is strictly greater than $n$,
then the transformation ensures that $m \leq n$ afterwards.
\end{remark}

\section{\renyi and Zero-Concentrated Differential Privacy Background}
\label{sec:privacy_prelim}

Just like differential privacy~\cite{DBLP:conf/icalp/Dwork06}, \renyi~\cite{Miro17} 
and zero-concentrated~\cite{BunS16} differential privacy
have a similar premise.  Suppose $\mcal{I}$ is the collection 
of private inputs.
Given some (randomized) mechanism~$M$ and input $I \in \mcal{I}$,
we use $\mcal{A}^M(I)$ to denote the random object from 
some set~$\mcal{O}$ of views that can
be observed by the adversary~$\mcal{A}$ when the mechanism~$M$ is run on
input~$I$.
The input set~$\mcal{I}$
is equipped with some 
symmetric binary relation~$\sim$ known as \emph{neighboring}
such that the adversary~$\mcal{A}$ cannot distinguish with certainty between neighboring inputs when mechanism~$M$ is run on them.
Intuitively, for any
neighboring inputs $I \sim I'$,
the distributions of the random objects $\mcal{A}^M(I)$
and $\mcal{A}^M(I')$ are ``close''.  
The closeness of two distributions $P$ and $Q$ is quantified
formally by some notion of \emph{divergence}~$\D(P\|Q)$.
\ignore{
Even though a divergence~$\D$ may not be symmetric in general,
we can consider the symmetric variant $\max\{ \D(P\|Q), \D(Q\|P)\}$,
which is already implicitly considered because the neighboring 
relation~$\sim$ is symmetric.
}

\ignore{
\noindent \textbf{Standard DP in terms of divergence.}
For the standard differential privacy~\cite{DBLP:conf/icalp/Dwork06},
consider the $\delta$-max divergence:
 
$$\D^{\delta}_{\max} (P || Q) := \ln \sup_{S \subseteq \mcal{O}} \frac{P(S) - \delta}{Q(S)}.$$

\noindent A mechanism $M$ is $(\epsilon, \delta)$-DP
if for all neighboring inputs
$I \sim I'$, 
$\D^{\delta}_{\max} (\mcal{A}^M(I) || \mcal{A}^M(I')) \leq \epsilon$.

Another example is
the \renyi divergence.
}

\begin{definition}[\renyi Divergence~\cite{renyi1961measures}]
\label{defn:renyi}
Given distributions $P$ and $Q$ over some sample space~$\mcal{O}$,
the \renyi divergence of order $\alpha > 1$ between them is:
\begin{equation}
\D_\alpha(P \| Q) := \frac{1}{\alpha - 1} \ln \E_{x \leftarrow Q} \left(\frac{P(x)}{Q(x)} \right)^\alpha.
\end{equation}
\end{definition}

\ignore{
\begin{definition}[\renyi Differential Privacy~\cite{Miro17}]
For $\epsilon \geq 0$ and $\alpha > 1$,
a mechanism~$M$ is $\epsilon$-$\RDP_\alpha$
($\epsilon$-\renyi differentially private\footnote{We prefer this notation
over the original $(\alpha, \epsilon)$-$\RDP$ in~\cite{Miro17}.} of order~$\alpha$) against
adversary~$\mcal{A}$ if for all neighboring inputs $I \sim I'$,
we have
$\D_\alpha(\mcal{A}^M(I) \| \mcal{A}^M(I') ) \leq \epsilon.$
\end{definition}

\begin{remark}
The case $\alpha = \infty$ reduces to the usual differential privacy.
\end{remark}
}

\noindent \textbf{Gaussian Distribution.}
For the finite dimensional Euclidean space $\R^n$,
we use $\mcal{N}(x, \sigma^2 \I_n)$
to denote the standard Gaussian distribution with mean~$x \in \R^n$
and variance~$\sigma^2$.
The following fact shows that
\renyi divergence is suitable for analyzing Gaussian noise.

\begin{fact}[\renyi Divergence between Gaussian Noises~\cite{BunS16}]
\label{fact:rdp_gaussian}
For $\alpha > 1$ and vectors $x, x' \in \R^n$,

$\D_\alpha(\mcal{N}(x, \sigma^2 \I) \| \mcal{N}(x', \sigma^2 \I) ) = \frac{\alpha \|x - x'\|^2}{2 \sigma^2}.$
\end{fact}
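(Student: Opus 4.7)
The plan is to compute the R\'enyi divergence directly from the definition, which for continuous distributions can be written as
$$\D_\alpha(P \| Q) = \frac{1}{\alpha - 1} \ln \int p(z)^{\alpha} q(z)^{1-\alpha} \, dz,$$
where $p$ and $q$ are the densities of $P$ and $Q$. So the task reduces to evaluating a single Gaussian integral, and the crux is a clean completion of the square in the exponent.

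First I would plug in the Gaussian densities $p(z) = (2\pi\sigma^2)^{-n/2}\exp(-\|z-x\|^2/(2\sigma^2))$ and $q(z) = (2\pi\sigma^2)^{-n/2}\exp(-\|z-x'\|^2/(2\sigma^2))$. Since $\alpha + (1-\alpha) = 1$, the normalization constants collapse to a single factor of $(2\pi\sigma^2)^{-n/2}$, and the exponent in the integrand becomes
$$-\frac{1}{2\sigma^2}\Bigl(\alpha \|z-x\|^2 + (1-\alpha)\|z-x'\|^2\Bigr).$$
Next I would simplify the quadratic in $z$. Expanding and collecting, the coefficient of $\|z\|^2$ is $-(\alpha + (1-\alpha))/(2\sigma^2) = -1/(2\sigma^2)$, which is the same as in a plain Gaussian. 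Completing the square about $y := \alpha x + (1-\alpha) x'$, the quadratic becomes
$$-\frac{1}{2\sigma^2}\|z - y\|^2 \;+\; \frac{1}{2\sigma^2}\Bigl(\|y\|^2 - \alpha\|x\|^2 - (1-\alpha)\|x'\|^2\Bigr).$$
A short algebraic manipulation shows that the residual constant equals $-\frac{\alpha(\alpha-1)}{2\sigma^2}\|x-x'\|^2$, using the identity $\|\alpha x + (1-\alpha)x'\|^2 - \alpha\|x\|^2 - (1-\alpha)\|x'\|^2 = -\alpha(1-\alpha)\|x-x'\|^2$.

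Then I would integrate. The remaining $\exp(-\|z-y\|^2/(2\sigma^2))$ integrates against the normalization $(2\pi\sigma^2)^{-n/2}$ to~$1$ by a change of variables, leaving
$$\int p(z)^{\alpha} q(z)^{1-\alpha} \, dz \;=\; \exp\!\Bigl(\tfrac{\alpha(\alpha-1)}{2\sigma^2}\|x-x'\|^2\Bigr).$$
Taking the logarithm and dividing by $\alpha - 1$ yields the stated value $\frac{\alpha\|x-x'\|^2}{2\sigma^2}$. There is no real obstacle here beyond the algebraic bookkeeping when completing the square; in particular, since the covariance is an isotropic scalar multiple of $\I$, the computation reduces to the one-dimensional case applied coordinatewise along the direction $x - x'$, which is another way to cross-check the answer.
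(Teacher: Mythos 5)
Your derivation is correct, but there is nothing in the paper to compare it against: the paper states this as a known fact with a citation to Bun and Steinke and gives no proof, relying on their closed-form computation for Gaussians with common isotropic covariance. Your direct route---writing $\D_\alpha(P\|Q)=\frac{1}{\alpha-1}\ln\int p^\alpha q^{1-\alpha}$, noting that the exponents sum to one so the quadratic term in $z$ keeps coefficient $-\frac{1}{2\sigma^2}$, completing the square about $y=\alpha x+(1-\alpha)x'$, and integrating out the Gaussian---is exactly the standard argument behind the cited result, and it is consistent with the paper's Definition~\ref{defn:renyi} since $\E_{z\gets Q}(P(z)/Q(z))^\alpha=\int p^\alpha q^{1-\alpha}$. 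One small slip: the residual constant after completing the square is $\frac{1}{2\sigma^2}\bigl(\|y\|^2-\alpha\|x\|^2-(1-\alpha)\|x'\|^2\bigr)=-\frac{\alpha(1-\alpha)}{2\sigma^2}\|x-x'\|^2=+\frac{\alpha(\alpha-1)}{2\sigma^2}\|x-x'\|^2$, i.e.\ positive for $\alpha>1$, whereas you state it as $-\frac{\alpha(\alpha-1)}{2\sigma^2}\|x-x'\|^2$; since the identity you invoke and the final value of the integral $\exp\bigl(\frac{\alpha(\alpha-1)}{2\sigma^2}\|x-x'\|^2\bigr)$ are both correct, this is a sign typo in the intermediate sentence rather than a gap, and the conclusion $\frac{\alpha\|x-x'\|^2}{2\sigma^2}$ follows as you say. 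Your coordinatewise cross-check along the direction $x-x'$ is also valid because the covariance is isotropic and the divergence is invariant under rotations and tensorizes over independent coordinates.
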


Observe that in Fact~\ref{fact:rdp_gaussian}, it is possible to
divide both sides of equation to define a notion of divergence
without the extra $\alpha$ parameter.

\noindent \textbf{Zero-Concentrated Divergence.}  
To define zero-concentrated differential privacy (zCDP),
we consider the following divergence:

$$\Dz(P||Q) := \sup_{\alpha > 1} \frac{1}{\alpha} \cdot \D_\alpha(P \| Q).$$

From Fact~\ref{fact:rdp_gaussian},
we can get an equation without explicitly including the
$\alpha$ parameter.  The following fact allows
the analysis $\Dz$ divergence without explicitly considering
the probability density functions of Gaussian noises.

\begin{fact}[$\Dz$-Divergence between Gaussian Noises~\cite{BunS16}]
\label{fact:zcdp_gaussian}
For any vectors $x, x' \in \R^n$,

$\Dz(\mcal{N}(x, \sigma^2 \I_n) \| \mcal{N}(x', \sigma^2 \I_n) ) = \frac{\|x - x'\|^2}{2 \sigma^2}.$
\end{fact}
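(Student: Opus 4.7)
The plan is essentially immediate from combining the definition of $\Dz$ with the already-stated R\'enyi calculation for Gaussians. Since
\[
\Dz(P\|Q) := \sup_{\alpha > 1} \frac{1}{\alpha} \cdot \D_\alpha(P \| Q),
\]
I would simply substitute the closed-form expression from Fact~\ref{fact:rdp_gaussian} into this supremum.

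Concretely, first I would apply Fact~\ref{fact:rdp_gaussian} to get
\[
\frac{1}{\alpha} \cdot \D_\alpha\!\bigl(\mcal{N}(x, \sigma^2 \I_n) \,\|\, \mcal{N}(x', \sigma^2 \I_n)\bigr) \;=\; \frac{1}{\alpha}\cdot \frac{\alpha \|x - x'\|^2}{2 \sigma^2} \;=\; \frac{\|x-x'\|^2}{2\sigma^2}
\]
for every $\alpha > 1$. Second, since this quantity does not depend on $\alpha$, the supremum over $\alpha > 1$ equals this constant value, yielding the claimed identity.

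There is no real obstacle; the only subtlety worth noting is that one should point out explicitly that the $\alpha$-dependence in the R\'enyi divergence exactly cancels with the $\frac{1}{\alpha}$ normalization in the definition of $\Dz$, which is the very reason this normalization is the natural one for analyzing Gaussian mechanisms. Consequently the supremum in the definition of $\Dz$ is trivially attained (in fact achieved by every $\alpha > 1$), and no additional analytic argument about the behavior of $\D_\alpha$ as $\alpha \to 1^+$ or $\alpha \to \infty$ is required.
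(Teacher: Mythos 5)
Your proposal is correct and matches the paper's own justification: the paper likewise obtains Fact~\ref{fact:zcdp_gaussian} by dividing the Gaussian \renyi divergence formula of Fact~\ref{fact:rdp_gaussian} by $\alpha$, noting the $\alpha$-dependence cancels so the supremum in the definition of $\Dz$ is the constant $\frac{\|x-x'\|^2}{2\sigma^2}$.
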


Therefore, we focus on zero-concentrated differential privacy in this paper.

\begin{definition}[Zero-Concentrated Differential Privacy~\cite{BunS16}]
\label{defn:zcdp}
For $\epsilon \geq 0$,
a mechanism~$M$ is $\epsilon$-$\zCDP$
against adversary~$\mcal{A}$ if for all neighboring inputs $I \sim I'$,
we have
$\Dz(\mcal{A}^M(I) \| \mcal{A}^M(I') ) \leq \epsilon.$ 
\end{definition}

\noindent \textbf{Remark.}  Instead of using the
terminology ``$\epsilon$-$\zCDP$'' (which implicitly
requires some notion of neighboring inputs),
sometimes it is more convenient to directly use the inequality
in Definition~\ref{defn:zcdp}
and  express $\epsilon$ as a function
of some ``distance notion'' between the inputs $I$ and $I'$.

The following fact states the properties of $\Dz$-divergence
that we need.

\begin{fact}[Properties for $\Dz$-Divergence~{\cite[Lemma 2.2]{BunS16}}\footnote{The
original results~\cite[Lemma 2.2]{BunS16}
have been stated in terms of the \renyi divergence $\D_\alpha$,
but they have also shown that they can be readily generalized
to $\Dz$.}]
\label{fact:zCDP}
Suppose $(X,Y)$ and $(X',Y')$ are joint distributions
with the same support.  Then, we have the following conclusions.

\begin{compactitem}
\item[(a)] \emph{Data processing inequality.} 
It holds that $\Dz(Y \| Y') \leq \Dz((X,Y) \| (X',Y'))$.

\item[(b)] \emph{Uniform to Average Bounds (a.k.a. Quasi-convexity).}
Suppose there exists $\epsilon_2 \geq 0$ such that
for any~$x$ in the support of $X$ and $X'$,  the conditional distributions
satisfy:  

$\Dz((Y|X=x) \| (Y'|X'=x)) \leq \epsilon_2$.

Then, the marginal distributions satisfy $\Dz(Y \| Y') \leq \epsilon_2$.

\item[(c)] \emph{Adaptive composition.}  Suppose,
in addition to the condition in~(b),
there exists $\epsilon_1 \geq 0$ such that
$\Dz(X \| X') \leq \epsilon_1$.

Then, it holds that
$\Dz((X,Y) \| (X', Y')) \leq \epsilon_1 + \epsilon_2$.

\end{compactitem}

\end{fact}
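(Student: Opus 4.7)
The plan is to lift each claim from the Rényi-divergence setting to $\Dz$ using the identity $\Dz(P\|Q) = \sup_{\alpha>1} \tfrac{1}{\alpha} \D_\alpha(P\|Q)$. Equivalently, $\Dz(P\|Q) \leq \epsilon$ is the same as the uniform bound $\D_\alpha(P\|Q) \leq \alpha \epsilon$ for every $\alpha > 1$, and this reformulation is what makes the three parts interlock cleanly.

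For part (a), I would invoke the data processing inequality for Rényi divergence. Since the marginal $Y$ is a deterministic projection of $(X,Y)$, a short Jensen-type argument applied to $t \mapsto t^\alpha$ gives $\D_\alpha(Y \| Y') \leq \D_\alpha((X,Y) \| (X',Y'))$ for every $\alpha > 1$; dividing by $\alpha$ and taking the supremum yields (a).

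For part (c), I would expand $\D_\alpha((X,Y) \| (X',Y'))$ using the factorization $P_{X,Y}(x,y) = P_X(x)\, P_{Y\mid X=x}(y)$ and the analogous factorization for $Q$. Writing the resulting $\alpha$-moment as an iterated expectation, the inner integration over $y$ evaluates to $\exp\bigl((\alpha-1)\D_\alpha(P_{Y\mid X=x}\|Q_{Y'\mid X'=x})\bigr)$. The hypothesis bounds this by $\exp((\alpha-1)\alpha\epsilon_2)$ for every $x$; pulling the bound out of the outer expectation leaves at most $\exp((\alpha-1)\D_\alpha(X\|X')) \leq \exp((\alpha-1)\alpha\epsilon_1)$. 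Taking logs, dividing by $\alpha-1$ and then by $\alpha$, and supping over $\alpha > 1$, yields (c). Part (b) then follows from (c) specialized to $\epsilon_1 = 0$ (which is the reading needed for a meaningful conclusion, since the $Y$-marginal divergence can otherwise blow up from mismatched mixing weights) combined with the data processing inequality from (a).

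The hard part is the uniform-in-$x$ pulling out in the proof of (c). Working with $\Dz$ rather than a single $\D_\alpha$ is essential: the same $\alpha$ must serve every conditional pair $(P_{Y\mid X=x}, Q_{Y'\mid X'=x})$, and the sup-over-$\alpha$ definition of $\Dz$ supplies exactly that uniformity, whereas a purely Rényi-based hypothesis at a single fixed $\alpha$ would not suffice because the divergences of all conditionals would need to be controlled at the same order simultaneously.
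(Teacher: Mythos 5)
Your proposal is correct and is essentially the argument the paper delegates to the citation \cite{BunS16}: the paper states this as a Fact without proof, noting only that the R\'enyi-divergence results lift to $\Dz$, and your sup-over-$\alpha$ lifting (so that $\Dz(P\|Q)\leq\epsilon$ means $\D_\alpha(P\|Q)\leq\alpha\epsilon$ uniformly in $\alpha$) together with the standard factorization and iterated-expectation computation for part~(c) is exactly that argument. Your reading of part~(b) --- that the conditioning variables must be identically distributed, i.e.\ $\Dz(X\|X')=0$, so that (b) is (c) with $\epsilon_1=0$ followed by the data processing inequality (a) --- is the right interpretation and matches how the paper actually invokes the fact (conditioning on the common randomness $\mathfrak{z}$, or on a shared coupling).
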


\section{Amplification by Iteration via Coupling Framework}
\label{sec:coupling}

We review the coupling framework in~\cite{BalleBGG19}
used to analyze privacy amplification in an iterative process
and outline the major technical challenges
that will be encountered when we apply it to analyze ADMM.

\noindent \textbf{Iteration interpreted as a Markov operator.}
Suppose the information passed between consecutive
iterations is an element in some sample space $\Omega$
(which is also equipped with some norm\footnote{We actually
just need the linearity property $\|a z \|^2 = a^2 \|z\|^2$
for $a \in \R$ and $z \in \Omega$.} $\| \cdot \|$).
Since each iteration uses fresh randomness,
it can be represented as a Markov operator $K: \Omega \rightarrow \mcal{P}(\Omega)$.
If $z \in \Omega$ is the input to an iteration,
then $K(z)$ represents the distribution of the output returned
by that iteration.  We use $\mcal{K}$ to denote
the collection of \emph{iteration} Markov operators,
each of which corresponds to some iteration.
For each $K \in \mcal{K}$ and $z \in \Omega$, we also view that
the process samples some fresh randomness $N$ (independent of $z$) from an appropriate
distribution, and the output is a deterministic
function of $(z, N)$.   (We will use this randomness $N$ when we consider
the notion of \emph{coupling} described below.)

\noindent \textbf{Technical Assumption.}
We assume that for any $K \in \mcal{K}$ and for any $z, z' \in \Omega$,
the distributions $K(z)$ and $K(z')$ have the same
support.  Observe this is certainly true
if we mask with Gaussian noise, which has the whole ambiance space as the support.
 We shall see that when
adapting the framework to ADMM, this turns out to be a crucial property.

\noindent \emph{Example.} For noisy gradient descent,
suppose some (private) function $f$ is used in an iteration.
Then, given input~$z$, some fresh Gaussian noise $N$ is sampled,
and the output $z - \eta \cdot \nabla f(z) + N$ is returned.

\noindent \textbf{Notation.}
Observe that we can naturally extend a Markov operator
to $K: \mcal{P}(\Omega) \rightarrow \mcal{P}(\Omega)$.
  If $\mu \in \mcal{P}(\Omega)$ is a distribution, then
$K(\mu) \in \Omega$ is the distribution corresponding to the following sampling process: (i) first, sample $z \in \Omega$ from $\mu$, (ii) second, return a sample
from the distribution $K(z)$.

Differing slightly from the notation in~\cite{BalleBGG19},
we denote the composition of Markov operators using
the function notation, i.e.,
$(K_2 \circ K_1)(\mu) = K_2(K_1(\mu))$.

\noindent \textbf{Coupling.}  Given two distributions
$\mu, \nu \in \mcal{P}(\Omega)$,
a coupling $\pi$ from $\mu$ to $\nu$
is a joint distribution on $\Omega \times \Omega$
such that marginal distributions for the
two components are $\mu$ and $\nu$, respectively.

When the same iterator operator $K \in \mcal{K}$
is applied in two different scenarios,
the \emph{natural coupling} refers to using
the same aforementioned randomness $N$ sampled within the iteration process
for both scenarios.

The following notion gives a uniform bound on the distance between
two distributions.

\begin{definition}[$\infty$-Wasserstein Distance]
\label{defn:wasserstein}
Given distributions $\mu$ and $\nu$ on
some normed space $\Omega$ and $\Delta \geq 0$,
a coupling $\pi$ from $\mu$ to $\nu$ is a witness 
that the (infinity) Wasserstein distance $\W(\mu, \nu) \leq \Delta$
if for all $(z,z') \in \supp(\pi)$, $\|z - z'\| \leq \Delta$.

The distance $\W(\mu, \nu)$ is the infimum of the
collection of $\Delta$ for which such a witness $\pi$ exists.
\end{definition}

\subsection{Technical Details}

The following theorem paraphrases the privacy
amplification result in~\cite[Theorem 4]{BalleBGG19}.

\begin{theorem}[Privacy Amplification by Iteration~\cite{BalleBGG19}]
\label{th:privacy_amp}
Suppose the collection $\mcal{K}$ of iteration 
operators satisfies the following conditions.

\begin{compactitem}

\item[(A)] \emph{Non-Expansion.} There is some $0<L\leq 1$ such that
for any $K \in \mcal{K}$ and $z, z' \in \Omega$, there
exists a witness (e.g., the natural coupling)
for
$\W(K(z), K(z')) \leq L \|z - z'\|$.

\item[(B)] \emph{One-step Privacy.}  There exists
a constant $C_{\mcal{K}} > 0$ (depending on $\mcal{K}$) such that
for any $K \in \mcal{K}$ and $z, z' \in \Omega$, it holds that:
$\Dz(K(z) \| K(z')) \leq C_{\mcal{K}} \cdot \|z - z'\|^2$.

\end{compactitem}

Then, given for any $T \geq 1$ iterator operators $K_1, K_2, \ldots, K_T \in \mcal{K}$ and $z, z' \in \Omega$, it holds that

$\Dz( (K_T \circ \cdots \circ K_1)(z) \| (K_T \circ \cdots \circ K_1)(z'))
\leq \frac{C_{\mcal{K}}L^{T-1}}{T} \cdot \|z - z' \|^2$.
\end{theorem}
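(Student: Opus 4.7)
The plan is to follow the shift-coupling framework originating in~\cite{DBLP:conf/focs/FeldmanMTT18} and refined in~\cite{BalleBGG19}. The key intuition is that, rather than closing the initial gap $\|z-z'\|$ in a single iteration (which via condition~(B) would only yield a bound of order $C_{\mcal{K}} L^{2(T-1)}\|z-z'\|^2$), we distribute the gap-closing across all $T$ iterations via a carefully chosen sequence of small shifts. Summing the per-step costs via adaptive composition and then optimizing by AM--GM produces the crucial $1/T$ amplification factor.

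First I would establish a shift lemma: for any $K\in\mcal{K}$ and distributions $\mu,\nu$ on $\Omega$ with $\W(\mu,\nu)\le s$, combining an optimal $\W$-witness coupling of $(\mu,\nu)$ with condition~(B) applied pointwise and the uniform-to-average bound of Fact~\ref{fact:zCDP}(b) gives $\Dz(K(\mu)\|K(\nu))\le C_{\mcal{K}}\,s^2$. Then I would construct two trajectories $U_t,V_t$ with $U_0=z,\;V_0=z'$ coupled via shared randomness at each iteration (the natural coupling). Just before applying $K_t$, I shift $V_{t-1}$ by a deterministic vector $\Delta_t$ of norm $\|\Delta_t\|\le a_t$ in the direction $U_{t-1}-V_{t-1}$. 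Condition~(A) then yields the deterministic recursion
\[
\|U_t-V_t\|\;\le\;L\cdot\max\{0,\,\|U_{t-1}-V_{t-1}\|-a_t\},
\]
which unrolls to $\|U_T-V_T\|\le\max\{0,\,L^T\|z-z'\|-\sum_{t=1}^T L^{T-t+1}a_t\}$. Choosing the $a_t$'s so that $\sum_{t=1}^T L^{T-t}a_t\ge L^{T-1}\|z-z'\|$ forces $V_T=U_T$ almost surely, hence $V_T$ has marginal distribution $\mu_T$.

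To quantify the price of the shifts, I compare the shifted $V$-trajectory to the natural one (starting at $z'$ with no shifts, whose endpoint is distributed as $\nu_T$). At step $t$, conditioned on the history, the two conditional laws of $V_t$ are $K_t(V_{t-1}+\Delta_t)$ and $K_t(V_{t-1})$, and condition~(B) bounds their $\Dz$-divergence by $C_{\mcal{K}}\|\Delta_t\|^2\le C_{\mcal{K}}\,a_t^2$. Adaptive composition (Fact~\ref{fact:zCDP}(c)) then bounds the joint divergence of the $V$-trajectory by $\sum_{t=1}^T C_{\mcal{K}} a_t^2$, and the data-processing inequality (Fact~\ref{fact:zCDP}(a)) transfers this to the $V_T$-marginal, giving $\Dz(\mu_T\|\nu_T)\le C_{\mcal{K}}\sum_t a_t^2$. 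With the equal-shift choice $a_t\equiv c=L^{T-1}\|z-z'\|/(1+L+\cdots+L^{T-1})$ (interpreting this as $c=\|z-z'\|/T$ for $L=1$), the AM--GM inequality $1+L+\cdots+L^{T-1}\ge T\,L^{(T-1)/2}$ delivers $Tc^2\le L^{T-1}\|z-z'\|^2/T$, matching the theorem's bound.

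The step I expect to be the main obstacle is making the adaptive-composition argument rigorous in the abstract Markov-operator framework, where noise need not enter additively. Specifically, one must argue that, conditional on the $V$-history, the shift $\Delta_t$ (which a priori depends on both trajectories) is a well-defined measurable function and that the shifted conditional law of $V_t$ is exactly the push-forward through $K_t$ of the input-translated distribution, so that condition~(B) directly yields the $C_{\mcal{K}}a_t^2$ per-step bound. This is transparent for additive Gaussian noise (the relevant case for ADMM, where the noise is invertible and $U_{t-1}$ is recoverable from the $V$-history given $z,z'$), but in general requires reformulating the construction via distributional shift operators in the style of~\cite{BalleBGG19}.
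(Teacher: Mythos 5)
Your proposal is correct and reaches the stated bound, but it takes a genuinely different route from the paper. The paper proves the theorem by induction on $T$: it interpolates only the \emph{starting points}, setting $\widehat{z}=(1-\lambda_T(L))\,z+\lambda_T(L)\,z'$ with $\lambda_T(L)=\frac{1-L}{1-L^T}$, and combines the induction hypothesis with one application of the Markov triangle inequality (Theorem~\ref{th:triangle}) per step; the argument thus stays entirely at the level of composed operators and distributions, so no trajectory-level conditioning ever appears. You instead run a one-shot shift coupling along the trajectory in the style of~\cite{DBLP:conf/focs/FeldmanMTT18}: equal shifts $a_t\equiv c=L^{T-1}\|z-z'\|/\sum_{j=0}^{T-1}L^j$, adaptive composition of the per-step costs $C_{\mcal{K}}a_t^2$, and AM--GM. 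Notably, your equal-shift budget reproduces exactly the paper's refined intermediate bound $C_{\mcal{K}}\,T L^{2(T-1)}\|z-z'\|^2/\bigl(\sum_{j=0}^{T-1}L^j\bigr)^2 = C_{\mcal{K}}\,T L^{T-1}\phi_T(L)^2\|z-z'\|^2$, so the two proofs are quantitatively identical; yours makes the ``split the budget over $T$ steps'' intuition explicit, while the paper's inductive route buys a cleaner treatment of the abstract Markov-operator setting. The obstacle you flag (that $\Delta_t$ depends on the coupled $U$-trajectory, which need not be recoverable from the $V$-history when the noise is not additive) is real but fixable without invertibility: release $\Delta_t$ as an auxiliary coordinate of the composed mechanism, equip the unshifted comparison process with a dummy copy drawn from the same history-conditional kernel (so this coordinate contributes zero divergence), apply Fact~\ref{fact:zCDP}(b)--(c) per step via condition~(B) since the fresh randomness is independent of the past, and discard the auxiliary coordinates at the end by Fact~\ref{fact:zCDP}(a). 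One further small point: to conclude that the shifted endpoint has law exactly $(K_T\circ\cdots\circ K_1)(z)$ you need $\|U_T-V_T\|=0$ to force $V_T=U_T$, i.e.\ definiteness of the norm, which holds for the customized norms actually used in the paper but is slightly stronger than the bare linearity property the paper assumes.
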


\begin{remark}
\label{remark:first_iteration}
Using Fact~\ref{fact:zCDP}(b),
the conclusion of Theorem~\ref{th:privacy_amp}
generalizes readily from a pair $(z, z')$ 
of points to a pair $(\mu, \mu')$ of distributions,
where $\|z - z'\|$ is replaced by $\W(\mu, \mu')$.

Another technical issue is that when we consider
privacy amplification for an iterative procedure,
the initial solution $z_0$ is the same
for both scenarios, but the first iteration
uses different neighboring functions $f_1 \sim_\Delta f'_1$ (as
in Definition~\ref{defn:neighbor_functions}),
which will lead to different Markov operators $K_1$ and $K_1'$.
There are two ways to resolve this inconsistency with Theorem~\ref{th:privacy_amp},
which considers different starting $z$ and $z'$ in the two scenarios, but uses
the same sequence of Markov operators for both scenarios (including $K_1)$.

\begin{compactitem}

\item We can start with the distributions $\mu = K_1(z_0)$ and $\mu' = K_1'(z_0)$
and use the natural coupling (induced by using the same randomness $N$)
as a witness for $\W(\mu, \mu') \leq \eta \|\nabla f(z_0) - \nabla f'(z_0)\| \leq \eta \Delta$.  However, this means that we do not
exploit the randomness of $K_1$ and $K'_1$, leading to a slight loss of privacy guarantee
in the analysis.

\item An alternative is to consider
$z = z_0 - \eta \nabla f(z_0)$ and $z' = z_0 - \eta \nabla f'(z_0)$
with $\|z - z'\| \leq \eta \Delta$.
Then, we can use the same Markov operator $K_1$ for the first iteration
that is specially defined by $(z, N) \mapsto z + N$, where $N$ is the sampled randomness.
\end{compactitem}

For simplicity, we will use the first approach and do not need to redefine the Markov operator for the first iteration.

\end{remark}

\noindent \textbf{Example.} Continuing 
with the example of noisy gradient descent,
suppose each iterator operator $K \in \mcal{K}$
corresponds to some convex function $f$
that is $\frac{1}{\eta}$-smooth (i.e., $\nabla f(\cdot)$
has an operator norm of at most $\frac{1}{\eta}$).  Suppose that
the Markov operator samples fresh $N$ from Gaussian distribution
$\mcal{N}(0, \sigma^2 \I)$ and corresponds to
$(z, N) \mapsto z - \eta \cdot \nabla f(z) + N$.

Condition~(A) of non-expansion follows by considering the natural coupling
because the mapping $z \mapsto z - \eta \cdot \nabla f(z)$
is a non-expansion (see a proof in~\cite{DBLP:conf/focs/FeldmanMTT18}
and Fact~\ref{fact:gd});
for strongly convex functions,
the parameter $\eta$ is refined such that
the corresponding mapping is strictly contractive
(see~\cite[Lemma 18]{BalleBGG19}).

From
Fact~\ref{fact:zcdp_gaussian},
it follows that condition~(B) of one-step privacy
holds with $C_{\mcal{K}} = \frac{1}{2 \sigma^2}$

\noindent \textbf{Proof Overview of Theorem~\ref{th:privacy_amp}.}
Since we will also use the same proof structure for analyzing
privacy amplification for ADMM,
we will outline the proof in~\cite{BalleBGG19} for completeness
and skip most of the algebraic calculation.
The proof is by induction on $T$.
As in~\cite{BalleBGG19}, we will actually show a more precise upper bound
as follows:

$$\Dz( (K_T \circ \cdots \circ K_1)(z) \| (K_T \circ \cdots \circ K_1)(z'))
\leq C_{\mcal{K}} \cdot \|z - z' \|^2 \cdot T L^{T-1} \cdot \phi_T(L)^2,$$

where the function $\phi_T(L) := \frac{L^{-\frac{1}{2}} - L^{\frac{1}{2}}}{L^{-\frac{T}{2}} - L^{\frac{T}{2}}}$,
for $0 < L < 1$; and we set $\phi_T(1) := \lim_{L \rightarrow 1} \phi_T(L) = \frac{1}{T}$.
Then, the required result follows because $\phi_T(L) \leq \frac{1}{T}$ for all $L \in (0, 1]$.

\noindent \textbf{Base case.} The case $T=1$ is already covered by condition~(B).

\noindent \textbf{Inductive step.}
For $T \geq 2$,
the goal is to show that 
the distributions $(K_T \circ \cdots \circ K_1)(z)$
and $(K_T \circ \cdots \circ K_1)(z')$ are ``close''
with respect to the $\Dz$-divergence.

The idea is to 
define some intermediate $\widehat{z} := (1 - \lambda_T(L)) \cdot z + \lambda_T(L) \cdot z'$
for some appropriate $\lambda_T(L) \in (0,1)$.
For $0 < L < 1$, we set $\lambda_T(L) := \frac{1-L}{1-L^T}$;
for $L = 1$, we set $\lambda_T(1) := \lim_{L \rightarrow 1} \lambda_T(L) = \frac{1}{T}$.

Then, we consider
a triangle inequality~\cite[Theorem 2]{BalleBGG19} involving
the Markov operator $K = K_T$ and
three distributions
$\mu = (K_{T-1} \circ \cdots \circ K_1)(z)$,
$\omega = (K_{T-1} \circ \cdots \circ K_1)(\widehat{z})$
and $\nu = (K_{T-1} \circ \cdots \circ K_1)(z')$ 
that we paraphrase as follows.

\begin{theorem}[Markov Triangle Inequality~\cite{BalleBGG19}]
\label{th:triangle}
Suppose $\mu$, $\omega$ and $\nu$ are distributions
on $\Omega$ such that $\omega$ and $\nu$ have the same support and $K: \Omega \rightarrow \mcal{P}(\Omega)$
is a Markov operator.
Furthermore,
suppose $\pi$ is a coupling from $\mu$ to $\omega$.
Then, the following inequality holds:

$$\Dz(K(\mu) \| K(\nu)) \leq \sup_{(u,w) \in \supp(\pi)} \Dz(K(u)\|K(w)) +  \Dz(\omega \| \nu).$$
\end{theorem}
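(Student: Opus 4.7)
\textbf{Proof Proposal for Theorem~\ref{th:triangle}.}
The plan is to reduce the inequality to two applications of Fact~\ref{fact:zCDP} (data-processing and adaptive composition), using the coupling $\pi$ as the glue between the ``$\mu$-side'' and the ``$\nu$-side.'' The key trick is to introduce auxiliary joint distributions on $\Omega \times \Omega$ whose second marginals are $\omega$ and $\nu$, respectively, so that the divergence $\Dz(\omega\|\nu)$ appears naturally as the ``first stage'' of an adaptive composition.

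Concretely, I would define random variables $(U,W,Y)$ by the process: sample $(U,W)\sim \pi$ and then $Y\sim K(U)$; and analogously $(W',Y')$ by: sample $W'\sim \nu$ and then $Y'\sim K(W')$. Because $\pi$ has first marginal $\mu$, the marginal of $Y$ is precisely $K(\mu)$; by construction the marginal of $Y'$ is $K(\nu)$. Applying the data processing inequality (Fact~\ref{fact:zCDP}(a)) to the projection $(W,Y)\mapsto Y$ and $(W',Y')\mapsto Y'$ gives
\[
\Dz(K(\mu)\|K(\nu)) \;=\; \Dz(Y\|Y') \;\leq\; \Dz((W,Y)\|(W',Y')).
\]

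Next I would invoke adaptive composition (Fact~\ref{fact:zCDP}(c)). The marginal of $W$ is $\omega$ (the second marginal of $\pi$), so the ``first stage'' contribution is $\Dz(W\|W')=\Dz(\omega\|\nu)$, which is finite since $\omega$ and $\nu$ share support by hypothesis. For the ``second stage,'' note that for each $w$ in the support of $\omega$, the conditional $(Y'\mid W'=w)$ is exactly $K(w)$, whereas $(Y\mid W=w)$ is the mixture of $K(u)$ over $u$ drawn from the conditional of $\pi$ given its second coordinate equals $w$. By the quasi-convexity clause (Fact~\ref{fact:zCDP}(b)), this mixture satisfies
\[
\Dz\bigl((Y\mid W=w)\,\|\,K(w)\bigr) \;\leq\; \sup_{u\,:\,(u,w)\in\supp(\pi)} \Dz(K(u)\|K(w)).
\]
Taking the supremum over $w$ absorbs both quantifiers into a single supremum over $(u,w)\in\supp(\pi)$, and summing the two stages yields the claimed bound.

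I expect the main obstacle to be book-keeping of conditional distributions and supports: one must verify that $(Y\mid W=w)$ and $K(w)$ are comparable under $\Dz$ (which needs the ``same support'' technical assumption from Section~\ref{sec:coupling}), and that the mixture representation is legitimate so that Fact~\ref{fact:zCDP}(b) actually applies. Everything else is a direct invocation of the three clauses of Fact~\ref{fact:zCDP}; there is no further integration or probability-density-level computation required.
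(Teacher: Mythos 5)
Your proposal is correct, but it is worth noting that the paper does not actually prove Theorem~\ref{th:triangle}: it is quoted from Balle et al.~\cite{BalleBGG19}, and the accompanying remark only explains how the paper's formulation (supremum over $(u,w)\in\supp(\pi)$) paraphrases the original one (supremum over $w\in\supp(\nu)$ with $u$ replaced by the conditional of $\mu$ given $w$ under $\pi$), whose proof in~\cite{BalleBGG19} works directly with the R\'enyi divergences of the relevant mixtures. What you do differently is give a self-contained derivation from Fact~\ref{fact:zCDP} alone: data processing applied to the projection $(W,Y)\mapsto Y$, adaptive composition with first stage $\Dz(\omega\|\nu)$, and quasi-convexity to replace the conditional mixture $K(\pi(\cdot\mid W=w))$ by the worst-case point $\sup_{u:(u,w)\in\supp(\pi)}\Dz(K(u)\|K(w))$. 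This buys a proof that stays entirely within the toolbox the paper already states (no integral-level computation), and as a byproduct it makes the remark transparent: stopping before the final quasi-convexity step recovers the original, slightly stronger, conditional-distribution form, while the last step yields the pointwise-sup form stated here. The caveats you flag are the right ones and are benign under the paper's standing technical assumption that all $K(z)$ share the same support (together with the hypothesis that $\omega$ and $\nu$ do): this is what makes the joint distributions $(W,Y)$ and $(W',Y')$ comparable so that Fact~\ref{fact:zCDP}(b)--(c) apply, with only routine almost-everywhere bookkeeping of conditional distributions remaining.
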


\begin{remark}
The supremum in the original inequality in~\cite{BalleBGG19}
is slightly more obscure.  The supremum is taken over $w \in \supp(\nu)$,
and $u$ is replaced by the conditional distribution on $\mu$ (with
respect to the coupling $\pi$) given that $w$ is sampled from $\omega$.
Our alternate formulation makes the ``triangle flavor'' of the inequality
more obvious.  Observe that if the
subset $\supp(\omega) \setminus \supp(\nu)$ has non-zero measure,
we will have $\Dz(\omega \| \nu) = + \infty$ and the inequality becomes trivial.
\end{remark}

\noindent \emph{Completion of inductive step.}
Observing that $\|\widehat{z} - z'\| = (1 - \lambda_T(L)) \cdot \|z - z'\|$,
the induction hypothesis gives
$\Dz(\omega \| \nu) \leq C_{\mcal{K}} \cdot (T-1) L^{T-2} \cdot \phi_{T-1}(L)^2 \cdot
(1 - \lambda_T(L))^2 \cdot \|z - z'\|^2$.

Next, observe that $\|z - \widehat{z}\| = \lambda_T(L) \cdot \|z - z'\|$.
Consider the coupling $\pi$ (induced by condition~(A)) from $\mu$ to $\omega$ for applying
$(K_{T-1} \circ \cdots \circ K_1)$ to the two different 
scenarios $z$ and $\widehat{z}$.
Then, applications of condition~(A) for $T-1$ times
implies that for any $(u, w) \in \supp(\pi)$,
$\|u - w \| \leq L^{T-1} \|z - \widehat{z}\|$.
Therefore, condition~(B) implies
that $\Dz(K(u) \| K(w)) \leq C_{\mcal{K}} \cdot L^{2(T-1)} \cdot \lambda_T(L)^2
\cdot \|z - z'\|^2$.  Observe that we have only used
the linearity of the norm $\| \cdot \|$ and do not
need any triangle inequality for the norm.

Applying the Markov triangle inequality in Theorem~\ref{th:triangle},
the inductive step
is completed by observing that
the sum of the two upper bounds
is  $C_{\mcal{K}} \cdot \gamma_T(L) \cdot \|z - z\|^2$,
where  $\gamma_T(L)$ is the following expression:
$\gamma_T(L) :=  (T-1) \cdot L^{T-2} \cdot \phi_{T-1}(L)^2 \cdot
(1 - \lambda_T(L))^2 + L^{2(T-1)} \cdot \lambda_T(L)^2$.

For $L = 1$, we have
$\gamma_T(1) = (T-1) \cdot (\frac{1}{T-1})^2 \cdot (1 - \frac{1}{T})^2 + \frac{1}{T^2} = \frac{1}{T}$.

For $L < 1$, it suffices
to check the calculation that
$\gamma_T(L) = T L^{T-1} \cdot \phi_T(L)^2$,
which is already implicitly done in~\cite{BalleBGG19}.  \qed


\ignore{
\noindent \emph{Completion of proof.}
Without lose of genelarity we can assume that $\|z-z'\|=1$. The key idea is to prove that for any $z,z'\in \Omega$, we have

\begin{align} \label[]{ineq:Amp_comstruction}
  \Dz(K_T K_{T-1}\dots K_1(z)\|K_T K_{T-1}\dots K_1(z'))
  \leq
  C_{\mcal{K}}TL^{2T}(\frac{1-L}{1-L^T})^2.
\end{align}

If letting $u_1=K_{T-1} K_{T-2}\dots K_1(z)$, and $z_1 =(1-\frac{1-L}{1-L^T})z+(\frac{1-L}{1-L^T})z'$, $w_1=K_{T-1} K_{T-2}\dots K_1(z_1)$ and $v_1=K_{T-1} K_{T-2}\dots K_1(z')$, by Theorem \ref{th:triangle} we have

\begin{align}
  \Dz(K_{T-1}K_{T-2}\dots K_1(z)\|)K_{T-1}K_{T-2}\dots K_1(z')
  \leq
  \sup_{(u,w) \in \supp(\pi)} \Dz(K_{T}(u)\|K_{T}(w)) +  \Dz(w \| v)
\end{align}

For the first term, letting $(X,X')\sim \pi$ as the witness of $W(u,w)$. Then $\Dz(K(X)\|K(X'))\leq C_\mcal{K}\|X-X'\|\leq C_\mcal{K}L^2 W(u,w)$. By iteratively using it we have

\begin{align}
  \sup_{(u,w) \in \supp(\pi)} \Dz(K_{T}(u)\|K_{T}(w))
  &\leq
  \sup_{x\in u}\sup_{y\in \pi(u,w)} \Dz(K_{T}(x)\|K_{T}(y)) \\
  &\leq
  \sup_{\|x-y\|\leq 
  W(u,w)} \Dz(K_{T}(x)\|K_{T}(y)) 
  \\
  &\leq C_{\mcal{K}}L^2W^2(u,w)
  \leq
  C_{\mcal{K}}L^{2(n+1)}(\frac{1-L}{1-L^T})^2.
\end{align}

For the remaining term, we construct a series of setd of distributions: $z_i=(1-\frac{1}{L}\frac{1-L}{1-L^T}\sum_{j=1}^{i}L^j)z+(\frac{1}{L}\frac{1-L}{1-L^T}\sum_{j=1}^{i}L^j)z'$, $u_i=K_{T-i} \dots K_1(z_{i-1})$, $w_i=K_{T-i} \dots K_1(z_{i})$ and $v_i=K_{T-i}\dots K_1(z')$, and $\pi_i$ the witness of $W(w_{i},w_{i+1})$. Noted that 
$$
\sup_{(w_{i},w_{i+1}) \in \supp(\pi_i)} \Dz(K_{T-i}(u_{i})\|K_{T-i}(w_{i}))
\leq
C_{\mcal{K}} L^{2(T-i)}(\frac{1}{L}\frac{1-L}{1-L^T})^2L^{2i}
=
C_{\mcal{K}} L^{2T}(\frac{1-L}{1-L^T})^2, 
$$
iteratively doing the similar calculationg we get (\ref{ineq:Amp_comstruction}). And it can be upperbounded as
$$
TL^{2T}(\frac{1-L}{1-L^T})2
=
TL^T(\frac{1-L}{L^{-T/2}-L^{T/2}})^2
\leq
\frac{L^T}{T}
$$
where the final inequality is from the upperbound of $\frac{1-L}{L^{-T/2}-L^{T/2}}$ for $L\in [0,1]$. Thus Thereom~\ref{th:privacy_amp} is established.

\ignore{
Observing that $\|\widehat{z} - z'\| = \frac{T-1}{T} \cdot \|z - z'\|$,
the induction hypothesis gives
$\Dz(\omega \| \nu) \leq \frac{C_{\mcal{K}}}{T-1} \cdot 
(\frac{T-1}{T})^2 \cdot \|z - z'\|^2$.

Next, observe that $\|z - \widehat{z}\| = \frac{1}{T} \cdot \|z - z'\|$.
Consider the natural coupling $\pi$ from $\mu$ to $\omega$ for applying
$(K_{T-1} \circ \cdots \circ K_1)$ to the two different 
scenarios $z$ and $\widehat{z}$.
Then, repeated applications of condition~(A)
implies that for any $(u, w) \in \supp(\pi)$,
$\|u - w \| \leq \|z - \widehat{z}\|$.
Therefore, condition~(B) implies
that $\Dz(K(u) \| K(w)) \leq C_{\mcal{K}} \cdot \frac{1}{T^2} 
\cdot \|z - z'\|^2$.  Observe that we have only used
the linearity of the norm $\| \cdot \|$ and do not
need any triangle inequality for the norm.

Applying the Markov triangle inequality in Theorem~\ref{th:triangle},
the inductive step
is completed by observing that
the sum of the two upper bounds
is exactly 

$\frac{C_{\mcal{K}}}{T-1} \cdot 
(\frac{T-1}{T})^2 \cdot \|z - z'\|^2 + 
 C_{\mcal{K}} \cdot \frac{1}{T^2} 
\cdot \|z - z'\|^2 = \frac{C_{\mcal{K}}}{T}  \cdot \|z - z'\|^2$.

For the contractive condition, one can repeat the same discussion when setting series of intermediate $\widehat{z}_i := (1 - \frac{1-L^i}{1-L^T}) \cdot z + \frac{1-L^i}{1-L^T} \cdot z'$. It's easy to verify that $\widehat{z}_0=z$ and $\widehat{z}_T=z'$. 

Now let $\mu = (K_{T-1} \circ \cdots \circ K_1)(\widehat{z}_0)$,
$\omega_{i-1} = (K_{i-1} \circ \cdots \circ K_1)(\widehat{z}_{i-1})$
and $\nu = (K_{T-1} \circ \cdots \circ K_1)(z')$, we have $\Dz(K(u) \| K(z)) \leq  \sup_{(u,w_{i-1}) \in \supp(\pi)} \Dz(K(u)\|K(w_{i-1})) +  \Dz(w_{i-1} \| \nu)\leq $
}
\qed
}

\subsection{Technical Challenges for Applying the Framework to ADMM}
\label{sec:challenges}

In view of Theorem~\ref{th:privacy_amp},
it suffices to achieve variants of (A)~non-expansion and (B)~one-step privacy
for ADMM.  We outline our approaches for achieving these variants for ADMM.

\noindent \textbf{(A) Non-expansion Property.}  
Indeed, it is observed~\cite{DBLP:conf/focs/FeldmanMTT18}
that the gradient descent update mapping $x \mapsto x - \eta \nabla f(x)$
is non-expansive, when $f$ is a $\frac{1}{\eta}$-smooth
convex function; in other words,
the mapping is non-expansive as long as $\eta$ is small enough.  This condition on the parameter $\eta$
and the smoothness of $f$ is also used in the standard convergence
proof of gradient descent.  However, the situation for ADMM is not
so straightforward.

\noindent \emph{Counter-example for Non-expansion.}
Consider a special case for Algorithm~\ref{alg:one_iteration}
in which both $g$ and $B$ are zero, and so we can ignore the
variable $y$.  Moreover, we set $A = \I_n$ and $\beta = 1$,
and imagine that $\eta > 0$ is close to 0.  Even though
this gives a trivial optimization problem, the purpose is to show
that each iteration may violate the non-expansion property under the
usual norm.

Consider the two different inputs $(x_0, \lambda_0 = 0)$
and $(x'_0, \lambda'_0 = 0)$
such that only the corresponding first components $x_0 \neq x'_0$ differ.
Because $\eta$ is close to 0,
Lemma~\ref{lemma:step1} implies that
the corresponding outputs satisfy $\|x_1 - x'_1 \| \approx \|x_0 - x'_0\|$.
However, on the other hand,
we have $\|\lambda_1 - \lambda'_1 \| = \beta \|x_0 - x'_0\|$.
Therefore, it follows that the non-expansion property is violated for one iteration under the usual norm. Because $\beta = 1$, we actually have
$\|x_1 - x'_1\|^2 + \|\lambda_1 - \lambda'_1 \| ^2 > 
\frac{3}{2} \cdot (\|x_0 - x'_0\|^2 + \|\lambda_0 - \lambda'_0 \| ^2)$.

\noindent \emph{Customized Norm.}
It is somehow counter-intuitive that
a converging iterative process
can have an iteration that corresponds to a strictly
expanding mapping.
Since there are many known convergence proofs for ADMM~\cite{DBLP:journals/siamnum/HeY12, DBLP:journals/jscic/DengY16, DBLP:conf/icml/OuyangHTG13, DBLP:conf/ijcai/0004K16, DBLP:conf/icml/ZhongK14},
in Section~\ref{sec:non-expansion},
we have shown that
similar conditions
can lead to some non-expansion inequality for each ADMM iteration,
with the surprising twist that it holds 
for a special customized norm that resolves the aforementioned paradox.
For strongly convex objective functions,
we will further refine our customized norm in Section~\ref{sec:strongly_convex} such that
each ADMM iteration corresponds to a strictly contractive mapping.

\noindent \textbf{(B) One-step Privacy.} Adapting one-step privacy
to an ADMM iteration turns out to be more challenging.
Recall that the purpose of privacy amplification by iteration
is to analyze whether the user from the first iteration
can enjoy amplified privacy guarantees from the randomness
used for achieving local privacy for users in subsequent iterations.

Even though there are $x$, $y$ and $\lambda$ variables
involved in ADMM, in Algorithm~\ref{alg:one_iteration},
we see that it is possible to pass only the pair $(x, \lambda)$
between consecutive iterations, because
the variable $y$ can be deterministically recovered from $(x, \lambda)$.  However,
in order to preserve local privacy
for the user in iteration~$t+1$ (whose sensitive data is the function $f_{t+1}$),
it suffices to add noise only in line~\ref{ln:x} to mask the $x$ variable.
This means that given input $(\widetilde{x}_{t}, \lambda_{t})$,
only the first component of the output $(\widetilde{x}_{t+1}, \lambda_{t+1})$
has masking noise, while $\lambda_{t+1}$ is actually
a deterministic function of $(\widetilde{x}_{t}, \lambda_{t})$.
This means that given two different inputs,
the corresponding two output distributions on $(x, \lambda)$ can still have 
a $\Dz$-divergence of $+\infty$, no matter how much noise is added
to mask the $x$ variable.   One might suggest adding noise to the $\lambda$ variable
in every iteration as well to resolve this issue, but this would be considering ``cheating'', because this extra noise is unnecessary for achieving local privacy
for the user in each iteration.

 Hence, at first glance, it seems that
amplification by iteration is impossible for ADMM because the $\lambda$ (and also the $y$)
variables do not receive any masking noise, thereby potentially
leaking information on the private function $f_1$ from the first user.

Indeed, this intuition would be right if the number $m$ of rows in $A$ is much
larger than the dimension~$n$ of $x$.  In some sense, the
linear transformation $A \widetilde{x}_1$ corresponds to many
snapshots of the masked vector~$\widetilde{x}_1$ that
are somehow encoded into the $\lambda$ variable.  In the extreme case
when $m$ is even much larger than the number $T$ of iterations,
it is perceivable that the masked vector~$\widetilde{x}_1$
may be recovered from the final $\lambda_T$ (together
with knowledge of the subsequent functions $f_2, \ldots, f_T$) with little noise, thereby defeating the goal of privacy amplification for the first user.
This is the reason why we first perform Gaussian elimination
in Remark~\ref{remark:transform} to ensure that $m \leq n$ for the matrix~$A$.

\noindent \emph{Incorporating two ADMM iterations into a single Markov operator.}  An innovative idea to resolve the infinity $\Dz$-divergence issue is to consider two ADMM iterations together in a single Markov operator $K \in \mcal{K}$.
Suppose we have some input $(\widetilde{x}_{t}, \lambda_{t})$
to iteration~$t+1$ for the noisy variant of Algorithm~\ref{alg:one_iteration},
but we consider both iterations $t+1$ and $t+2$ together.
We have already mentioned that in iteration~$t+1$,
we only have noise for the masked $\widetilde{x}_{t+1}$,
but $\lambda_{t+1}$ is a deterministic function of $(\widetilde{x}_{t}, \lambda_{t})$.

However, in iteration~$t+2$, since $\lambda_{t+2}$ 
is a deterministic function of $(\widetilde{x}_{t+1}, \lambda_{t+1})$,
the noise from $\widetilde{x}_{t+1}$ can actually be sufficient
to provide a mask for $\lambda_{t+2}$, assuming
that matrix $A$ has the right form as in Remark~\ref{remark:transform}.
Moreover, the fresh randomness from iteration~$t+2$ can mask $\widetilde{x}_{t+2}$.
In Section~\ref{sec:one_step_privacy}, we shall see that the privacy analysis of the Markov operator
(corresponding to both iterations $t+1$ and $t+2$) producing
$(\widetilde{x}_{t+2}, \lambda_{t+2})$ is reminiscent
of a privacy composition proof.

\section{Achieving Privacy Amplification by Iteration for ADMM}
\label{sec:technical}

We give the technical details for
applying the coupling framework described
in Section~\ref{sec:coupling} to achieve
privacy amplification for ADMM.
Recall that the high level goal
is to amplify the privacy guarantee
for the user in the \textbf{first} iteration via
the randomness in subsequent iterations.
As mentioned in Remark~\ref{remark:first_iteration},
we will \textbf{not} exploit the masking randomness (for the
variable $x$) generated in the first iteration in the privacy amplification
analysis.
Therefore, for the purpose of privacy amplification,
the starting points for the two scenarios
are two inputs $(x_0, \lambda_0)$ and $(x'_0, \lambda_0)$,
where $x_0 \neq x'_0$ and the $\lambda$ components are the same.
Below is the main technical result.

\begin{theorem}[Privacy Amplification by Iteration for ADMM]
\label{th:ADMM_privacy}
Suppose given two input scenarios $(x_0, \lambda_0)$
and $(x'_0, \lambda_0)$, a total of $2T$ \emph{noisy} ADMM  iterations
in Algorithm~\ref{alg:one_iteration} are applied
to each input scenario, where for each iteration $t \in [2T]$,
the same function $f_t$ is used in both scenarios
and fresh randomness $N_t$ drawn from Gaussian distribution
$\mcal{N}(0, \sigma^2 \I_n)$ is used to produce
masked $\widetilde{x}_t \gets x_t + N_t$ (that is passed to the next iteration
together with $\lambda_t$).
Then, the corresponding output distributions from the two scenarios satisfy:

$$\Dz((\widetilde{x}_{2T}, \lambda_{2T}) \| (\widetilde{x}'_{2T}, \lambda'_{2T})) \leq \frac{1}{2 \sigma^2}  \frac{C}{T} \cdot \|x_0 - x'_0 \|^2,$$

where $C := \max\{2, \frac{3}{\beta \eta} \} \cdot (1+  \beta \eta \cdot \|A\|^2)$.
\end{theorem}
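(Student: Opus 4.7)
The plan is to instantiate the coupling framework of Theorem~\ref{th:privacy_amp} by letting each Markov operator $K_t \in \mcal{K}$ correspond to \emph{two} consecutive noisy ADMM iterations, so that $T$ applications produce the $2T$ iterations in the statement. I would work on the state space $\Omega = \R^n \times \R^m$ of pairs $(x,\lambda)$ equipped with a customized quadratic norm $\|\cdot\|_*$, say of the form $\|(x,\lambda)\|_*^2 = \kappa_1\|x\|^2 + \kappa_2\|\lambda\|^2$ (or a more general positive-definite quadratic form), with the weights $\kappa_1,\kappa_2 > 0$ tuned exactly so that one ADMM iteration is non-expansive in $\|\cdot\|_*$; this is the non-expansion inequality I would borrow from Section~\ref{sec:non-expansion}. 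Non-expansion of a single iteration immediately implies (via the natural coupling that reuses the same Gaussian sample in both scenarios) that each two-iteration block $K_t$ is also non-expansive, which is condition (A) of Theorem~\ref{th:privacy_amp} with $L = 1$.

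The main technical work would then be verifying the one-step privacy condition (B) for the block $K_t$. I would bound $\Dz(K_t(z)\|K_t(z'))$ by applying the adaptive composition rule Fact~\ref{fact:zCDP}(c), conditioning on the intermediate $\lambda$-coordinate $\lambda_{2t}$ produced halfway through the block. Concretely, reading off Algorithm~\ref{alg:one_iteration}, $\lambda_{2t}$ is a deterministic function of $(\widetilde{x}_{2t-1},\lambda_{2t-1})$ in which $\widetilde{x}_{2t-1}$ enters through $A\widetilde{x}_{2t-1}$ and $y_{2t-1} = \mcal{G}(\lambda_{2t-1} - \beta A\widetilde{x}_{2t-1})$. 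Invoking Remark~\ref{remark:transform} so that $A = [\I_m\,|\,D]$ guarantees that $A\widetilde{x}_{2t-1}$ inherits a fresh copy of $\mcal{N}(0,\sigma^2\I_m)$ from the first noise $N_{2t-1}$, and Fact~\ref{fact:zcdp_gaussian} then bounds $\Dz(\lambda_{2t}\|\lambda'_{2t})$ by a quantity of order $\frac{\|A\|^2\|x_{2t-1}-x'_{2t-1}\|^2}{2\sigma^2}$. Conditioned on $\lambda_{2t}$, the remaining output $\widetilde{x}_{2t}$ is a deterministic function of the block inputs plus the fresh independent noise $N_{2t}$, so Fact~\ref{fact:zcdp_gaussian} directly bounds its conditional $\Dz$-divergence by $\frac{\|x_{2t}-x'_{2t}\|^2}{2\sigma^2}$. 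Summing the two bounds via Fact~\ref{fact:zCDP}(c) and using the single-step non-expansion to re-express both $\|x_{2t-1}-x'_{2t-1}\|^2$ and $\|x_{2t}-x'_{2t}\|^2$ in the block-input customized norm should yield a one-step constant $C_\mcal{K}$ of order $\frac{1}{2\sigma^2}\cdot(1 + \beta\eta\|A\|^2)$.

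With (A) and (B) in hand, Theorem~\ref{th:privacy_amp} with $L = 1$ gives $\Dz((\widetilde{x}_{2T},\lambda_{2T})\|(\widetilde{x}'_{2T},\lambda'_{2T})) \leq \frac{C_\mcal{K}}{T}\|z_0 - z'_0\|_*^2$, and since $\lambda_0 = \lambda'_0$ by hypothesis, $\|z_0 - z'_0\|_*^2 = \kappa_1\|x_0 - x'_0\|^2$. The factor $\max\{2,\tfrac{3}{\beta\eta}\}$ in the advertised constant $C$ should emerge precisely from $\kappa_1$, i.e.\ the weight that the customized norm is forced to place on the $x$-coordinate in order to make ADMM non-expansive while remaining compatible with the Gaussian-mask analysis. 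In my view the main obstacle is the joint design of the customized norm $\|\cdot\|_*$ and the intermediate variable used in adaptive composition: $\|\cdot\|_*$ must simultaneously (i) satisfy the non-expansion inequality of Section~\ref{sec:non-expansion} and (ii) admit a clean conversion of the Gaussian-derived bounds on $\|A(x_{2t-1}-x'_{2t-1})\|^2$ and $\|x_{2t}-x'_{2t}\|^2$ back into $\|z_{t-1}-z'_{t-1}\|_*^2$. Harmonizing these two requirements is what ultimately forces the particular shape of the constant $C$ stated in the theorem.
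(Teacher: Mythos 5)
Your overall architecture matches the paper's proof: group the $2T$ noisy iterations into $T$ Markov operators of two iterations each, prove non-expansion under a customized norm, prove one-step privacy by adaptive composition, and invoke Theorem~\ref{th:privacy_amp} with $L=1$. The gap is in the heart of the argument, the one-step privacy bound. You decompose the block output $(\widetilde{x}_{2t},\lambda_{2t})$ by conditioning on $\lambda_{2t}$ and claim that, conditioned on $\lambda_{2t}$, the output $\widetilde{x}_{2t}$ is a deterministic function of the block input plus the fresh noise $N_{2t}$, so that Fact~\ref{fact:zcdp_gaussian} applies. This is not true: $x_{2t}=\mcal{F}^{f_{2t}}(\widetilde{x}_{2t-1},y_{2t-1},\lambda_{2t})$ still depends on the first noise $N_{2t-1}$ through $\widetilde{x}_{2t-1}$, and since $\lambda_{2t}$ is only a non-injective function of $\widetilde{w}_{2t-1}:=\lambda_{2t-1}-\beta A\widetilde{x}_{2t-1}$ (it passes through $\mcal{G}$ and $B$), fixing $\lambda_{2t}$ does not pin down that noise. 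The conditional law of $\widetilde{x}_{2t}$ is therefore a mixture of Gaussians whose mixing distributions differ between the two scenarios, so neither Fact~\ref{fact:zcdp_gaussian} nor the uniform conditional bound needed for Fact~\ref{fact:zCDP}(b)--(c) applies as you use them. This is exactly why the paper conditions on $\widetilde{w}_{2t-1}$ itself (Algorithms~\ref{alg:m1} and~\ref{alg:m2}): $\widetilde{w}_{2t-1}$ is an exactly Gaussian-masked deterministic quantity, fixing it lets the first-iteration randomness be reconstructed (line~\ref{ln:rand_reconstruct}), so the conditional second stage is deterministic plus a fresh Gaussian, and $\lambda_{2t}$ is then recovered from $\widetilde{w}_{2t-1}$ by the data-processing inequality, Fact~\ref{fact:zCDP}(a). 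Your first-stage bound is also incorrect: the relevant divergence is $\Dz(\widetilde{w}_{2t-1}\|\widetilde{w}'_{2t-1})=\frac{1}{2\beta^2\sigma^2}\|w_{2t-1}-w'_{2t-1}\|^2$ (Lemma~\ref{lemma:m1_privacy}), which involves the difference of the $\lambda_{2t-1}$'s (these do differ across scenarios, being deterministic functions of the differing block inputs) and carries a $1/\beta^2$ factor because the noise enters as $\beta A N_{2t-1}$; your stated bound $\frac{\|A\|^2\|x_{2t-1}-x'_{2t-1}\|^2}{2\sigma^2}$ drops both, and it is precisely this $1/\beta^2$, measured against the weight $\frac{\eta}{\beta}$ in the customized norm, that produces the $\frac{3}{\beta\eta}$ term in $C$.

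A second, related problem is the norm and the accounting of constants. A diagonal norm $\kappa_1\|x\|^2+\kappa_2\|\lambda\|^2$ cannot be non-expansive for an ADMM iteration: the counter-example in Section~\ref{sec:challenges} (with $g=0$, $B=0$, $A=\I$, $\beta=1$, $\eta\to 0$, $\lambda_0=\lambda'_0$) gives $\|x_1-x'_1\|\approx\|x_0-x'_0\|$ and $\|\lambda_1-\lambda'_1\|=\|x_0-x'_0\|$, so the output norm strictly exceeds the input norm for any $\kappa_2>0$; the cross term in Definition~\ref{defn:norm}, namely $\|x\|^2+\frac{\eta}{\beta}\|\lambda-\beta Ax\|^2$, is essential. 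With the correct norm, your attribution of the constant is reversed: $\max\{2,\frac{3}{\beta\eta}\}$ arises when converting the composed bound $\frac{1}{2\sigma^2}\bigl(2\|x_{2t-1}-x'_{2t-1}\|^2+3\|\frac{1}{\beta}(\lambda_{2t-1}-\lambda'_{2t-1})-A(x_{2t-1}-x'_{2t-1})\|^2\bigr)$ into $\|\cdot\|_*^2$ (Lemma~\ref{lemma:K_privacy}), while $(1+\beta\eta\|A\|^2)$ appears only at the very end from $\|(x_0-x'_0,0)\|_*^2\leq(1+\eta\beta\|A\|^2)\|x_0-x'_0\|^2$, not from a weight ``$\kappa_1$'' in the norm and not from the one-step constant. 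Until the intermediate variable is changed to $\widetilde{w}_{2t-1}$ (or an equivalent quantity making the conditional second stage purely Gaussian) and the norm is taken with the cross term, the adaptive-composition step, and hence the stated constant $C$, does not go through.
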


\ignore{
\noindent \textbf{Remark.}  Observe that in Theorem~\ref{th:ADMM_privacy},
the parameter $C$ has a dependency of $O(\frac{1}{\beta \eta})$ on
$\beta$ and $\eta$.  We will discuss potential improvements
in Section~\ref{sec:conclusion}.
}

\begin{corollary}[Privacy Amplification for the First User]
\label{cor:first_user}
Consider two scenarios for running
$2T+1$ \emph{noisy} ADMM iterations (using
Gaussian noise $\mcal{N}(0, \sigma^2 \I)$ in each iteration to
mask only the $x$ variable),
where the only difference
in the two scenarios
is that the corresponding functions used in the first iteration
can be different neighboring functions $f_1 \sim_\Delta f'_1$ (as
in Definition~\ref{defn:neighbor_functions}).
In other words, the initial $(x_0, \lambda_0)$
and the functions $f_t$ in subsequent iterations $t \geq 2$ are identical
in the two scenarios.

Then, the solutions from the two scenarios
satisfy the following.

\begin{compactitem}

\item \emph{Local Privacy.} After the first iteration, we have

$$\Dz(\widetilde{x}_1 \| \widetilde{x}'_1) \leq \frac{\eta^2 \Delta^2}{2 \sigma^2}.$$

\item \emph{Privacy Amplification.}  After the final iteration $2T+1$, we have

$$\Dz((\widetilde{x}_{2T+1}, \lambda_{2T+1}) \| (\widetilde{x}'_{2T+1}, \lambda'_{2T+1})) \leq \frac{C}{T} \cdot\frac{\eta^2 \Delta^2}{2 \sigma^2},$$

where $C := \max\{2, \frac{3}{\beta \eta} \} \cdot (1+  \beta \eta \cdot \|A\|^2)$.
\end{compactitem}
\end{corollary}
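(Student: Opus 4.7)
The plan is to prove the two bullets separately, each by a direct reduction to tools already in hand. The key structural observation that makes the argument clean is that the first-iteration update produces $\lambda_1 = \lambda'_1$: both $y_0 = \mcal{G}(\lambda_0 - \beta A x_0)$ and $\lambda_1 = \lambda_0 - \beta(A x_0 + B y_0 - c)$ are deterministic functions of $(x_0,\lambda_0)$, which agree across the two scenarios. Hence after iteration~1 the two scenarios differ only in the pre-noise primal component $x_1$ vs.\ $x'_1$, and this is precisely the configuration to which Theorem~\ref{th:ADMM_privacy} is tailored.

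For the local-privacy bullet, I first use Lemma~\ref{lemma:step1} to write $x_1 = (\I + \eta \beta A^\top A)^{-1}\bigl\{x_0 - \eta[\nabla f_1(x_0) + A^\top(\beta(B y_0 - c) - \lambda_1)]\bigr\}$, with the analogous expression for $x'_1$. Every term except $\nabla f_1(x_0)$ is shared between scenarios, so subtracting gives $x_1 - x'_1 = -\eta(\I + \eta \beta A^\top A)^{-1}(\nabla f_1(x_0) - \nabla f'_1(x_0))$. The matrix $\I + \eta \beta A^\top A$ is symmetric PSD with all eigenvalues at least~$1$, so its inverse has operator norm at most~$1$; combined with $\|\nabla f_1(x_0) - \nabla f'_1(x_0)\| \leq \Delta$ from $f_1 \sim_\Delta f'_1$, I obtain $\|x_1 - x'_1\| \leq \eta \Delta$. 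Fact~\ref{fact:zcdp_gaussian} then immediately yields $\Dz(\widetilde{x}_1\|\widetilde{x}'_1) \leq \frac{\|x_1-x'_1\|^2}{2\sigma^2} \leq \frac{\eta^2\Delta^2}{2\sigma^2}$.

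For the amplification bullet, I set up a coupling in which both scenarios use a shared realization $N_1 = n$ of the first-iteration noise. Under this coupling, the states entering iteration~2 are $(x_1 + n, \lambda_1)$ and $(x'_1 + n, \lambda_1)$, which deterministically satisfy $\|(x_1+n) - (x'_1+n)\| = \|x_1 - x'_1\| \leq \eta\Delta$ and have identical $\lambda$-components. Conditional on $N_1 = n$, the subsequent $2T$ noisy ADMM iterations run from these two fixed input pairs with fresh independent Gaussian noise, so Theorem~\ref{th:ADMM_privacy} applies and yields the conditional bound $\Dz\bigl((\widetilde{x}_{2T+1},\lambda_{2T+1})\|(\widetilde{x}'_{2T+1},\lambda'_{2T+1})\bigm| N_1 = n\bigr) \leq \frac{C}{T}\cdot\frac{\|x_1-x'_1\|^2}{2\sigma^2} \leq \frac{C}{T}\cdot\frac{\eta^2\Delta^2}{2\sigma^2}$. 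This bound is uniform in $n$, and $N_1$ has the same marginal distribution $\mcal{N}(0,\sigma^2\I_n)$ in both scenarios, so I can apply the quasi-convexity property of Fact~\ref{fact:zCDP}(b) with $N_1$ as the common conditioning variable to lift the conditional bound to the marginal output divergence, finishing the proof.

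The only real subtlety is the passage from Theorem~\ref{th:ADMM_privacy} (which is stated for deterministic pairs of starting points) to the present setting, where the state entering iteration~2 is random because of $N_1$. This is exactly the issue flagged in Remark~\ref{remark:first_iteration}, and the coupling-plus-quasi-convexity route above is the ``first approach'' suggested there; it costs nothing beyond a uniform bound on $\|x_1-x'_1\|$, which is precisely what the local-privacy calculation already provides. I therefore do not anticipate any substantive obstacle beyond the careful bookkeeping of which variables in each scenario coincide.
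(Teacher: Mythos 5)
Your proposal is correct and follows essentially the same route as the paper: bound $\|x_1 - x'_1\| \leq \eta\Delta$ via Lemma~\ref{lemma:step1} and the eigenvalue bound on $(\I + \eta\beta A^\top A)^{-1}$, get local privacy from Fact~\ref{fact:zcdp_gaussian}, and obtain amplification by invoking Theorem~\ref{th:ADMM_privacy} on the remaining $2T$ iterations via the natural coupling of $N_1$ (the first approach of Remark~\ref{remark:first_iteration}). Your write-up merely makes explicit the conditioning-plus-quasi-convexity step (Fact~\ref{fact:zCDP}(b)) that the paper leaves implicit.
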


\begin{proof}
For local privacy,
observe that by Lemma~\ref{lemma:step1}, we have

 $\|x_1 - x'_1 \| =
\eta \| (\I + \eta \beta A^\top A)^{-1} (\nabla f_1(x_0) - \nabla f'_1(x_0))\|
\leq \eta \|\nabla f_1(x_0) - \nabla f'_1(x_0)\|
\leq \eta \Delta$,

\noindent where the first inequality follows because all eignevalues of
$(\I + \eta \beta A^\top A)^{-1}$ are positive and at most~1.
The last inequality
follows from the assumption that $f_1 \sim_{\Delta} f'_1$.

The final result follows from Theorem~\ref{th:ADMM_privacy}.
\ignore{
Since we add Gaussian noise in $\mcal{N}(0, \sigma^2 \I_n)$, Fact~\ref{fact:gd} implies that
$\Dz(\widetilde{x}_1 \| \widetilde{x}'_1) \leq \frac{\eta^2 \Delta^2}{2 \sigma^2}$.

For privacy amplification, observe that
the natural coupling in the first iteration implies that
$\W_*((\widetilde{x}_1, \lambda_1), (\widetilde{x}'_1, \lambda'_1))
\leq \eta \Delta$.  Theorem~\ref{th:ADMM_privacy}
gives the required result.
}
\end{proof}


As described in Section~\ref{sec:challenges},
the main technical challenges
are how to achieve (A) non-expansion (in Section~\ref{sec:non-expansion})
and (B) one-step privacy (in Section~\ref{sec:one_step_privacy}).
After achieving those two key properties,
we will show how everything fits together
to achieve Theorem~\ref{th:ADMM_privacy}
in Section~\ref{sec:combining}.

\subsection{Achieving Non-expansion via Customized Norm}
\label{sec:non-expansion}

As discussed in Section~\ref{sec:challenges},
one ADMM iteration as in Algorithm~\ref{alg:one_iteration}
may produce a strictly non-expanding mapping under the usual norm.
We consider the following specialized norm.

\begin{definition}[Customized Norm]
\label{defn:norm}
Using the ADMM parameters $\eta$ and $\beta$
from Section~\ref{sec:admm_prelim}, we define a customized norm.
For $(x, \lambda) \in \R^n \times \R^m$,

$$\|(x, \lambda)\|^2_* := \|x\|^2 + \frac{\eta}{\beta} \cdot \| \lambda - \beta A x \|^2.$$
\end{definition}

\noindent \textbf{Remark.} Even though
we use the term ``norm'', we only need the
linearity property, i.e., for all $a \in \R$,
$\|(a x, a \lambda)\|^2_* = a^2 \cdot \|(x, \lambda)\|^2_*$.
Note that we do not need any triangle inequality
for the customized norm.

\noindent \textbf{Smoothness Assumption.}
As in the normal gradient descent,
the parameter $\eta > 0$ is related to the smoothness
of the function~$f$ used in each ADMM iteration.
Recall that a differentiable function~$f$ is $L$-smooth
if for all $x, x'$, $\|\nabla f(x) - \nabla f(x') \| \leq L \|x - x'\|$.
As in~\cite{DBLP:conf/focs/FeldmanMTT18}, we also use the following well-known fact.

\begin{fact}[Gradient Descent Update is Non-expansive]
\label{fact:gd}
Suppose $f$ is a differentiable
convex function that is $\frac{1}{\eta}$-smooth.
Then, the mapping $\varphi(x) := x - \eta \nabla f(x)$
is non-expansive, i.e., for all $x, x'$,
$\|\varphi(x) - \varphi(x')\| \leq \|x - x'\|$.
\end{fact}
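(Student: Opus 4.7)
The plan is to expand the squared norm $\|\varphi(x) - \varphi(x')\|^2$ and control the cross term using the co-coercivity property of gradients of convex smooth functions. Concretely, write
\begin{align*}
\|\varphi(x) - \varphi(x')\|^2 &= \|(x - x') - \eta (\nabla f(x) - \nabla f(x'))\|^2 \\
&= \|x - x'\|^2 - 2\eta \langle \nabla f(x) - \nabla f(x'), x - x' \rangle + \eta^2 \|\nabla f(x) - \nabla f(x')\|^2.
\end{align*}
The goal is then to show the last two terms combined are non-positive, which reduces the problem to establishing the co-coercivity inequality
$$\langle \nabla f(x) - \nabla f(x'), x - x' \rangle \geq \eta \, \|\nabla f(x) - \nabla f(x')\|^2,$$
because substituting this bound immediately yields $\|\varphi(x) - \varphi(x')\|^2 \leq \|x-x'\|^2 - \eta^2 \|\nabla f(x) - \nabla f(x')\|^2 \leq \|x-x'\|^2$.

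The main obstacle is proving the co-coercivity inequality (the Baillon--Haddad theorem for the special case of $\frac{1}{\eta}$-smooth convex functions). The standard trick I would use is to introduce, for fixed $x, x'$, the two auxiliary convex functions $g(z) := f(z) - \langle \nabla f(x'), z \rangle$ and $h(z) := f(z) - \langle \nabla f(x), z \rangle$. Both inherit $\frac{1}{\eta}$-smoothness from $f$, and by construction their gradients vanish at $x'$ and $x$ respectively, so $x'$ minimizes $g$ and $x$ minimizes $h$. Applying the descent lemma (a direct consequence of $\frac{1}{\eta}$-smoothness: $f(u) \leq f(v) + \langle \nabla f(v), u - v\rangle + \frac{1}{2\eta}\|u-v\|^2$) to each of $g$ and $h$ at a carefully chosen step from the respective minimizers $x'$ and $x$ produces lower bounds of the form $g(x) \geq g(x') + \frac{\eta}{2}\|\nabla g(x)\|^2$ and similarly for $h$. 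Adding the two resulting inequalities and collecting terms causes the $f$ values to cancel and yields exactly the co-coercivity bound.

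Once co-coercivity is in hand, the rest is the one-line substitution shown above, so no further technical work is required. I expect the presentation in the paper may simply cite this as a textbook fact (indeed the statement already references~\cite{DBLP:conf/focs/FeldmanMTT18}), but the two-step structure above --- (i) algebraic expansion, (ii) co-coercivity via the descent lemma --- is the most self-contained route and is the approach I would adopt.
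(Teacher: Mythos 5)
Your proposal is correct and follows essentially the same route as the paper: expand $\|\varphi(x)-\varphi(x')\|^2$ and absorb the cross term via the co-coercivity inequality $\langle \nabla f(x)-\nabla f(x'),\,x-x'\rangle \geq \eta\,\|\nabla f(x)-\nabla f(x')\|^2$. The only difference is that the paper simply cites this inequality (Theorem 4.2.2 of Hiriart-Urruty and Lemar\'echal), whereas you additionally sketch a correct self-contained derivation of it, which is harmless but not required.
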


\begin{proof}
The proof follows readily from the following
inequality~\cite[Theorem 4.2.2]{hiriart2013convex}
that holds because $f$ is $\frac{1}{\eta}$-smooth and convex:
$
\langle\nabla f(x)-\nabla(x'),x-x'\rangle\geq \eta \|\nabla f(x)-\nabla f(x')\|^2. $

Applying this inequality below, we have:
\begin{align}
&\|\varphi(x)-\varphi(x')\|^2=\|x-x'\|^2-2\eta\langle \nabla f(x)-\nabla f(x'),
x-x'\rangle + \eta^2\|\nabla f(x)-\nabla f(x')\|^2 \notag \\
& \leq \|x-x'\|^2 - \eta^2 \|\nabla f(x)-\nabla f(x')\|^2\leq \|x-x'\|^2, \notag
\end{align}
as required.
\end{proof}

\begin{lemma}[ADMM Iteration is Non-expansive with Customized Norm]
\label{lemma:ADMM_exp}
Suppose one ADMM iteration in Algorithm~\ref{alg:one_iteration}
is applied to two different inputs $(x_t, \lambda_t)$
and $(x'_t, \lambda'_t)$ with the same
function $f$ that is convex and $\frac{1}{\eta}$-smooth.
Then, the corresponding two outputs
$(x_{t+1}, \lambda_{t+1})$
and $(x'_{t+1}, \lambda'_{t+1})$
satisfy:

$$ \|(x_{t+1} - x'_{t+1}, \lambda_{t+1} - \lambda'_{t+1})\|^2_*
\leq
\|(x_{t} - x'_{t}, \lambda_{t} - \lambda'_{t})\|^2_*.$$
\end{lemma}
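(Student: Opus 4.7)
The plan is to work with the auxiliary variable $\mu_t := \lambda_t - \beta A x_t$ throughout the argument, since in these coordinates the customized norm reads $\|(x_t,\lambda_t)-(x'_t,\lambda'_t)\|_*^2 = \|\Delta x_t\|^2 + \frac{\eta}{\beta}\|\Delta \mu_t\|^2$, where $\Delta$ abbreviates the difference between the two scenarios. Combining the $\lambda$-update with this definition yields the clean recursion $\Delta \mu_{t+1} = \Delta \mu_t - \beta B \Delta y_t - \beta A \Delta x_{t+1}$, which I will use whenever expanding the $\|\Delta \mu\|^2$ terms.

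I would then expand both $\|\Delta x_t\|^2 - \|\Delta x_{t+1}\|^2$ and $\frac{\eta}{\beta}(\|\Delta \mu_t\|^2 - \|\Delta \mu_{t+1}\|^2)$ via polarization identities, substituting into the $x$-expansion the formula for $\Delta x_t - \Delta x_{t+1}$ obtained by subtracting the two instances of Lemma~\ref{lemma:step1}, which after simplification reads $\Delta x_t - \Delta x_{t+1} = \eta(\nabla f(x_t)-\nabla f(x'_t)) - \eta A^{\top}(\Delta \mu_t - \beta A \Delta x_{t+1})$. When the two expansions are summed, the cross term $\pm 2\eta\langle A \Delta x_{t+1}, \Delta \mu_t\rangle$ cancels exactly; this cancellation is the algebraic reason the norm in Definition~\ref{defn:norm} carries the weight $\frac{\eta}{\beta}$ together with the shift $\beta A x$ inside the $\lambda$-slot. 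Next, to dispose of the cross term $2\eta\langle \Delta \mu_t, B \Delta y_t\rangle$ that arises from the $\mu$-expansion, I would invoke Lemma~\ref{lemma:step2} symmetrically in the two scenarios --- plugging $y=y'_t$ into the inequality for the unprimed scenario and $y=y_t$ into the inequality for the primed one and adding --- to obtain $\langle \Delta \mu_t, B \Delta y_t\rangle \geq \beta\|B \Delta y_t\|^2$, which is a disguised form of the monotonicity of $\partial g$. This turns the leftover $-\eta\beta\|B \Delta y_t + A \Delta x_{t+1}\|^2$ from the $\mu$-expansion, together with the $\|A\Delta x_{t+1}\|^2$ contribution coming from the $A^{\top}A$ term in the $x$-expansion, into the non-negative square $\eta\beta\|A \Delta x_{t+1} - B \Delta y_t\|^2$.

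The last term I would need to dispatch is $2\eta\langle \Delta x_{t+1}, \nabla f(x_t) - \nabla f(x'_t)\rangle$, which inconveniently carries the subscript $t+1$ rather than $t$ on the $x$-argument. I would split $\Delta x_{t+1} = \Delta x_t - (\Delta x_t - \Delta x_{t+1})$, bound the good piece from below by the cocoercivity inequality $\langle \Delta x_t, \nabla f(x_t) - \nabla f(x'_t)\rangle \geq \eta\|\nabla f(x_t) - \nabla f(x'_t)\|^2$ that already underlies Fact~\ref{fact:gd}, and absorb the mismatch by AM-GM into $\|\Delta x_t - \Delta x_{t+1}\|^2 + \eta^2\|\nabla f(x_t) - \nabla f(x'_t)\|^2$; the $\|\Delta x_t - \Delta x_{t+1}\|^2$ already sitting in the $x$-expansion is exactly what swallows this error. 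What remains is the manifestly non-negative residual $\eta^2\|\nabla f(x_t) - \nabla f(x'_t)\|^2 + \eta\beta\|A \Delta x_{t+1} - B \Delta y_t\|^2$, which delivers the claim. The main obstacle I anticipate is the choreography of cancellations --- choosing the right polarization identities and the correct specialization of Lemma~\ref{lemma:step2} so that every cross term either cancels or completes a square --- and the realization that the weighting $\frac{\eta}{\beta}$ between $\|\Delta x\|^2$ and $\|\Delta \mu\|^2$ is rigidly dictated by this cancellation pattern rather than being a free parameter. The non-negative slack isolated in the last step is also precisely the handle I would expect to exploit later in Section~\ref{sec:strongly_convex} to upgrade this non-expansion to a strict contraction under strong convexity.
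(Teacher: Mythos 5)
Your overall strategy is viable and, once the algebra is carried out correctly, it proves the lemma with an explicit non-negative slack, which is in fact slightly stronger than what the paper's proof records. The ingredients are the same as in the paper: the monotonicity of $\partial g$ extracted from the optimality of $y_t$ (your inequality $\langle \Delta\mu_t, B\Delta y_t\rangle \ge \beta\|B\Delta y_t\|^2$ is exactly the paper's $\langle \Delta\lambda_{t+1}, B\Delta y_t\rangle \ge 0$ rewritten), the first-order condition of Lemma~\ref{lemma:step1}, and the cocoercivity of $\nabla f$ that underlies Fact~\ref{fact:gd}. The execution differs, though: the paper assembles a single inner-product inequality and closes with two applications of Cauchy--Schwarz (in the form $P+Q\le\sqrt{P}\sqrt{R}+\sqrt{Q}\sqrt{S}\le\sqrt{P+Q}\,\sqrt{R+S}$), whereas you expand the two squared norms, cancel the $\langle A\Delta x_{t+1},\Delta\mu\rangle$ cross term (this cancellation is indeed what pins down the weight $\frac{\eta}{\beta}$), and complete squares, ending with a quantitative residual---a style closer to what the paper only deploys later for the strongly convex case.

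However, the displayed difference formula you intend to substitute is wrong, and taken literally it breaks the choreography you describe. Subtracting the two instances of Lemma~\ref{lemma:step1} gives $\Delta x_t-\Delta x_{t+1}=\eta\,\Delta\nabla f(x_t)-\eta A^{\top}\bigl(\Delta\lambda_{t+1}-\beta(A\Delta x_{t+1}+B\Delta y_t)\bigr)=\eta\,\Delta\nabla f(x_t)-\eta A^{\top}\bigl(\Delta\mu_{t+1}-\beta B\Delta y_t\bigr)$, whereas your expression $\Delta\mu_t-\beta A\Delta x_{t+1}$ equals $\Delta\mu_{t+1}+\beta B\Delta y_t$, i.e.\ it is off by $2\beta B\Delta y_t$ inside $A^{\top}(\cdot)$. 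With your version, after the $\langle A\Delta x_{t+1},\Delta\mu\rangle$ cancellation a sign-indefinite multiple of $\langle A\Delta x_{t+1},B\Delta y_t\rangle$ survives that the monotonicity of $\partial g$ cannot absorb, so the argument would not close. With the corrected formula everything you outline does go through: the cross term that cancels exactly is the one against $\Delta\mu_{t+1}$ (if you expand around the $t+1$ quantities), the completed square comes out as $\eta\beta\|A\Delta x_{t+1}+B\Delta y_t\|^2$ (plus sign), and the cocoercivity-plus-AM--GM treatment of $2\eta\langle\Delta x_{t+1},\Delta\nabla f(x_t)\rangle$ leaves the non-negative residual $\eta^2\|\Delta\nabla f(x_t)\|^2+\eta^2\|A^{\top}(\Delta\mu_{t+1}-\beta B\Delta y_t)\|^2+\eta\beta\|A\Delta x_{t+1}+B\Delta y_t\|^2$. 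So this is a repairable algebraic slip rather than a conceptual gap, but it sits precisely in the identity on which the whole cancellation pattern hinges, so it must be fixed before the proof is valid.
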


\begin{proof}
    During this proof we will often consider the differences between two collections of variables
    that are distinguished using superscripts, e.g., $z$ vs $z'$.
    To simplify the proof, we use $\mathfrak{d} z := z - z'$ to indicate the difference between the
    original variable and the one with the superscript.
    For instance, when we have two variables $x_t$ and $x'_t$, we write $\mathfrak{d} x_t = x_t - x'_t$.
    Also Fact~\ref{fact:gd} can be rewritten as $\|\mathfrak{d} \varphi(x)\|\leq \|\mathfrak{d}x\|$.

For the input scenario $(x_t, \lambda_t)$,
recall that Algorithm~\ref{alg:one_iteration} using
function~$f$ consists of the following calculation.

\begin{align}
\ts    y_{t} :=& \arg\min_{y \in \R^\ell} \left\{g(y) + \frac{\beta}{2}\left\| Ax_{t} + By - c - \frac{1}{\beta}\lambda_t \right\|^2 \right\} \label{eq:inequalities:update_y}\\
 \ts   \lambda_{t+1} :=& \lambda_t - \beta \left( Ax_{t} + By_{t} - c \right) \label{eq:inequalities:update_lambda}\\
 \ts   x_{t+1} :=& \arg\min_{x \in \R^n} \left\{ f(x_t) + \langle \nabla f(x_t), x - x_t \rangle + \frac{1}{2\eta} \left\| x - x_t \right\|^2 + \frac{\beta}{2}\left\| Ax + By_t - c - \frac{1}{\beta}\lambda_{t+1} \right\|^2 \right\} \label{eq:inequalities:update_x}
\end{align}

Similarly, quantities $y'_t$, $\lambda'_{t+1}$ and $x'_{t+1}$
are defined for the input scenario $(x'_t, \lambda'_t)$.

\noindent \textbf{Inequalities from the Optimality of $y_t$.}

As in Lemma~\ref{lemma:step2}, the optimality for choosing $y_t$ in \eqref{eq:inequalities:update_y} implies the following.

\begin{align}
    0 \in \partial g(y_t) + \beta B^\top\left( Ax_{t} + By_t - c - \frac{1}{\beta}\lambda_t \right)\notag\\
    0 \in \partial g(y_t') + \beta B^\top\left( Ax_{t}' + By_t' - c - \frac{1}{\beta}\lambda_t' \right) \notag
\end{align}

Since $g$ is convex,
we have the monotonicity of subgradient,
i.e., $s \in \partial g(y_t)$
and $s' \in \partial g(y'_t)$ implies that
$\langle s - s', y_t - y'_t \rangle \geq 0$.
After rearranging and using \eqref{eq:inequalities:update_lambda}, we have

\begin{align}
\left\langle \mathfrak{d}\lambda_{t+1}, B\mathfrak{d}y_t \right\rangle =
\left\langle \mathfrak{d}\lambda_t - \beta(A\mathfrak{d}x_t + B\mathfrak{d}y_t), B\mathfrak{d}y_t \right\rangle \ge 0 \label{eq:inequalities:subgradient_mononicity}
\end{align}

\noindent \textbf{Inequalities from the Optimality of $x_{t+1}$.}

From Lemma~\ref{lemma:step1},
the optimality of $x_{t+1}$ from \eqref{eq:inequalities:update_x} implies
the following.

\begin{align}
    \left\langle \nabla f(x_t) + \frac{1}{\eta} (x_{t+1} - x_t) + \beta A^\top \left( Ax_{t+1} + By_t - c - \frac{1}{\beta}\lambda_{t+1} \right), x - x_{t+1} \right\rangle = 0, & \forall x \in \R^n  \label{eq:inequalities:first-order-x}\\
    \left\langle \nabla f(x_t') + \frac{1}{\eta} (x_{t+1}' - x_t') + \beta A^\top \left( Ax_{t+1}' + By_t' - c - \frac{1}{\beta}\lambda_{t+1}' \right), x - x_{t+1}' \right\rangle = 0, & \forall x \in \R^n  \label{eq:inequalities:first-order-x'}
\end{align}

Substituting $x = x_{t+1}'$ in \eqref{eq:inequalities:first-order-x} and $x = x_{t+1}$ in \eqref{eq:inequalities:first-order-x'}, \eqref{eq:inequalities:first-order-x} $+$ \eqref{eq:inequalities:first-order-x'} and rearranging give:

\begin{align}
    \begin{split}
        \left\langle \mathfrak{d}x_t - \eta \mathfrak{d}\nabla f(x_t) , \mathfrak{d}x_{t+1} \right\rangle - \left\| \mathfrak{d}x_{t+1} \right\|^2
        + \eta \beta \left\langle \frac{1}{\beta} \mathfrak{d}\lambda_{t+1} - \mathfrak{d}Ax_{t+1} - \mathfrak{d}By_t, A\mathfrak{d}x_{t+1}\right\rangle = 0
    \end{split}\label{eq:inequalities:update-x-add-update-x'}
\end{align}

We first analyze the third term from \eqref{eq:inequalities:update-x-add-update-x'}.

\begin{align}
    & \left\langle \frac{1}{\beta} \mathfrak{d}\lambda_{t+1} - \mathfrak{d}Ax_{t+1} - \mathfrak{d}By_t, A\mathfrak{d} x_{t+1} \right\rangle \notag \\
    =& \left\langle \frac{1}{\beta} \mathfrak{d}\lambda_{t+1} - \mathfrak{d}Ax_{t+1} , \mathfrak{d}Ax_{t+1} + \mathfrak{d}By_t\right\rangle \notag
    - \left\langle \frac{1}{\beta} \mathfrak{d}\lambda_{t+1}, \mathfrak{d}By_t\right\rangle \notag \\
    \le& \left\langle \frac{1}{\beta} \mathfrak{d}\lambda_{t+1} - \mathfrak{d}Ax_{t+1}, \mathfrak{d}Ax_{t+1} + \mathfrak{d}By_t \right\rangle \tag{using \eqref{eq:inequalities:subgradient_mononicity}} \\
    =& \left\langle \frac{1}{\beta} \mathfrak{d}\lambda_{t+1}- \mathfrak{d}Ax_{t+1}, \frac{1}{\beta} \mathfrak{d}\lambda_{t+1}  + \mathfrak{d}By_t\right\rangle \notag
    - \left\|\frac{1}{\beta} \mathfrak{d}\lambda_{t+1} - \mathfrak{d}Ax_{t+1}\right\|^2 \notag\\
    =& \left\langle \frac{1}{\beta} \mathfrak{d}\lambda_{t+1} - \mathfrak{d}Ax_{t+1}, \frac{1}{\beta} \mathfrak{d}\lambda_{t} - \mathfrak{d}Ax_{t}\right\rangle \tag{using \eqref{eq:inequalities:update_lambda}}
    - \left\|\frac{1}{\beta} \mathfrak{d}\lambda_{t+1}- \mathfrak{d}Ax_{t+1}\right\|^2 \notag
\end{align}

Together with \eqref{eq:inequalities:update-x-add-update-x'}, we have
\begin{align}
    &\left\langle \mathfrak{d}x_t -  \eta\mathfrak{d}\nabla f(x_t) , \mathfrak{d}x_{t+1}\right\rangle \notag
    +  \eta \beta \left\langle \frac{1}{\beta} \mathfrak{d}\lambda_{t+1} - \mathfrak{d}Ax_{t+1}, \frac{1}{\beta} \mathfrak{d}\lambda_{t} - \mathfrak{d}Ax_{t}\right\rangle \\
    &\ge \left\| \mathfrak{d}x_{t+1}\right\|^2 + \eta \beta \left\|\frac{1}{\beta} \mathfrak{d}\lambda_{t+1} - \mathfrak{d}Ax_{t+1}\right\|^2 \label{eq:int2}
\end{align}

Using the Cauchy-Schwarz Inequality and Fact~\ref{fact:gd} that $\left\| \mathfrak{d}x_t - \eta \mathfrak{d}\nabla f(x_t) \right\| \le \left\| \mathfrak{d}x_t\right\|$ (because
$f$ is $\frac{1}{\eta}$-smooth), we have an upper bound
for the first term in \eqref{eq:int2}:
\begin{align}
    \left\| \mathfrak{d}x_{t+1} \right\|^2 + \eta\beta \left\|\frac{1}{\beta} \mathfrak{d}\lambda_{t+1}- \mathfrak{d}Ax_{t+1}\right\|^2
    \le \left\| \mathfrak{d}x_{t}\right\| \cdot \left\| \mathfrak{d}x_{t+1}\right\| + \eta\beta \left\|\frac{1}{\beta} \mathfrak{d}\lambda_{t} - \mathfrak{d}Ax_{t}\right\| \cdot \left\|\frac{1}{\beta} \mathfrak{d}\lambda_{t+1} - \mathfrak{d}Ax_{t+1}\right\| \notag
\end{align}

\noindent Denoting $P = \left \| \mathfrak{d}x_{t+1}\right\|^2$,
$Q = \eta\beta \left\|\frac{1}{\beta} \mathfrak{d}\lambda_{t+1} - \mathfrak{d}Ax_{t+1}\right\|^2$,
$R = \left\| \mathfrak{d}x_{t}\right\|^2$, and
$S = \eta\beta \left\|\frac{1}{\beta} \mathfrak{d}\lambda_{t} - \mathfrak{d}Ax_{t}\right\|^2$,
the above inequality becomes
$$P + Q \leq \sqrt{P} \cdot \sqrt{R} + \sqrt{Q} \cdot \sqrt{S}.$$

Finally, applying the Cauchy-Schwarz inequality again,
we have

$$\sqrt{P} \cdot \sqrt{R} + \sqrt{Q} \cdot \sqrt{S} \leq
\sqrt{P + Q}  \cdot \sqrt{R + S}.$$

This gives the required inequality:

$$ \left\|(\mathfrak{d}x_{t+1} , \mathfrak{d}\lambda_{t+1} ) \right\|^2_* =
P+Q
\leq R + S =
\left\|(\mathfrak{d}x_{t}, \mathfrak{d}\lambda_{t})\right\|^2_*.$$

\ignore{
Using Cauchy Schwarz Inequality again, we obtain one of the most important inequalities in this paper:

\begin{align}
    \left\| x_{t+1}' - x_{t+1} \right\|^2 + \eta\beta \left\|\frac{1}{\beta} \lambda_{t+1}' - Ax_{t+1}' - \frac{1}{\beta} \lambda_{t+1} + Ax_{t+1}\right\|^2 \le \left\| x_{t}' - x_{t} \right\|^2 + \eta\beta \left\|\frac{1}{\beta} \lambda_{t}' - Ax_{t}' - \frac{1}{\beta} \lambda_{t} + Ax_{t}\right\|^2 \label{eq:inequalities:upper-bound-updating-x-lambda}
\end{align}
}

\end{proof}

\subsection{Achieving One-Step Privacy}
\label{sec:one_step_privacy}

As described in Section~\ref{sec:challenges},
the randomness in one noisy ADMM iteration is not sufficient
to achieve one-step privacy (condition~(B) in
Theorem~\ref{th:privacy_amp})
because only the $x$ component of the output
of Algorithm~\ref{alg:one_iteration} is masked with noise,
while the $\lambda$ component is totally exposed.

\noindent \textbf{Incorporating two noisy ADMM iterations
into a single Markov Operator.} Our novel idea
is to let each Markov operator represent two
ADMM iterations.  For instance, an operator $K \in \mcal{K}$
in the collection corresponds to
iterations~$t+1$ and $t+2$,
which use two $\frac{1}{\eta}$-smooth convex functions $f_{t+1}$ and
$f_{t+2}$, respectively.  Given some input $(\widetilde{x}_t, \lambda_t)$,
the application $K(\widetilde{x}_t, \lambda_t)$ of the operator $K$
may be described with the following randomized process
(whose source of randomness is two independent copies of Gaussian noise with some appropriate variance $\sigma^2$).

\begin{compactitem}

\item[1.] Apply Algorithm~\ref{alg:one_iteration}
with input $(\widetilde{x}_t, \lambda_t)$ and function $f_{t+1}$
to produce $(x_{t+1}, \lambda_{t+1})$.

\item[2.] Sample fresh $N_{t+1}$ from Gaussian $\mcal{N}(0, \sigma^2 \I_n)$
to produce masked $\widetilde{x}_{t+1} \gets x_{t+1} + N_{t+1}$.

\item[3.] Apply Algorithm~\ref{alg:one_iteration}
with input $(\widetilde{x}_{t+1}, \lambda_{t+1})$ and function $f_{t+2}$
to produce $(x_{t+2}, \lambda_{t+2})$.

\item[4.] Sample fresh $N_{t+2}$ from Gaussian $\mcal{N}(0, \sigma^2 \I_n)$
to produce masked $\widetilde{x}_{t+2} \gets x_{t+2} + N_{t+2}$.

\item[5.] Return the pair $(\widetilde{x}_{t+2}, \lambda_{t+2})$.

\end{compactitem}

Recall when we compare two random processes that both sample
$(N_t, N_{t+1})$ from the same distribution,
the natural coupling refers to sharing the same sampled $(N_t, N_{t+1})$
in the two processes.

\noindent \textbf{Proof Setup.}
Given two input scenarios $(\widetilde{x}_t, \lambda_t)$
and $(\widetilde{x}'_t, \lambda'_t)$,
our goal is to derive an upperbound for
the divergence $\Dz(K(\widetilde{x}_t, \lambda_t) \| K(\widetilde{x}'_t, \lambda'_t))$
that is defined in Section~\ref{sec:privacy_prelim}.

Recall that in Remark~\ref{remark:transform},
we have transformed the problem by Gaussian elimination such that
$A = [\I_m D]$ for some $m \times (n-m)$ matrix $D$.
In our privacy analysis, we do not actually need to use the randomness
of all $n$ coordinates of $N_{t+1}$ (but we
still need all $n$ coordinates of $N_{t+2}$).
We use $U_{t+1} \sim \mcal{N}(0, \sigma^2 \I_m)$ to represent
the first $m$ coordinates of $N_{t+1}$.
In both scenarios, we will fix the last $n-m$ coordinates
of $N_{t+1}$ and denote this common part as $\mathfrak{z}  \in \R^{n-m}$
(which is no longer random).
By Fact~\ref{fact:zCDP}(b), any uniform upperbound on the $\Dz$-divergence
after conditioning on $\mathfrak{z}$ will also be an upperbound
for the original divergence.
Observe that $A N_{t+1} = U_{t+1} + D \mathfrak{z}$.

\noindent \textbf{Expressing Markov Operator $K$ as
an Adaptive Composition of Two Private Mechanisms.}
Instead of directly working with probability density
function of $(\widetilde{x}_{t+2}, \lambda_{t+2})$  in the analysis of
$\Dz$-divergence, we will use properties of
$\Dz$-divergence in Fact~\ref{fact:zCDP},
whose proofs in the literature have already incorporated
the technical manipulation of integrals.
Our approach is to analyze the divergence in the language
of adaptive composition of private mechanisms
with which most readers have some familiarity.

\noindent \emph{How to decompose $(\widetilde{x}_{t+2}, \lambda_{t+2})$?}
No matter whether one wants to directly analyze the probability
density functions or make use of adaptive composition,
one technical hurdle is that given one component of the pair
$(\widetilde{x}_{t+2}, \lambda_{t+2})$,
the conditional distribution of the other component is not easy to analyze.
In the language of mechanism composition,
this means that it is complicated to describe a randomized mechanism
that takes one given component and returns the other component (even with
access to the original input $(\widetilde{x}_{t}, \lambda_{t})$).

From the input to $\mcal{G}$ in Lemma~\ref{lemma:step2} and
the customized norm of Definition~\ref{defn:norm},
one might guess that it is convenient
to work with another solution space via a bijective mapping $(x, \lambda) \leftrightarrow (x, w)$, where $\lambda$ and $w$ are linked by $w = \lambda - \beta A x$.
Indeed, it is possible to rephrase an ADMM iteration in Algorithm~\ref{alg:one_iteration}
using the space of $(x, w)$ variables.
However, in this case, in one noisy ADMM iteration taking input
$(\widetilde{x}_t, \widetilde{w}_t)$,
both components in the output
$(\widetilde{x}_{t+1}, \widetilde{w}_{t+1})$ will
share the sampled randomness $N_{t+1}$.
Conceptually, it would be slightly more indirect to explain
why one noisy ADMM iteration is not sufficient to achieve one-step privacy.

Therefore, we decide to mainly work with variables in the $(x, \lambda)$ space.
For the composition, it actually suffices
to consider an intermediate variable ${w}_{t+1} = \lambda_{t+1} - \beta A x_{t+1}$
and its masked variant
$\widetilde{w}_{t+1} = \lambda_{t+1} - \beta A (x_{t+1} + N_{t+1})$.
We shall see in Algorithm~\ref{alg:m2}
that $\lambda_{t+2}$ is a deterministic
function of $\widetilde{w}_{t+1}$.
Hence, by the data processing inequality in Fact~\ref{fact:zCDP}(a),
it suffices to analyze a (randomized) composition that
takes input $(\widetilde{x}_t, \lambda_t)$ and returns
the pair $(\widetilde{w}_{t+1}, \widetilde{x}_{t+2})$.

\noindent \emph{Adaptive Composition.}  Recall
that we are analyzing the Markov operator $K$
while conditioning on the last $n-m$ coordinates
$\mathfrak{z}$ of $N_{t+1}$.  We will paraphrase $K$
as an adaptive composition
of two (randomized) mechanisms
$\mcal{M}_1: \R^n \times \R^m \rightarrow \R^m$
and $\mcal{M}_2: (\R^n \times \R^m) \times \R^m \rightarrow \R^n$.
Given some input $(\widetilde{x}_t, \lambda_t)$, the composition
$(\mcal{M}_2 \circ \mcal{M}_1)(\widetilde{x}_t, \lambda_t)$ is interpreted as the following random
process.

\begin{compactitem}
\item[1.] In Algorithm~\ref{alg:m1}, sample fresh randomness $U_{t+1}$ in
$\mcal{M}_1(\widetilde{x}_t, \lambda_t)$ to get $\widetilde{w}_{t+1}$.

\item[2.] In Algorithm~\ref{alg:m2}, sample fresh randomness $N_{t+2}$ in $\mcal{M}_2(\widetilde{x}_t, \lambda_t, \widetilde{w}_{t+1})$ to get $\widetilde{x}_{t+2}$.

\item[3.] Return the pair $(\widetilde{w}_{t+1}, \widetilde{x}_{t+2})$.
\end{compactitem}

\begin{lemma}[Equivalence of Random Processes]
\label{lemma:equiv}
For the common input~$(\widetilde{x}_t, \lambda_t)$,
by considering the natural coupling (i.e.,
using the same randomness $U_{t+1}$ and $N_{t+2}$),
the output $(\widetilde{w}_{t+1}, \widetilde{x}_{t+2})$
returned by the composition
$(\mcal{M}_2 \circ \mcal{M}_1)(\widetilde{x}_t, \lambda_t)$
can be transformed deterministically
to the output $(\widetilde{x}_{t+2}, \lambda_{t+2})$
returned by $K(\widetilde{x}_t, \lambda_t)$.
\end{lemma}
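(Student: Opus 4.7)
The plan is to trace the two random processes side by side under the natural coupling (same $U_{t+1}$ and $N_{t+2}$, with the common fixed tail $\mathfrak{z} \in \R^{n-m}$ of $N_{t+1}$), exhibit their shared intermediate quantities, and then write down an explicit deterministic post-processing map from $(\widetilde{w}_{t+1}, \widetilde{x}_{t+2})$ to $(\widetilde{x}_{t+2}, \lambda_{t+2})$. The crucial structural observation, foreshadowed by the paragraph introducing $\widetilde{w}_{t+1}$, is that once $\widetilde{w}_{t+1} := \lambda_{t+1} - \beta A \widetilde{x}_{t+1}$ is known, the value $\lambda_{t+2}$ is a deterministic function of $\widetilde{w}_{t+1}$ alone: by Lemma~\ref{lemma:step2}, step~3 of $K$ (applying Algorithm~\ref{alg:one_iteration} to $(\widetilde{x}_{t+1}, \lambda_{t+1})$) first computes $y_{t+1} = \mcal{G}(\widetilde{w}_{t+1})$ and then sets $\lambda_{t+2} = \lambda_{t+1} - \beta(A\widetilde{x}_{t+1} + By_{t+1} - c) = \widetilde{w}_{t+1} - \beta(B\mcal{G}(\widetilde{w}_{t+1}) - c)$.

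First I would verify that, under the shared randomness, Algorithm~\ref{alg:m1} (defining $\mcal{M}_1$) reproduces steps~1--2 of $K$: it applies Algorithm~\ref{alg:one_iteration} deterministically to $(\widetilde{x}_t, \lambda_t)$ with $f_{t+1}$ to obtain $(x_{t+1}, \lambda_{t+1})$, adds the noise $N_{t+1}$ assembled from the sampled $U_{t+1}$ and the fixed tail $\mathfrak{z}$ to produce $\widetilde{x}_{t+1} = x_{t+1} + N_{t+1}$, and outputs $\widetilde{w}_{t+1} = \lambda_{t+1} - \beta A \widetilde{x}_{t+1}$. Second I would check that $\mcal{M}_2$, on input $(\widetilde{x}_t, \lambda_t, \widetilde{w}_{t+1})$, rebuilds $\widetilde{x}_{t+1}$ by rerunning the same deterministic computation (this is legitimate since $(\widetilde{x}_t, \lambda_t)$ and the same $U_{t+1}$, $\mathfrak{z}$ are available through the shared randomness), recovers $y_{t+1} = \mcal{G}(\widetilde{w}_{t+1})$ and $\lambda_{t+2}$ as displayed above, invokes $\mcal{F}^{f_{t+2}}(\widetilde{x}_{t+1}, y_{t+1}, \lambda_{t+2})$ to get $x_{t+2}$, and finally adds the shared $N_{t+2}$ to obtain precisely the same $\widetilde{x}_{t+2}$ as step~4 of $K$.

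Combining these two agreements, the deterministic map
\[
 (\widetilde{w}_{t+1}, \widetilde{x}_{t+2}) \longmapsto \bigl(\widetilde{x}_{t+2},\ \widetilde{w}_{t+1} - \beta(B\mcal{G}(\widetilde{w}_{t+1}) - c)\bigr)
\]
sends the coupled output of $(\mcal{M}_2 \circ \mcal{M}_1)(\widetilde{x}_t, \lambda_t)$ to the coupled output of $K(\widetilde{x}_t, \lambda_t)$, as required. The main obstacle is not analytical but bookkeeping: one must be careful that the intermediate variable revealed by $\mcal{M}_1$ is exactly $\widetilde{w}_{t+1}$ and not, say, $\lambda_{t+1}$ alone, because it is precisely $\widetilde{w}_{t+1}$ that simultaneously (i) determines $\lambda_{t+2}$ through the deterministic map $\Phi(w) := w - \beta(B\mcal{G}(w) - c)$, and (ii) supplies $\mcal{M}_2$ with enough information to reconstruct $y_{t+1}$ (and thence $x_{t+2}$) without introducing any additional dependence on the first-iteration Gaussian noise beyond what is already encoded. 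This clean separation is exactly what enables the downstream adaptive-composition bound on $\Dz(K(\widetilde{x}_t,\lambda_t)\,\|\,K(\widetilde{x}'_t,\lambda'_t))$ via Fact~\ref{fact:zCDP}(c).
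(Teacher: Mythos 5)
Your argument is correct and takes essentially the same route as the paper's proof: trace the two processes under the natural coupling, note that the first (deterministic) lines of $\mcal{M}_1$ and $\mcal{M}_2$ coincide with the first noisy ADMM iteration of $K$, and use the fact that $\lambda_{t+2} = \widetilde{w}_{t+1} - \beta\left(B\mcal{G}(\widetilde{w}_{t+1}) - c\right)$ as the explicit deterministic post-processing map. One wording caution: inside $\mcal{M}_2$ the noise $U_{t+1}$ is not ``available through the shared randomness'' but is reconstructed from $\widetilde{w}_{t+1}$ (together with the recomputed $w_{t+1}$ and the fixed tail $\mathfrak{z}$) via line~\ref{ln:rand_reconstruct} of Algorithm~\ref{alg:m2} --- the reconstruction being exact under the coupling is precisely the check the paper performs, and it is what keeps $\mcal{M}_2$ a function of its declared input, which matters for the subsequent adaptive-composition step.
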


\begin{proof}
Observe that Algorithm~\ref{alg:m2} simulates two ADMM iterations starting
from input $(\widetilde{x}_t, \lambda_t)$,
except that $U_{t+1}$ is determined by $\widetilde{w}_{t+1}$,
instead of sampling from $\mcal{N}(0, \sigma^2 \I_m)$.
On the other hand, $N_{t+2}$ is sampled from $\mcal{N}(0, \sigma^2 \I_n)$
in the same way as the operator $K$.

However, observe that the component $\widetilde{w}_{t+1}$
of the input to Algorithm~\ref{alg:m2} is generated
by Algorithm~\ref{alg:m1}, in which the fresh copy $U_{t+1}$
is sampled from $\mcal{N}(0, \I_m)$.
Therefore, it suffices to check that line~\ref{ln:rand_reconstruct}
of Algorithm~\ref{alg:m2} indeed ``reconstructs''
the randomness $U_{t+1}$ generated in Algorithm~\ref{alg:m1} correctly.
Note that the first 4 lines of Algorithms~\ref{alg:m1}
and~\ref{alg:m2} are identical, which means
that the 4 associated variables are also the same across
both algorithms.

Line~\ref{ln:output_m1} of Algorithm~\ref{alg:m1}
implies that the sampled randomness $U_{t+1}$ satisfies:

$\widetilde{w}_{t+1} = w_{t+1} - \beta D \mathfrak{z} -  \beta U_{t+1}$,
which is consistent with line~\ref{ln:rand_reconstruct}
of Algorithm~\ref{alg:m2}.

Therefore, by considering the natural coupling via $(U_{t+1}, N_{t+2})$,
it follows that
$(\mcal{M}_2 \circ \mcal{M}_1)(\widetilde{x}_t, \lambda_t)$
can simulate $K(\widetilde{x}_t, \lambda_t)$.
\end{proof}

\noindent \emph{Neighboring Inputs.}  For readers
that are more familiar with privacy composition proofs,
it suffices to consider $(\widetilde{x}_t, \lambda_t)$
and $(\widetilde{x}'_t, \lambda'_t)$ as the only possible neighboring inputs
when we analyze the composition $\mcal{M}_2 \circ \mcal{M}_1$.

\begin{algorithm}
    \caption{Mechanism $\M_1$}
    \label{alg:m1}

    \KwIn{
        $(\widetilde{x}_t, \lambda_t)$ and fixing the last
				$n-m$ coordinates of $N_{t+1}$ to be $\mathfrak{z}$
	
    }

    \KwOut{$\widetilde{w}_{t+1}$
    }

        $y_{t} \gets \mcal{G}(\lambda_{t} - \beta A \widetilde{x}_{t})$

        $\lambda_{t+1} \gets \lambda_{t} - \beta (A \widetilde{x}_{t} + B y_{t} - c)$

        $x_{t+1} \gets \mcal{F}^{\nabla f_{t+1}}(\widetilde{x}_{t}, y_{t},\lambda_{t+1})$
        \hfill	\Comment{first 3 lines same as Algorithm~\ref{alg:one_iteration}}


        $w_{t+1} \gets \lambda_{t+1} - \beta A{x}_{t+1} $

				Sample fresh $U_{t+1}$ from $\mcal{N}(0, \sigma^2 \I_m)$.

    \KwRet{
        $\widetilde{w}_{t+1} \gets w_{t+1} - \beta D \mathfrak{z} -  \beta U_{t+1}$
    } \label{ln:output_m1}

\end{algorithm}

\begin{algorithm}
    \caption{Mechanism $\M_2$}
    \label{alg:m2}

    \KwIn{
        $(\widetilde{x}_t, \lambda_t, \widetilde{w}_{t+1})$
				and fixing the last
				$n-m$ coordinates of $N_{t+1}$ to be $\mathfrak{z}$
    }

    \KwOut{
        $\widetilde{x}_{t+2}$
    }

				$y_{t} \gets \mcal{G}(\lambda_{t} - \beta A \widetilde{x}_{t})$

        $\lambda_{t+1} \gets \lambda_{t} - \beta (A \widetilde{x}_{t} + B y_{t} - c)$

        $x_{t+1} \gets \mcal{F}^{\nabla f_{t+1}}(\widetilde{x}_{t}, y_{t},\lambda_{t+1})$

        $w_{t+1} \gets \lambda_{t+1} - \beta A{x}_{t+1} $
				\hfill	\Comment{first 4 lines the same as Algorithm~\ref{alg:m1}}

				$U_{t+1} \gets \frac{1}{\beta} (w_{t+1} -  \widetilde{w}_{t+1}) - D \mathfrak{z}$ \label{ln:rand_reconstruct}
				\hfill \Comment{randomness ``reconstruction''; $A = [\I_m \, D]$}

        $\widetilde{x}_{t+1} \gets x_{t+1} +  (U_{t+1},  \mathfrak{z})$
				\label{ln:newx}
        				\hfill \Comment{$N_{t+1} = (U_{t+1}, \mathfrak{z})$;
								$\widetilde{w}_{t+1} = \lambda_{t+1} - \beta A \widetilde{x}_{t+1}$}


        $y_{t+1} \gets \mcal{G}(\widetilde{w}_{t+1})$
        \hfill	\Comment{2nd iteration of ADMM Algorithm~\ref{alg:one_iteration}
				with input $(\widetilde{x}_{t+1}, \lambda_{t+1})$}

        $\lambda_{t+2} \gets \widetilde{w}_{t+1} - \beta (B y_{t+1} - c)$
				\hfill \Comment{$\lambda_{t+2}$ is a deterministic function of $\widetilde{w}_{t+1}$}

        $x_{t+2} \gets \mcal{F}^{\nabla f_{t+2}}(\widetilde{x}_{t+1}, y_{t+1},\lambda_{t+2})$
        \hfill	\Comment{oracle access to $\nabla f_{t+2}(\cdot)$}

        Sample fresh $N_{t+2}$ from $\mcal{N}(0, \sigma^2\I_n)$.

    \KwRet $\widetilde{x}_{t+2} \gets x_{t+2} + N_{t+2}$

\end{algorithm}

\noindent \textbf{Divergence Analysis.}  Recall that
we consider two inputs $(\widetilde{x}_t, \lambda_t)$
and $(\widetilde{x}'_t, \lambda'_t)$.  Observe that we use a superscript to indicate
variables associated with the second input.

\begin{lemma}[Privacy for $\mcal{M}_1$]
    \label{lemma:m1_privacy}
Given inputs~$(\widetilde{x}_t, \lambda_t)$ and $(\widetilde{x}'_t, \lambda'_t)$,
we have:
		
\begin{align}
        \Dz \left(\M_1(\widetilde{x}_t, \lambda_t) \bigl\| \M_1(\widetilde{x}'_t, \lambda'_t) \right) = \frac{1}{2\sigma^2} \left\| \frac{1}{\beta}\lambda_{t+1} - Ax_{t+1} - \frac{1}{\beta}\lambda'_{t+1} +  Ax'_{t+1} \right\|^2, \notag
    \end{align}
		
where the associated variables are defined in the first 4 lines
of Algorithm~\ref{alg:m1}.
\end{lemma}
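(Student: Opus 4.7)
My plan is to observe that $\M_1$ is essentially a Gaussian mechanism: everything computed before the sampling step is a deterministic function of the input $(\widetilde{x}_t, \lambda_t)$, so the output $\widetilde{w}_{t+1}$ is just a fixed vector plus an independent Gaussian. Concretely, in Algorithm~\ref{alg:m1} the variables $y_t$, $\lambda_{t+1}$, $x_{t+1}$, $w_{t+1}$ are all deterministic functions of $(\widetilde{x}_t, \lambda_t)$ (with $\mathfrak{z}$ being a fixed constant we are conditioning on, and with the canonical choice for $\mcal{G}$ as in Lemma~\ref{lemma:step2}), and the only randomness injected is the fresh $U_{t+1} \sim \mcal{N}(0, \sigma^2 \I_m)$. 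Thus, under input $(\widetilde{x}_t, \lambda_t)$, the output distribution is
\[
\widetilde{w}_{t+1} \sim \mcal{N}\!\left(w_{t+1} - \beta D\mathfrak{z}, \ \beta^2 \sigma^2 \I_m\right),
\]
and similarly under $(\widetilde{x}'_t, \lambda'_t)$ we get the Gaussian centered at $w'_{t+1} - \beta D\mathfrak{z}$ with the same covariance.

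Next, I would apply Fact~\ref{fact:zcdp_gaussian} directly to these two Gaussians of equal covariance $\beta^2 \sigma^2 \I_m$. The $\Dz$-divergence between them equals $\frac{1}{2\beta^2 \sigma^2}$ times the squared norm of the difference of means. Since the common shift $-\beta D \mathfrak{z}$ cancels, this difference is
\[
(w_{t+1} - \beta D\mathfrak{z}) - (w'_{t+1} - \beta D\mathfrak{z}) = w_{t+1} - w'_{t+1} = (\lambda_{t+1} - \beta A x_{t+1}) - (\lambda'_{t+1} - \beta A x'_{t+1}),
\]
which can be rewritten as $\beta \bigl(\tfrac{1}{\beta}\lambda_{t+1} - A x_{t+1} - \tfrac{1}{\beta}\lambda'_{t+1} + A x'_{t+1}\bigr)$. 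Squaring pulls out a factor of $\beta^2$ that cancels against the $\beta^2$ in the denominator, leaving exactly $\frac{1}{2\sigma^2}\bigl\|\tfrac{1}{\beta}\lambda_{t+1} - A x_{t+1} - \tfrac{1}{\beta}\lambda'_{t+1} + A x'_{t+1}\bigr\|^2$, as claimed.

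There is no real obstacle here; this lemma is essentially bookkeeping that packages the Gaussian mechanism's divergence in a form tailored to the customized norm of Definition~\ref{defn:norm}. The only subtle point worth stating explicitly is that we are reporting divergence \emph{conditioned} on the value $\mathfrak{z}$ of the last $n-m$ coordinates of $N_{t+1}$; because the right-hand side of the stated bound does not depend on $\mathfrak{z}$, Fact~\ref{fact:zCDP}(b) will later let us drop the conditioning when we combine this with the privacy analysis of $\M_2$ in the subsequent composition argument.
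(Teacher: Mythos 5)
Your proposal is correct and matches the paper's own proof: both treat $\M_1$ as a Gaussian mechanism whose output is the deterministic value $w_{t+1}-\beta D\mathfrak{z}$ masked by $\beta U_{t+1}\sim\mcal{N}(0,\beta^2\sigma^2\I_m)$, then apply Fact~\ref{fact:zcdp_gaussian} so the $\beta^2$ factors cancel. The extra remark about conditioning on $\mathfrak{z}$ is consistent with the paper's proof setup and introduces nothing new.
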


\begin{proof}
Observe that for the two inputs,
the noise~$\beta U_{t+1}$ has distribution $\mcal{N}(0, \beta^2 \sigma^2 \I_m)$,
which is used to mask the corresponding two values $w_{t+1} - \beta D \mathfrak{z}$
and $w'_{t+1} - \beta D \mathfrak{z}$.

By Fact~\ref{fact:zcdp_gaussian}, we have
    \begin{align}
        \Dz\left(\widetilde{w}_{t+1}\|\widetilde{w}_{t+1}'\right)
				= \frac{1}{2\sigma^2\beta^2} \left\| w_{t+1} -  w'_{t+1} \right\|^2
				= \frac{1}{2\sigma^2\beta^2} \left\| \lambda_{t+1} - \beta Ax_{t+1} - \lambda'_{t+1} + \beta Ax'_{t+1} \right\|^2, \notag
    \end{align}
		
as required.
\end{proof}

\begin{lemma}[Privacy for $\mcal{M}_2$]
    \label{lemma:m2_privacy}
		Given some $\widetilde{w}$ and inputs~$(\widetilde{x}_t, \lambda_t)$ and $(\widetilde{x}'_t, \lambda'_t)$,

    \begin{align}
        \Dz \left(\M_2(\widetilde{x}_t, \lambda_t, \widetilde{w}) \bigl\| \M_2(\widetilde{x}'_t, \lambda'_t, \widetilde{w}) \right) \le \frac{2}{2\sigma^2}\left(\left\| x_{t+1} - x'_{t+1} \right\|^2 + \left\| \frac{1}{\beta}\lambda_{t+1} - Ax_{t+1} - \frac{1}{\beta}\lambda'_{t+1} + Ax'_{t+1} \right\|^2\right). \notag
    \end{align}
\end{lemma}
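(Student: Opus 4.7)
The plan is to first observe that, conditioned on the shared input~$\widetilde{w}$, the only randomness left inside $\M_2$ is the fresh Gaussian noise $N_{t+2} \sim \mcal{N}(0, \sigma^2 \I_n)$ added at the end, because the first six lines of Algorithm~\ref{alg:m2} are deterministic in $(\widetilde{x}_t, \lambda_t, \widetilde{w})$. Hence by Fact~\ref{fact:zcdp_gaussian},
\[
\Dz\!\left(\M_2(\widetilde{x}_t, \lambda_t, \widetilde{w}) \,\big\|\, \M_2(\widetilde{x}'_t, \lambda'_t, \widetilde{w})\right) = \frac{1}{2\sigma^2}\, \| x_{t+2} - x'_{t+2} \|^2,
\]
and the task reduces to bounding $\|x_{t+2} - x'_{t+2}\|^2$ by twice the quantity in the target inequality.

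Next, I will apply the non-expansion Lemma~\ref{lemma:ADMM_exp} to the second ADMM iteration embedded inside Algorithm~\ref{alg:m2}, which takes input $(\widetilde{x}_{t+1}, \lambda_{t+1})$ and produces $(x_{t+2}, \lambda_{t+2})$ via the function $f_{t+2}$. This gives
\[
\|x_{t+2} - x'_{t+2}\|^2 \le \|(x_{t+2} - x'_{t+2},\, \lambda_{t+2} - \lambda'_{t+2})\|^2_* \le \|(\widetilde{x}_{t+1} - \widetilde{x}'_{t+1},\, \lambda_{t+1} - \lambda'_{t+1})\|^2_*.
\]
The crucial simplification is that, because $\widetilde{w}$ is a common input, line~\ref{ln:newx} of Algorithm~\ref{alg:m2} guarantees $\widetilde{w} = \lambda_{t+1} - \beta A \widetilde{x}_{t+1}$ in both scenarios, so $\lambda_{t+1} - \lambda'_{t+1} = \beta A(\widetilde{x}_{t+1} - \widetilde{x}'_{t+1})$. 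Plugging this into the definition of the customized norm, the $\lambda$-component contributes zero, and we are left with
\[
\|x_{t+2} - x'_{t+2}\|^2 \le \|\widetilde{x}_{t+1} - \widetilde{x}'_{t+1}\|^2.
\]

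Finally, I unpack $\widetilde{x}_{t+1} - \widetilde{x}'_{t+1}$ using the reconstruction in line~\ref{ln:rand_reconstruct}: the common term $\mathfrak{z}$ and $D\mathfrak{z}$ cancel, leaving
\[
\widetilde{x}_{t+1} - \widetilde{x}'_{t+1} = (x_{t+1} - x'_{t+1}) + \left(\tfrac{1}{\beta}(w_{t+1} - w'_{t+1}),\, 0\right),
\]
and since $w_{t+1} - w'_{t+1} = (\lambda_{t+1} - \lambda'_{t+1}) - \beta A(x_{t+1} - x'_{t+1})$, the inequality $\|a+b\|^2 \le 2\|a\|^2 + 2\|b\|^2$ yields exactly
\[
\|x_{t+2} - x'_{t+2}\|^2 \le 2\|x_{t+1} - x'_{t+1}\|^2 + 2\left\|\tfrac{1}{\beta}\lambda_{t+1} - Ax_{t+1} - \tfrac{1}{\beta}\lambda'_{t+1} + Ax'_{t+1}\right\|^2,
\]
giving the claimed bound after dividing by $2\sigma^2$.

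The main obstacle to anticipate is the algebraic observation that the $\lambda$-component of the customized norm collapses because $\widetilde{w}$ is shared between scenarios; without this cancellation, one would only obtain a bound involving $\frac{\eta}{\beta}\|\lambda_{t+1} - \lambda'_{t+1} - \beta A (\widetilde{x}_{t+1} - \widetilde{x}'_{t+1})\|^2$, which would not match the stated form. Tracking the random ``reconstruction'' of $U_{t+1}$ through line~\ref{ln:rand_reconstruct} and verifying that it is the reason $\widetilde{w}$ equals $\lambda_{t+1} - \beta A \widetilde{x}_{t+1}$ in both scenarios is the subtle but decisive step.
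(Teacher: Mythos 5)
Your proposal is correct and follows essentially the same route as the paper's proof: reduce to $\frac{1}{2\sigma^2}\|x_{t+2}-x'_{t+2}\|^2$ via Fact~\ref{fact:zcdp_gaussian}, apply Lemma~\ref{lemma:ADMM_exp} to the second iteration, use the shared $\widetilde{w}$ to make the $\lambda$-component of the customized norm vanish so that $\|(\widetilde{x}_{t+1}-\widetilde{x}'_{t+1},\lambda_{t+1}-\lambda'_{t+1})\|_*^2 = \|\widetilde{x}_{t+1}-\widetilde{x}'_{t+1}\|^2$, and finish with lines~\ref{ln:rand_reconstruct}--\ref{ln:newx} and $\|a+b\|^2\le 2(\|a\|^2+\|b\|^2)$. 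No gaps; your explicit tracking of the cancellation of $\mathfrak{z}$ and $D\mathfrak{z}$ only makes more detailed what the paper states tersely.
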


\begin{proof}
By Fact~\ref{fact:zcdp_gaussian},
we have $\Dz( \widetilde{x}_{t+2} \| \widetilde{x}'_{t+2} )
= \frac{1}{2 \sigma^2} \| {x}_{t+2}  - {x}'_{t+2}  \|^2$,
which we will analyze in the rest of the proof.
Next, observe that $\widetilde{w} = \widetilde{w}_{t+1} =
\widetilde{w}'_{t+1}$ is common to both instances.

Therefore, we have $\lambda_{t+1} - \beta A \widetilde{x}_{t+1}
= \widetilde{w} = \lambda'_{t+1} - \beta A \widetilde{x}'_{t+1}$.
Hence, we have:

\begin{align}
        \left\| x_{t+2} - x'_{t+2}\right\|^2 \le
				\left\| (x_{t+2} - x'_{t+2}, \lambda_{t+2} - \lambda'_{t+2}) \right \|^2_* \le
				\left\| (\widetilde{x}_{t+1} - \widetilde{x}'_{t+1}, \lambda_{t+1} - \lambda'_{t+1}) \right \|^2_* =				
				\left\| \widetilde{x}_{t+1} - \widetilde{x}'_{t+1} \right\|^2,
				\notag
    \end{align}

where the second inequality follows from Lemma~\ref{lemma:ADMM_exp},
because $(x_{t+2}, \lambda_{t+2})$ is the
output of one ADMM iteration Algorithm~\ref{alg:one_iteration}
with input $(\widetilde{x}_{t+1}, \lambda_{t+1})$.

    By line 5-6 of Algorithm~\ref{alg:m2}
		and using the inequality $\|a +  b \|^2 \leq 2 (\|a\|^2 + \|b\|^2)$, we have:
    \begin{align}
        \left\| \widetilde{x}_{t+1} - \widetilde{x}'_{t+1} \right\|^2 &\le 2 \left(\left\| x_{t+1} - x'_{t+1} \right\|^2 + \left\| \frac{1}{\beta} (w_{t+1} - \widetilde{w}) - \frac{1}{\beta} (w'_{t+1} - \widetilde{w}) \right\|^2 \right) \notag \\
        &= 2\left(\left\| x_{t+1} - x'_{t+1} \right\|^2 + \left\| \frac{1}{\beta}\lambda_{t+1} - Ax_{t+1} - \frac{1}{\beta}\lambda'_{t+1} + Ax'_{t+1} \right\|^2\right). \notag
    \end{align}

    Combining the above inequalities, we have
    \begin{align}
      \Dz\left(  \widetilde{x}_{t+2} \bigl\| \widetilde{x}'_{t+2} \right)  \le\frac{2}{2\sigma^2}\left(\left\| x_{t+1} - x'_{t+1} \right\|^2 + \left\| \frac{1}{\beta}\lambda_{t+1} - Ax_{t+1} - \frac{1}{\beta}\lambda'_{t+1} + Ax'_{t+1} \right\|^2\right). \notag
    \end{align}
\end{proof}

\begin{lemma}[Privacy Composition for $\mcal{M}_2 \circ \mcal{M}_1$]\label{lemma:composition_privacy}
Suppose when inputs $(\widetilde{x}_t, \lambda_t)$ and $(\widetilde{x}'_t, \lambda'_t)$
are given to the composition $\mcal{M}_2 \circ \mcal{M}_1$,
intermediate variables $({x}_{t+1}, \lambda_{t+1})$
and $({x}'_{t+1}, \lambda'_{t+1})$ are produced as in
Algorithms~\ref{alg:m1} and~\ref{alg:m2}.
Then, we have the following upper bound for the divergence:

		
        \begin{align}
            \Dz\left(\left(\widetilde{w}_{t+1}, \widetilde{x}_{t+2}\right)\bigl\| \left(\widetilde{w}'_{t+1}, \widetilde{x}'_{t+2}\right)\right) &\le \frac{1}{2\sigma^2}\left(2\left\| x_{t+1} - x'_{t+1} \right\|^2 + 3\left\|\frac{1}{\beta}\lambda_{t+1} - Ax_{t+1} - \frac{1}{\beta}\lambda'_{t+1} +  Ax'_{t+1} \right\|^2\right) \notag\\
        \end{align}
\end{lemma}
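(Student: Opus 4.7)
The plan is to view the claim as the standard adaptive composition bound in Fact~\ref{fact:zCDP}(c) applied to the two mechanisms $\M_1$ and $\M_2$, with $X = \widetilde{w}_{t+1}$ playing the role of the ``first'' output and $Y = \widetilde{x}_{t+2}$ playing the role of the ``second'' output. The two ingredients already sit in Lemmas~\ref{lemma:m1_privacy} and~\ref{lemma:m2_privacy}; the only nontrivial point is checking that these lemmas indeed provide the $\epsilon_1$ and the \emph{uniform-over-conditioning} $\epsilon_2$ that Fact~\ref{fact:zCDP}(c) demands.

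First I would identify the conditional distributions. In the composition $\M_2\circ\M_1$ run on input $(\widetilde{x}_t,\lambda_t)$, the only source of randomness in $\M_2$ (given its three-argument input) is the fresh Gaussian $N_{t+2}$, which is drawn independently of $U_{t+1}$. Hence for every fixed value $\widetilde{w}$, the conditional law of $\widetilde{x}_{t+2}$ given $\widetilde{w}_{t+1}=\widetilde{w}$ is exactly the law of $\M_2(\widetilde{x}_t,\lambda_t,\widetilde{w})$, and the analogous statement holds for the primed scenario. This is precisely the setup required to feed Lemma~\ref{lemma:m2_privacy} into the ``uniform to average'' branch of Fact~\ref{fact:zCDP}.

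Next I would plug in the two ingredients. Lemma~\ref{lemma:m1_privacy} gives
\[
\epsilon_1 \;:=\; \Dz\!\left(\widetilde{w}_{t+1}\,\bigl\|\,\widetilde{w}'_{t+1}\right)
\;=\; \tfrac{1}{2\sigma^2}\,\bigl\|\tfrac{1}{\beta}\lambda_{t+1}-Ax_{t+1}-\tfrac{1}{\beta}\lambda'_{t+1}+Ax'_{t+1}\bigr\|^2,
\]
while Lemma~\ref{lemma:m2_privacy} supplies an upper bound on the conditional $\Dz$-divergence that is the \emph{same} for every $\widetilde{w}$ (the bound does not depend on $\widetilde{w}$ at all, which is essential), so Fact~\ref{fact:zCDP}(b)--(c) allow me to take
\[
\epsilon_2 \;:=\; \tfrac{2}{2\sigma^2}\Bigl(\|x_{t+1}-x'_{t+1}\|^2 + \bigl\|\tfrac{1}{\beta}\lambda_{t+1}-Ax_{t+1}-\tfrac{1}{\beta}\lambda'_{t+1}+Ax'_{t+1}\bigr\|^2\Bigr).
\]
Adding $\epsilon_1+\epsilon_2$ collects the $\|x_{t+1}-x'_{t+1}\|^2$ term with coefficient $2$ and the $\|\tfrac{1}{\beta}\lambda_{t+1}-Ax_{t+1}-\tfrac{1}{\beta}\lambda'_{t+1}+Ax'_{t+1}\|^2$ term with coefficient $1+2=3$, which is exactly the stated inequality.

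The argument is essentially a bookkeeping application of adaptive composition, so I do not expect any real obstacle; the subtle point (and the only thing that might trip up a careful reader) is verifying that the conditioning in Fact~\ref{fact:zCDP}(c) is compatible with how $\M_2$ receives $\widetilde{w}_{t+1}$ as part of its input. That reduces to the observation that, since $N_{t+2}\perp U_{t+1}$, conditioning on the $\M_1$-output does not alter the distribution of the fresh Gaussian used in $\M_2$, so the conditional law really is $\M_2(\widetilde{x}_t,\lambda_t,\widetilde{w})$. Once that is stated, the rest is one line of arithmetic.
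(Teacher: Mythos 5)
Your proposal is correct and matches the paper's proof, which likewise obtains the bound by applying the adaptive composition property of Fact~\ref{fact:zCDP}(c) to sum the guarantees of Lemmas~\ref{lemma:m1_privacy} and~\ref{lemma:m2_privacy}. Your extra verification that the conditional law of $\widetilde{x}_{t+2}$ given $\widetilde{w}_{t+1}=\widetilde{w}$ is exactly $\M_2(\widetilde{x}_t,\lambda_t,\widetilde{w})$ (since $N_{t+2}$ is independent of $U_{t+1}$) is a useful elaboration of a step the paper leaves implicit, but it is the same argument.
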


\begin{proof}
    By the adaptive composition of private mechanisms
in Fact~\ref{fact:zCDP}(c),
the sum of the upper bounds in Lemmas~\ref{lemma:m1_privacy}
and~\ref{lemma:m2_privacy} produces the result.
\end{proof}

\begin{lemma}[One-Step Privacy of Operator $K$]
    \label{lemma:K_privacy}
    Given inputs $(\widetilde{x}_t, \lambda_t)$ and $(\widetilde{x}'_t, \lambda'_t)$
		and Markov operator $K \in \mcal{K}$ (corresponding to two
		ADMM iterations using convex $\frac{1}{\eta}$-smooth functions
		and $\mcal{N}(0, \sigma^2 \I_n)$ noise),
		we have:
		\begin{align}
        \Dz\left(K(\widetilde{x}_t, \lambda_t) \bigl\| K(\widetilde{x}'_t, \lambda'_t) \right) \le C_{\mcal{K}} \left\| (\widetilde{x}_{t} - \widetilde{x}'_{t},
				\lambda_{t} - \lambda'_{t})	\right\|^2_*, \notag
    \end{align}
		 where $C_{\mcal{K}} = \frac{1}{2 \sigma^2} \max\left(2, \frac{3}{\eta\beta}\right)$.
\end{lemma}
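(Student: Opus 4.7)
The plan is to combine the three main results already established---the equivalence between $K$ and $\mcal{M}_2 \circ \mcal{M}_1$ (Lemma~\ref{lemma:equiv}), the composition bound in Lemma~\ref{lemma:composition_privacy}, and the non-expansion inequality under the customized norm (Lemma~\ref{lemma:ADMM_exp})---into a single chain of inequalities ending at $C_{\mcal{K}} \|(\widetilde{x}_t - \widetilde{x}'_t, \lambda_t - \lambda'_t)\|_*^2$. First, by Lemma~\ref{lemma:equiv}, the output $(\widetilde{x}_{t+2}, \lambda_{t+2})$ of $K$ is a deterministic post-processing of $(\widetilde{w}_{t+1}, \widetilde{x}_{t+2})$ produced by $\mcal{M}_2 \circ \mcal{M}_1$, so the data processing inequality in Fact~\ref{fact:zCDP}(a) immediately gives $\Dz(K(\widetilde{x}_t,\lambda_t) \| K(\widetilde{x}'_t,\lambda'_t)) \leq \Dz((\widetilde{w}_{t+1},\widetilde{x}_{t+2}) \| (\widetilde{w}'_{t+1},\widetilde{x}'_{t+2}))$.

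Second, I apply Lemma~\ref{lemma:composition_privacy} to upper bound the right-hand side by $\frac{1}{2\sigma^2}(2\|x_{t+1}-x'_{t+1}\|^2 + 3\|\tfrac{1}{\beta}\lambda_{t+1} - Ax_{t+1} - \tfrac{1}{\beta}\lambda'_{t+1} + Ax'_{t+1}\|^2)$. Pulling out the factor $\tfrac{1}{\beta}$ from the second term converts it to $\tfrac{3}{\beta^2}\|(\lambda_{t+1}-\beta A x_{t+1}) - (\lambda'_{t+1}-\beta A x'_{t+1})\|^2$, which aligns the squared-difference quantities with the ingredients of the customized norm $\|\cdot\|_*$ evaluated at $(x_{t+1} - x'_{t+1},\lambda_{t+1} - \lambda'_{t+1})$.

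Third, I invoke Lemma~\ref{lemma:ADMM_exp} on the first noisy ADMM iteration inside $K$: starting from the input $(\widetilde{x}_t, \lambda_t)$, the deterministic part of one ADMM iteration produces $(x_{t+1}, \lambda_{t+1})$, so $\|x_{t+1}-x'_{t+1}\|^2 + \tfrac{\eta}{\beta}\|(\lambda_{t+1}-\beta A x_{t+1}) - (\lambda'_{t+1}-\beta A x'_{t+1})\|^2 \leq \|(\widetilde{x}_t - \widetilde{x}'_t,\lambda_t - \lambda'_t)\|_*^2$. Setting $P := \|x_{t+1}-x'_{t+1}\|^2$ and $Q := \|(\lambda_{t+1}-\beta A x_{t+1}) - (\lambda'_{t+1}-\beta A x'_{t+1})\|^2$, the composition bound reads $2P + \tfrac{3}{\beta^2}Q = 2P + \tfrac{3}{\eta\beta}\cdot\tfrac{\eta}{\beta}Q$, which is at most $\max\{2,\tfrac{3}{\eta\beta}\}\cdot(P + \tfrac{\eta}{\beta}Q)$ and hence at most $\max\{2,\tfrac{3}{\eta\beta}\}\cdot\|(\widetilde{x}_t - \widetilde{x}'_t,\lambda_t - \lambda'_t)\|_*^2$.

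Combining the three steps yields precisely the claimed bound with $C_{\mcal{K}} = \tfrac{1}{2\sigma^2}\max\{2,\tfrac{3}{\eta\beta}\}$. There is essentially no obstacle at this stage, since all of the heavy lifting has already been done: the non-trivial ingredients are the design of the customized norm (which aligns exactly with the two quantities appearing in Lemma~\ref{lemma:composition_privacy}) and the \emph{choice} of intermediate variable $\widetilde{w}_{t+1}$ that lets the composition argument go through. The only thing to be careful about is the bookkeeping in the ``max'' coefficient matching: one must verify that the ratio of the second-term coefficients $\tfrac{3}{\beta^2}$ (from the composition bound) and $\tfrac{\eta}{\beta}$ (inside $\|\cdot\|_*^2$) equals $\tfrac{3}{\eta\beta}$, which is exactly what gives the $\max\{2,\tfrac{3}{\eta\beta}\}$ factor rather than something larger.
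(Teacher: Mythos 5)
Your proposal is correct and follows essentially the same route as the paper's proof: reduce to the composition $\mcal{M}_2 \circ \mcal{M}_1$ via Lemma~\ref{lemma:equiv} (with data processing), apply the composition bound of Lemma~\ref{lemma:composition_privacy}, absorb the coefficients $2$ and $\tfrac{3}{\beta^2} = \tfrac{3}{\eta\beta}\cdot\tfrac{\eta}{\beta}$ into $\max\{2,\tfrac{3}{\eta\beta}\}$ times the customized norm at $(x_{t+1}-x'_{t+1},\lambda_{t+1}-\lambda'_{t+1})$, and finish with the non-expansion of the first iteration from Lemma~\ref{lemma:ADMM_exp}. The bookkeeping in your $\max$-coefficient step matches the paper's computation exactly.
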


\begin{proof}
    By Lemma~\ref{lemma:equiv},
it suffices to consider the adaptive composition
$(\mcal{M}_2 \circ \mcal{M}_1)$ on the two inputs.
The first inequality below is due to Lemma~\ref{lemma:composition_privacy}:

    \begin{align}
        \Dz\left(\left(\widetilde{w}_{t+1}, \widetilde{x}_{t+2}\right)\bigl\| \left(\widetilde{w}'_{t+1}, \widetilde{x}'_{t+2}\right)\right) &\le \frac{1}{2\sigma^2}\left(2\left\| x_{t+1} - x'_{t+1} \right\|^2 + 3\left\|\frac{1}{\beta}\lambda_{t+1} - Ax_{t+1} - \frac{1}{\beta}\lambda'_{t+1} +  Ax'_{t+1} \right\|^2\right) \notag\\
        &\le C_{\mcal{K}} \cdot \left(\left\| x_{t+1} - x'_{t+1} \right\|^2 + \eta\beta\left\|\frac{1}{\beta}\lambda_{t+1} - Ax_{t+1} - \frac{1}{\beta}\lambda'_{t+1} +  Ax'_{t+1} \right\|^2\right) \notag \\
				& = C_{\mcal{K}}\| \cdot (x_{t+1} - x'_{t+1}, \lambda_{t+1} - \lambda'_{t+1} ) \|^2_* \notag,
    \end{align}
    where the second inequality
		is due to the choice of $C_{\mcal{K}} := \frac{1}{2\sigma^2}\max\left(2, \frac{3}{\eta\beta}\right)$.

 Finally,  observing that $(x_{t+1}, \lambda_{t+1})$
is the output of one ADMM iteration Algorithm~\ref{alg:one_iteration}
with input $(\widetilde{x}_t, \lambda_t)$,
Lemma~\ref{lemma:ADMM_exp}
states that
$\| (x_{t+1} - x'_{t+1}, \lambda_{t+1} - \lambda'_{t+1} ) \|^2_* \leq
\| (\widetilde{x}_{t} - \widetilde{x}'_{t}, \lambda_{t} - \lambda'_{t} ) \|^2_*$, which
 finishes the proof.
\end{proof}

\subsection{Combining Everything Together to Achieve Privacy Amplification by Iteration for ADMM}
\label{sec:combining}

In order to use the coupling framework described in Theorem~\ref{th:privacy_amp},
it suffices to check that
each Markov operator $K \in \mcal{K}$ (which corresponds
to two noisy ADMM iterations) satisfies both the
 (A)~non-expansion property and (B)~one-step privacy.
While condition~(B) is achieved exactly by Lemma~\ref{lemma:K_privacy},
we will convert the non-expansion property for one ADMM iteration
in Lemma~\ref{lemma:ADMM_exp} to condition~(A) expressed by Lemma~\ref{lemma:K_exp}.

\begin{definition}[$\infty$-Wasserstein Distance for Customized Norm]
\label{defn:wasserstein2}
Given distributions $\mu$ and $\nu$ on
the $(x, \lambda)$-space $\R^n \times \R^m$ and $\Delta \geq 0$,
a coupling $\pi$ from $\mu$ to $\nu$ is a witness
that the (infinity) customized Wasserstein distance $\W_*(\mu, \nu) \leq \Delta$
if for all $(z,z') \in \supp(\pi)$, $\|z - z'\|_* \leq \Delta$.

The distance $\W_*(\mu, \nu)$
is the infimum of the
collection of $\Delta$ for which such a witness $\pi$ exists.
\end{definition}

\begin{lemma}[Non-Expansion Property of Operator $K$]
    \label{lemma:K_exp}
    Given inputs $(\widetilde{x}_t, \lambda_t)$ and $(\widetilde{x}'_t, \lambda'_t)$
		and Markov operator $K \in \mcal{K}$ (corresponding to two noisy
		ADMM iterations using convex $\frac{1}{\eta}$-smooth functions
		and $\mcal{N}(0, \sigma^2 \I_n)$ noise),
		the natural coupling is a witness that
		
		$$\W_* \left(K(\widetilde{x}_t, \lambda_t) \| K(\widetilde{x}'_t, \lambda'_t)   \right) \leq \left \| (\widetilde{x}_{t} - \widetilde{x}'_{t}, \lambda_{t} - \lambda'_{t} )      \right \|^2_*.$$		
\end{lemma}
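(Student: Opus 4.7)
The plan is to reduce Lemma~\ref{lemma:K_exp} to two successive applications of Lemma~\ref{lemma:ADMM_exp}, together with the simple observation that under the natural coupling the shared masking noise cancels in the difference of the two runs, so that the customized norm of the difference is unchanged by the masking step.

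Concretely, I would unroll $K(\widetilde{x}_t, \lambda_t)$ and $K(\widetilde{x}'_t, \lambda'_t)$ into their four sub-steps (Algorithm~\ref{alg:one_iteration} for iteration $t+1$, mask with $N_{t+1}$, Algorithm~\ref{alg:one_iteration} for iteration $t+2$, mask with $N_{t+2}$), and use the same sampled $N_{t+1}$ and $N_{t+2}$ in both runs (this is precisely the natural coupling). First, applying Lemma~\ref{lemma:ADMM_exp} to iteration $t+1$ gives
\begin{align*}
\| (x_{t+1}-x'_{t+1},\, \lambda_{t+1}-\lambda'_{t+1}) \|_*^2 \;\le\; \| (\widetilde{x}_t-\widetilde{x}'_t,\, \lambda_t-\lambda'_t) \|_*^2.
\end{align*}

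Next I would verify the masking invariance: since $\widetilde{x}_{t+1}-\widetilde{x}'_{t+1} = (x_{t+1}+N_{t+1})-(x'_{t+1}+N_{t+1}) = x_{t+1}-x'_{t+1}$ and $\lambda_{t+1}-\lambda'_{t+1}$ is unchanged, the definition of $\|\cdot\|_*$ (which depends on the pair only through $x$ and $\lambda-\beta A x$) yields $\|(\widetilde{x}_{t+1}-\widetilde{x}'_{t+1},\lambda_{t+1}-\lambda'_{t+1})\|_*^2 = \|(x_{t+1}-x'_{t+1},\lambda_{t+1}-\lambda'_{t+1})\|_*^2$. A second application of Lemma~\ref{lemma:ADMM_exp} to iteration $t+2$, followed by the same masking invariance for $N_{t+2}$, then gives
\begin{align*}
\|(\widetilde{x}_{t+2}-\widetilde{x}'_{t+2},\, \lambda_{t+2}-\lambda'_{t+2})\|_*^2 \;\le\; \|(\widetilde{x}_t-\widetilde{x}'_t,\, \lambda_t-\lambda'_t)\|_*^2,
\end{align*}
which, since this bound holds deterministically on every sample point of the natural coupling, exhibits it as a valid witness in the sense of Definition~\ref{defn:wasserstein2}.

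There is really no hard step here: once one commits to the natural coupling and notes that only the $x$-coordinate receives noise (and in the same amount in both runs), Lemma~\ref{lemma:ADMM_exp} does all the work. The only subtle point worth emphasizing explicitly in the write-up is the masking invariance of $\|\cdot\|_*$, because it is tempting to worry that injecting $N_{t+1}$ into $x_{t+1}$ might disturb the $\lambda-\beta A x$ term in the customized norm; the resolution is that the same $N_{t+1}$ enters both runs, so when we form the difference it simply drops out of both the $\|x\|^2$ and the $\|\lambda-\beta A x\|^2$ parts simultaneously. With that observation made, the lemma follows by chaining the two non-expansion inequalities.
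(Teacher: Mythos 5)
Your proposal is correct and follows essentially the same route as the paper: under the natural coupling the shared noise cancels in the differences, so the operator behaves like two deterministic ADMM iterations and Lemma~\ref{lemma:ADMM_exp} is applied twice. The only difference is that you spell out the masking-invariance of $\|\cdot\|_*$ explicitly, which the paper leaves implicit.
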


\begin{proof}
By the natural coupling, we are using the same randomness $(N_{t+1}, N_{t+2})$
for the two input scenarios.   Observe that under this coupling, we can treat $K$ as two (deterministic) ADMM iterations, each of which uses a convex $\frac{1}{\eta}$-smooth function.
Therefore, two applications of Lemma~\ref{lemma:ADMM_exp} gives the required result.
\end{proof}

\noindent \textbf{Proof of Theorem~\ref{th:ADMM_privacy}.}
The same proof for Theorem~\ref{th:privacy_amp} can be applied,
because we have already established
the corresponding condition~(A) in Lemma~\ref{lemma:K_exp}
and condition~(B) in Lemma~\ref{lemma:K_privacy}.
However, observe that
$\|(x_0 - x'_0, 0)\|^2_* = \|x_0 - x'_0 \|^2 + \eta \beta \|A (x_0 - x_0') \|^2
\leq (1 + \eta \beta \|A\|^2) \|x_0 - x_0'\|^2$,
which explains the factor $(1 + \eta \beta \|A\|^2)$ in the constant.

The desired bound follows because $2T$ noisy ADMM iterations
correspond to the composition of $T$ Markov operators in the collection~$\mcal{K}$.
\qed 

\section{Extension of Privacy Amplification Results to Strongly Convex Objective Functions}
\label{sec:strongly_convex}

In this section,
we extend the privacy amplification results for ADMM
to the scenario when the objective functions are strongly convex.
On a high level, in the framework described in Section~\ref{sec:coupling},
strong convexity enables us to derive a strictly contractive guarantee
in condition~(A) of Theorem~\ref{th:privacy_amp} (for 
a new customized norm),
thereby leading to an exponential improvement in
the privacy guarantee.


We first state the assumptions on the objective functions.
Recall that a function $f$ is $\mu$-strongly convex with parameter $\mu > 0$
if $x \mapsto f(x) - \frac{\mu}{2} \cdot \|x \|^2$ is convex.

\begin{assumption}[Strongly Convex Objective Functions]
    \label{Assumption_strongly-convex}
    
		We assume that the algorithm knows the following
		parameters $ {\mu} $ and $\mu_g$.\footnote{To simplify
		the notation, we only use subscript for $\mu_g$, which
		appears in only a few places in the proof.
		}
		
    \begin{compactitem}
		
		\item There exists $ {\mu}  > 0$ such that
		in each ADMM iteration~$t$, the function $f_t$ is $ {\mu} $-strongly convex.
		
		\item There exists $\mu_g > 0$ such that
		the function $g$ is $\mu_g$-strongly convex.
		       
    \end{compactitem}
\end{assumption}

\noindent \textbf{Other Assumptions.}
As before, we still assume that each $f_t$ is $\nu$-smooth (recall that
we picked $\eta = \frac{1}{\nu}$ for the general convex case);
for the strongly convex case, we will pick $\eta$ more carefully in Lemma~\ref{lemma:contractive_mapping}. 
Moreover, after the transformation in Remark~\ref{remark:transform},
we can also assume that $A$ has full row rank.

The main result in this section goes as the following:

\begin{theorem}[Privacy Amplification by Iteration for ADMM under Strongly Convex Assumption]
    \label{th:ADMM_privacy_SC}
    Suppose the objective functions are strongly convex as
		stated in Assumption~\ref{Assumption_strongly-convex}. Given two input scenarios $(x_0, \lambda_0)$
    and $(x'_0, \lambda_0)$, a total of $2T$ \emph{noisy} ADMM  iterations
    in Algorithm~\ref{alg:one_iteration} are applied
    to each input scenario, where for each iteration $t \in [2T]$,
    the same function $f_t$ is used in both scenarios
    and fresh randomness $N_t$ drawn from Gaussian distribution
    $\mcal{N}(0, \sigma^2 \I_n)$ is used to produce
    masked $\widetilde{x}_t \gets x_t + N_t$ (that is passed to the next iteration
    together with $\lambda_t$).
		
    Then, for some constants $0 < \mathfrak{L}<1$
		and $0 < \eta < \frac{2}{\nu + \mu}$
		(independent of the inputs),
		the corresponding output distributions from the two scenarios satisfy:
    
    $$\Dz((\widetilde{x}_{2T}, \lambda_{2T}) \| (\widetilde{x}'_{2T}, \lambda'_{2T})) \leq \frac{C}{2 \sigma^2} \cdot  \frac{\mathfrak{L}^{2T-1}}{T} \cdot \|x_0 - x'_0 \|^2,$$
    
    where $C := \max\left(\frac{2}{\left(1-\frac{2 \eta \nu  {\mu} }{\nu+ {\mu} }\right)+\frac{1}{\eta}\left(\frac{2}{\nu+ {\mu} }-\eta\right)}, \frac{3}{\eta\beta}\right)\left[\left(1-\frac{2 \eta \nu  {\mu} }{\nu+ {\mu} }\right)+\frac{1}{\eta}\left(\frac{2}{\nu+ {\mu} }-\eta\right)+\eta \beta\|A\|^2\right]$.
		
\end{theorem}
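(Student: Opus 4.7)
The plan is to mirror the proof of Theorem~\ref{th:ADMM_privacy} but upgrade the non-expansion property in Lemma~\ref{lemma:ADMM_exp} into a strict contraction under (a possibly refined version of) the customized norm in Definition~\ref{defn:norm}, using both the strong convexity of each $f_t$ and of $g$. The coupling framework (Theorem~\ref{th:privacy_amp}) then yields the extra exponential factor $\mathfrak{L}^{2T-1}$ automatically, since condition~(A) can now be invoked with $L<1$. Condition~(B) (one-step privacy) is essentially inherited from Lemma~\ref{lemma:K_privacy}, because adding noise and the reconstruction argument of Algorithms~\ref{alg:m1}--\ref{alg:m2} do not depend on strong convexity.

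First, I would sharpen Fact~\ref{fact:gd}. For a $\mu$-strongly convex and $\nu$-smooth $f$, the standard co-coercivity identity
\[
\langle \nabla f(x) - \nabla f(x'), x - x' \rangle \geq \frac{\nu  {\mu} }{\nu+ {\mu} }\|x-x'\|^2 + \frac{1}{\nu+ {\mu} }\|\nabla f(x)-\nabla f(x')\|^2
\]
gives, for any $\eta\in(0,\frac{2}{\nu+ {\mu} })$, the inequality
\[
\|\varphi(x)-\varphi(x')\|^2 \le \left(1-\tfrac{2\eta\nu {\mu} }{\nu+ {\mu} }\right)\|x-x'\|^2 - \eta\left(\tfrac{2}{\nu+ {\mu} }-\eta\right)\|\nabla f(x)-\nabla f(x')\|^2.
\]
Both terms on the right are crucial: the first is the strict contraction, and the negative term on $\|\nabla f(x)-\nabla f(x')\|^2$ provides extra slack that will absorb the only place where the proof of Lemma~\ref{lemma:ADMM_exp} used $\|\mathfrak{d} x_t - \eta\mathfrak{d}\nabla f(x_t)\|\le\|\mathfrak{d} x_t\|$ via Cauchy--Schwarz. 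This explains the appearance of $\left(1-\frac{2\eta\nu {\mu} }{\nu+ {\mu} }\right)+\frac{1}{\eta}\left(\frac{2}{\nu+ {\mu} }-\eta\right)$ in the constant $C$: it is the effective squared contraction factor once we have collected both terms.

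Next, I would re-derive Lemma~\ref{lemma:ADMM_exp} step by step, keeping the extra negative term above and additionally exploiting the $\mu_g$-strong convexity of $g$. The latter strengthens the subgradient monotonicity used in~\eqref{eq:inequalities:subgradient_mononicity} to
\[
\langle \mathfrak{d}\lambda_{t+1}, B\mathfrak{d} y_t\rangle \ge \mu_g\|\mathfrak{d} y_t\|^2,
\]
which contributes a further non-negative correction. Carrying these improvements through the same algebraic manipulation (ending at the Cauchy--Schwarz step of the proof of Lemma~\ref{lemma:ADMM_exp}) yields a strict contraction $\|(\mathfrak{d} x_{t+1},\mathfrak{d}\lambda_{t+1})\|_*^2 \le \mathfrak{L}^2\,\|(\mathfrak{d} x_t,\mathfrak{d}\lambda_t)\|_*^2$ for some $\mathfrak{L}\in(0,1)$, where the value of $\mathfrak{L}$ is determined by the bracketed expression in $C$. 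Coupling this across two iterations shows that each Markov operator $K\in\mcal{K}$ satisfies condition~(A) of Theorem~\ref{th:privacy_amp} with $L=\mathfrak{L}^2$ (via the natural coupling, as in Lemma~\ref{lemma:K_exp}).

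Finally, I would reuse Lemma~\ref{lemma:K_privacy} verbatim to obtain condition~(B) with the constant $C_\mcal{K}=\frac{1}{2\sigma^2}\max\!\left(\frac{2}{(1-\frac{2\eta\nu {\mu} }{\nu+ {\mu} })+\frac{1}{\eta}(\frac{2}{\nu+ {\mu} }-\eta)},\,\frac{3}{\eta\beta}\right)$, then invoke Theorem~\ref{th:privacy_amp} on the $T$ operators $K_1,\ldots,K_T$. The bound on $\|(x_0-x_0',0)\|_*^2 \le (1+\eta\beta\|A\|^2)\|x_0-x_0'\|^2$ produces the trailing factor, and the overall result matches the stated $\frac{C}{2\sigma^2}\cdot\frac{\mathfrak{L}^{2T-1}}{T}\|x_0-x_0'\|^2$. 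The main obstacle is the contraction step: carefully bookkeeping where the slack from $f$'s strong convexity and from $g$'s strong convexity is spent, and verifying that the \emph{same} customized norm $\|\cdot\|_*$ still works (rather than requiring a reweighted norm); should the latter be needed, one would rescale the $\lambda$-component by a factor that balances the two slack terms, which would slightly alter $\mathfrak{L}$ but leave the framework intact.
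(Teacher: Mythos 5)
Your high-level architecture is the same as the paper's (reweight the customized norm, use the strongly convex co-coercivity bound of Fact~\ref{fact:SC_bound}, keep the one-step privacy analysis, feed a strict contraction into Theorem~\ref{th:privacy_amp}), but the heart of the result --- the strict contraction of one ADMM iteration (the paper's Lemma~\ref{lemma:contractive_mapping}) --- is not actually established by your plan, and the shortcut you propose fails. If you rerun the algebra of Lemma~\ref{lemma:ADMM_exp} with $\|\mathfrak{d}x_t-\eta\,\mathfrak{d}\nabla f(x_t)\|^2\le\bigl(1-\tfrac{2\eta\nu\mu}{\nu+\mu}\bigr)\|\mathfrak{d}x_t\|^2$ in place of non-expansion, and strengthen the $g$-monotonicity to $\langle\mathfrak{d}\lambda_{t+1},B\mathfrak{d}y_t\rangle\ge\mu_g\|\mathfrak{d}y_t\|^2$, the two Cauchy--Schwarz steps deliver only
\[
\|\mathfrak{d}x_{t+1}\|^2+\tfrac{\eta}{\beta}\bigl\|\mathfrak{d}\lambda_{t+1}-\beta A\mathfrak{d}x_{t+1}\bigr\|^2 \;\le\; \Bigl(1-\tfrac{2\eta\nu\mu}{\nu+\mu}\Bigr)\|\mathfrak{d}x_t\|^2+\tfrac{\eta}{\beta}\bigl\|\mathfrak{d}\lambda_t-\beta A\mathfrak{d}x_t\bigr\|^2,
\]
i.e.\ a strict gain on the $x$-block and \emph{no gain at all} on the $w=\lambda-\beta Ax$ block, whose coefficient is the same on both sides. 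No reweighting of the two blocks (your proposed rescaling of the $\lambda$-component) can convert this into a contraction of a joint norm: for inputs with $\mathfrak{d}x_t=0$ the inequality permits $\|\mathfrak{d}w_{t+1}\|=\|\mathfrak{d}w_t\|$, so no uniform $\mathfrak{L}<1$ can be extracted.

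The missing idea is a mechanism that transfers the leftover co-coercivity slack $\eta\bigl(\tfrac{2}{\nu+\mu}-\eta\bigr)\|\mathfrak{d}\nabla f(x_t)\|^2$ into a strict improvement of the $w$-block coefficient. The paper keeps this slack (inequality \eqref{SC_contractive}), then uses the optimality condition of the $x$-update to write $\mathfrak{d}\nabla f(x_t)=A^\top(\mathfrak{d}\lambda_{t+1}-\beta A\mathfrak{d}x_{t+1})+\tfrac{1}{\eta}(\mathfrak{d}x_t-\mathfrak{d}x_{t+1})-\beta A^\top B\,\mathfrak{d}y_t$ and lower-bounds its norm, which requires $AA^\top\succeq\I$, i.e.\ the normalization $A=[\I_m\,|\,D]$ with full row rank from Remark~\ref{remark:transform} --- an ingredient absent from your sketch. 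The error terms this creates must then be absorbed: the $\beta^2\|A^\top B\|^2\|\mathfrak{d}y_t\|^2$ term is cancelled precisely by the $\mu_g$-term (so $g$'s strong convexity is load-bearing, not merely ``a further non-negative correction'', and it forces the lower bound $\eta\ge\tfrac{2}{\nu+\mu}-\tfrac{2\mu_g}{\beta^2\|A^\top B\|^2}$), while the $\tfrac{1}{\eta^2}\|\mathfrak{d}x_t\|^2$ and $\tfrac{1}{\eta^2}\|\mathfrak{d}x_{t+1}\|^2$ terms restrict $\eta$ to the nonempty interval \eqref{ineq:eta_range_for_SC}; only then do both block coefficients strictly improve and one can set $\mathfrak{L}=\max\{R/P,S/Q\}<1$ as in the paper. (Also note $\mathfrak{L}$ is \emph{not} determined by the bracketed expression in $C$, which is just the norm-equivalence factor at the initial point.) A smaller bookkeeping issue: reusing the one-step privacy constant ``verbatim'' yields only $\mathfrak{L}^{2T-2}$; the stated $\mathfrak{L}^{2T-1}$ comes from applying the contraction one extra time inside the one-step privacy lemma, as in Lemma~\ref{lemma:K_privacy_SC}.
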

    
\begin{corollary}[Privacy Amplification for the First User under Strongly Convex Assumption]
\label{cor:first_user_SC}
Consider two scenarios for running
$2T+1$ \emph{noisy} ADMM iterations (using
Gaussian noise $\mcal{N}(0, \sigma^2 \I)$ in each iteration to
mask only the $x$ variable),
where the only difference
in the two scenarios
is that the corresponding functions used in the first iteration
can be different neighboring functions $f_1 \sim_\Delta f'_1$ (as
in Definition~\ref{defn:neighbor_functions}).
In other words, the initial $(x_0, \lambda_0)$
and the functions $f_t$ in subsequent iterations $t \geq 2$ are identical
in the two scenarios.

Then, 
for the same choice of the constants
$\mathfrak{L}, \eta, C > 0$
as in Theorem~\ref{th:ADMM_privacy_SC},
the solutions from the two scenarios
satisfy the following.

\begin{compactitem}

\item \emph{Local Privacy.} After the first iteration, we have

$$\Dz(\widetilde{x}_1 \| \widetilde{x}'_1) \leq \frac{\eta^2 \Delta^2}{2 \sigma^2}.$$

\item \emph{Privacy Amplification.}  After the final iteration $2T+1$, 
we have:

$$\Dz((\widetilde{x}_{2T+1}, \lambda_{2T+1}) \| (\widetilde{x}'_{2T+1}, \lambda'_{2T+1})) \leq \frac{C \mathfrak{L}^{2T-1}}{T} \cdot\frac{\eta^2 \Delta^2}{2 \sigma^2}.$$ 
%
\end{compactitem}
\end{corollary}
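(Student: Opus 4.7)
The plan is to mirror the proof of Corollary~\ref{cor:first_user} for the general convex case, substituting Theorem~\ref{th:ADMM_privacy_SC} for Theorem~\ref{th:ADMM_privacy}. The strategy is two-step: first establish the local privacy bound using only the single noisy iteration that involves $f_1$ versus $f'_1$, and then bootstrap the amplification bound by applying the $2T$-iteration strongly convex amplification result to the state that emerges after the first iteration.

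For local privacy, I would observe that the argument from the proof of Corollary~\ref{cor:first_user} carries over verbatim because it does not rely on any convexity properties of the objectives. Lemma~\ref{lemma:step1} combined with the eigenvalue bound $\|(\I + \eta \beta A^\top A)^{-1}\| \leq 1$ yields $\|x_1 - x'_1\| \leq \eta \Delta$, and Fact~\ref{fact:zcdp_gaussian} applied to the Gaussian-masked outputs $\widetilde{x}_1 \sim \mcal{N}(x_1, \sigma^2 \I_n)$ and $\widetilde{x}'_1 \sim \mcal{N}(x'_1, \sigma^2 \I_n)$ gives $\Dz(\widetilde{x}_1 \| \widetilde{x}'_1) \leq \frac{\eta^2 \Delta^2}{2 \sigma^2}$. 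Neither the strong convexity of $f_t$ nor that of $g$ enters this step.

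For the privacy amplification bound, I would first observe that $\lambda_1 = \lambda'_1$: line~2 of Algorithm~\ref{alg:one_iteration} depends only on $(x_0, \lambda_0)$ and the public data $A, B, c$, all of which agree across the two scenarios. Under the natural coupling of the first iteration (sharing the noise $N_1$ across both scenarios), we also obtain $\|\widetilde{x}_1 - \widetilde{x}'_1\| = \|x_1 - x'_1\| \leq \eta \Delta$ almost surely. Conditioning on $N_1$, the remaining $2T$ iterations receive the deterministic inputs $(\widetilde{x}_1, \lambda_1)$ and $(\widetilde{x}'_1, \lambda_1)$, which is precisely the setup of Theorem~\ref{th:ADMM_privacy_SC}. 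The theorem then delivers the pointwise conditional bound
$$\Dz\bigl((\widetilde{x}_{2T+1}, \lambda_{2T+1}) \bigm\| (\widetilde{x}'_{2T+1}, \lambda'_{2T+1})\bigr) \leq \frac{C}{2 \sigma^2} \cdot \frac{\mathfrak{L}^{2T-1}}{T} \cdot \|\widetilde{x}_1 - \widetilde{x}'_1\|^2 \leq \frac{C \mathfrak{L}^{2T-1}}{T} \cdot \frac{\eta^2 \Delta^2}{2 \sigma^2}$$
uniformly over the support of the coupling. The quasi-convexity property in Fact~\ref{fact:zCDP}(b) then lifts this uniform conditional bound to the marginal divergence, yielding the claimed inequality.

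There is essentially no technical obstacle once Theorem~\ref{th:ADMM_privacy_SC} is in hand: the corollary reduces to one application of the gradient-descent-style estimate in Lemma~\ref{lemma:step1}, a coupling argument for the first iteration, and the quasi-convexity lemma. The only point needing care, consistent with Remark~\ref{remark:first_iteration}, is that we deliberately do \emph{not} treat the first-iteration noise $N_1$ as contributing to the amplification factor; it is used solely to couple the two scenarios into neighboring initial states for the subsequent $2T$ iterations, after which Theorem~\ref{th:ADMM_privacy_SC} governs everything.
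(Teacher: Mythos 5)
Your proposal is correct and follows essentially the same route as the paper: Lemma~\ref{lemma:step1} with the eigenvalue bound gives $\|x_1 - x'_1\| \leq \eta\Delta$ for local privacy, and the amplification bound is obtained by applying Theorem~\ref{th:ADMM_privacy_SC} to the post-first-iteration states. Your explicit handling of the first-iteration natural coupling, the observation $\lambda_1 = \lambda'_1$, and the lift via Fact~\ref{fact:zCDP}(b) simply spells out what the paper leaves implicit through Remark~\ref{remark:first_iteration}.
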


\begin{proof}
For local privacy,
observe that by Lemma~\ref{lemma:step1}, we have

$$\|x_1 - x'_1 \| =
\eta \| (\I + \eta \beta A^\top A)^{-1} (\nabla f_1(x_0) - \nabla f'_1(x_0))\|
\leq \eta \|\nabla f_1(x_0) - \nabla f'_1(x_0)\|
\leq \eta \Delta,$$

\noindent where the first inequality comes from the fact that all eignevalues of
$(\I + \eta \beta A^\top A)^{-1}$ are positive and at most~1.
The last inequality 
holds thanks to the assumption that $f_1 \sim_{\Delta} f'_1$.
The result follows from Theorem~\ref{th:ADMM_privacy_SC}.
\ignore{
Since we add Gaussian noise in $\mcal{N}(0, \sigma^2 \I_n)$, Fact~\ref{fact:gd} implies that 
$\Dz(\widetilde{x}_1 \| \widetilde{x}'_1) \leq \frac{\eta^2 \Delta^2}{2 \sigma^2}$.

For privacy amplification, observe that
the natural coupling in the first iteration implies that
$\W_*((\widetilde{x}_1, \lambda_1), (\widetilde{x}'_1, \lambda'_1))
\leq \eta \Delta$.  Theorem~\ref{th:ADMM_privacy_SC}
gives the required result.
}
\end{proof}

\subsection{Achieving Strict Contraction using Strong Convexity and Customized Norm}
\label{sec:contractive}

Similar to Section~\ref{sec:non-expansion}, we will derive the contractive property based on a customized norm.
The definition of the norm is similar to Section~\ref{sec:non-expansion}, except
that there is an extra factor for the first term.

\begin{definition}[Customized Norm for Strongly Convex Case] \label{def:norm_SC}
    Using the ADMM parameters $\beta$
    from Section~\ref{sec:admm_prelim}, $\eta$, and the parameters
		in Assumption~\ref{Assumption_strongly-convex}, we define a customized norm.
    For $(x, \lambda) \in \R^n \times \R^m$,
    
    $$\|(x,\lambda)\|_*^2:=\left[\left(1-\frac{2 \eta \nu  {\mu} }{\nu+ {\mu} }\right)+\frac{1}{\eta}\left(\frac{2}{\nu+ {\mu} }-\eta\right)\right]\left\|x\right\|^2+\frac{\eta}{\beta}\left\| \lambda-\beta Ax  \right\|^2,$$
		
		where $0 < \eta < \frac{2}{\nu + \mu}$ will be decided later.
\end{definition}

For strongly convex functions, we adopt the following fact
that appears as an inequality in the proof~\cite[Lemma 18]{BalleBGG19}.

\begin{fact}\cite{BalleBGG19}\label{fact:SC_bound}
		Suppose the function $f$
		is $\nu$-smooth and $ {\mu} $-strongly convex. Then, for $0 < \eta \leq 2/(\nu + {\mu} )$
		and any inputs $x$ and $x'$ in the domain of $f$, 
    \begin{align}
            \|x -\eta \nabla f(x)- (x'-\eta \nabla f(x'))\|^2
            \leq\left(1-\frac{2 \eta \nu  {\mu} }{\nu+ {\mu} }\right)\|x-x'\|^2+\eta\left(\eta-\frac{2}{\nu+ {\mu} }\right) \| \nabla f(x)-\nabla f(x')\|^2.
    \end{align}
\end{fact}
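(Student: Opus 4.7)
The plan is to prove this by expanding the squared norm on the left-hand side and then invoking the classical strong co-coercivity inequality (sometimes called Nesterov's inequality) that controls the cross term. Concretely, write
\begin{align*}
& \|x - \eta\nabla f(x) - (x' - \eta\nabla f(x'))\|^2 \\
&= \|x - x'\|^2 - 2\eta\langle \nabla f(x) - \nabla f(x'), x - x'\rangle + \eta^2\|\nabla f(x) - \nabla f(x')\|^2.
\end{align*}
The goal is then to upper bound the middle inner product from below by a combination of $\|x-x'\|^2$ and $\|\nabla f(x)-\nabla f(x')\|^2$ so that the desired coefficients emerge after rearranging.

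The key tool I would invoke is the following well-known inequality: for any $f$ that is $\nu$-smooth and $ {\mu} $-strongly convex,
\[
\langle \nabla f(x) - \nabla f(x'), x - x'\rangle \;\geq\; \frac{\nu  {\mu} }{\nu +  {\mu} }\|x - x'\|^2 + \frac{1}{\nu +  {\mu} }\|\nabla f(x) - \nabla f(x')\|^2.
\]
This can be derived by noting that $h(x) := f(x) - \frac{ {\mu} }{2}\|x\|^2$ is convex and $(\nu -  {\mu} )$-smooth, and then applying the standard co-coercivity of the gradient of a smooth convex function (i.e., Fact~\ref{fact:gd} type manipulation, applied to $h$) together with some algebraic rearrangement. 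Since this is a textbook fact (see, e.g., Theorem 2.1.12 of Nesterov's book), I would cite it rather than re-derive it.

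Substituting this lower bound for the cross term into the expansion and collecting like terms gives exactly
\begin{align*}
\|x - \eta\nabla f(x) - (x' - \eta\nabla f(x'))\|^2 &\leq \left(1 - \frac{2\eta\nu {\mu} }{\nu+ {\mu} }\right)\|x-x'\|^2 + \left(\eta^2 - \frac{2\eta}{\nu+ {\mu} }\right)\|\nabla f(x) - \nabla f(x')\|^2,
\end{align*}
and the second coefficient factors as $\eta(\eta - \tfrac{2}{\nu+ {\mu} })$, matching the claim. Note that the assumption $0 < \eta \le \tfrac{2}{\nu+ {\mu} }$ makes this second coefficient non-positive, which is what allows the final bound to actually be a contraction when combined with the first term.

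There is essentially no obstacle here beyond recognizing which classical inequality to cite; the arithmetic after expansion is mechanical. The only subtlety to double-check is that the stepsize constraint $\eta \le \tfrac{2}{\nu+ {\mu} }$ is indeed what Nesterov's inequality permits (it does), so the coefficient of the gradient-difference term on the right is non-positive, meaning that even if we later drop that term we still retain a genuine contraction factor $\bigl(1 - \tfrac{2\eta\nu {\mu} }{\nu+ {\mu} }\bigr) < 1$, which is the form in which Balle et al.\ and our subsequent arguments in Section~\ref{sec:strongly_convex} will use it.
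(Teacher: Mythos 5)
Your proposal is correct: the expansion of the squared norm plus the strong co-coercivity inequality $\langle \nabla f(x)-\nabla f(x'), x-x'\rangle \geq \frac{\nu\mu}{\nu+\mu}\|x-x'\|^2 + \frac{1}{\nu+\mu}\|\nabla f(x)-\nabla f(x')\|^2$ yields exactly the stated bound, and your remark that the stepsize condition only matters for making the gradient-difference coefficient non-positive downstream is accurate. The paper itself does not prove this fact but cites it from the proof of Lemma~18 in~\cite{BalleBGG19}, and your argument is essentially the same standard derivation used there, so there is nothing to reconcile.
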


\begin{lemma}[Strict Contraction]\label{lemma:contractive_mapping}
There exist $0 < \eta < \frac{2}{\nu + \mu}$ and $0 < \mathfrak{L} < 1$ such that
when one ADMM iteration in Algorithm~\ref{alg:one_iteration}
is applied to any two different inputs $(x_t, \lambda_t)$
and $(x'_t, \lambda'_t)$ with the same
function $f$ (satisfying Assumption~\ref{Assumption_strongly-convex}),
the corresponding two outputs
$(x_{t+1}, \lambda_{t+1})$
and $(x'_{t+1}, \lambda'_{t+1})$
satisfy:

%
%

    $$ \|(x_{t+1} - x'_{t+1}, \lambda_{t+1} - \lambda'_{t+1})\|^2_*
    \leq
    \mathfrak{L}\|(x_{t} - x'_{t}, \lambda_{t} - \lambda'_{t})\|^2_*.$$
\end{lemma}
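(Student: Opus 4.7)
\medskip

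\noindent \textbf{Proof proposal for Lemma~\ref{lemma:contractive_mapping}.}
My plan is to follow essentially the same proof skeleton as in Lemma~\ref{lemma:ADMM_exp}, but substitute the two ``slack'' inequalities used there with their strictly sharper counterparts coming from strong convexity, and then show that the refined customized norm in Definition~\ref{def:norm_SC} is precisely calibrated to absorb these extra gains. Concretely, the two places in the proof of Lemma~\ref{lemma:ADMM_exp} that I plan to modify are: (i) the subgradient monotonicity of $g$, which gave only $\langle \mathfrak{d}\lambda_{t+1}, B\mathfrak{d}y_t\rangle \geq 0$ and will now be upgraded via $\mu_g$-strong convexity of $g$ to $\langle \mathfrak{d}\lambda_{t+1}, B\mathfrak{d}y_t\rangle \geq \mu_g \|\mathfrak{d}y_t\|^2$; and (ii) the appeal to Fact~\ref{fact:gd}, which only yielded $\|\mathfrak{d}x_t - \eta\mathfrak{d}\nabla f(x_t)\| \leq \|\mathfrak{d}x_t\|$, and will now be replaced by Fact~\ref{fact:SC_bound}, which for $0 < \eta < \tfrac{2}{\nu+\mu}$ gives
\[
\|\mathfrak{d}x_t - \eta\mathfrak{d}\nabla f(x_t)\|^2 \;\leq\; \left(1 - \tfrac{2\eta\nu\mu}{\nu+\mu}\right) \|\mathfrak{d}x_t\|^2 \;-\; \eta\left(\tfrac{2}{\nu+\mu} - \eta\right) \|\mathfrak{d}\nabla f(x_t)\|^2.
\]

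Repeating the calculation of Lemma~\ref{lemma:ADMM_exp} verbatim but carrying these extra negative terms through yields an inequality of the form
\[
\|\mathfrak{d}x_{t+1}\|^2 + \eta\beta\left\|\tfrac{1}{\beta}\mathfrak{d}\lambda_{t+1} - \mathfrak{d}Ax_{t+1}\right\|^2 + \eta\mu_g\|\mathfrak{d}y_t\|^2 \;\leq\; \|\mathfrak{d}x_t - \eta\mathfrak{d}\nabla f(x_t)\|\cdot\|\mathfrak{d}x_{t+1}\| + \eta\beta\left\|\tfrac{1}{\beta}\mathfrak{d}\lambda_t - \mathfrak{d}Ax_t\right\|\cdot\left\|\tfrac{1}{\beta}\mathfrak{d}\lambda_{t+1} - \mathfrak{d}Ax_{t+1}\right\|.
\]
The plan is then to apply Cauchy-Schwarz exactly as in Lemma~\ref{lemma:ADMM_exp} to convert the RHS into $\sqrt{P+Q}\sqrt{R+S}$, and after cancellation, use the refined bound from Fact~\ref{fact:SC_bound} on $R = \|\mathfrak{d}x_t - \eta\mathfrak{d}\nabla f(x_t)\|^2$. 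The critical algebraic observation is that the coefficient $c := (1-\tfrac{2\eta\nu\mu}{\nu+\mu}) + \tfrac{1}{\eta}(\tfrac{2}{\nu+\mu}-\eta)$ in the customized norm decomposes as $c\|\mathfrak{d}x_t\|^2 = (1-\tfrac{2\eta\nu\mu}{\nu+\mu})\|\mathfrak{d}x_t\|^2 + \tfrac{1}{\eta}(\tfrac{2}{\nu+\mu}-\eta)\|\mathfrak{d}x_t\|^2$; the first term matches the main contraction factor in Fact~\ref{fact:SC_bound}, while the second term is designed to absorb the leftover $\eta(\tfrac{2}{\nu+\mu}-\eta)\|\mathfrak{d}\nabla f(x_t)\|^2$ after using Lemma~\ref{lemma:step1} to re-express $\mathfrak{d}\nabla f(x_t)$ in terms of $\mathfrak{d}x_{t+1}$, $\mathfrak{d}\lambda_{t+1}$, $\mathfrak{d}x_t$, and $\mathfrak{d}y_t$.

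The main obstacle I expect is getting a \emph{uniform} contraction factor $\mathfrak{L}<1$ that applies simultaneously to both summands in the customized norm. The gradient step contracts only the $x$-component, so without further work the $\eta\beta\|\tfrac{1}{\beta}\mathfrak{d}\lambda - \mathfrak{d}Ax\|^2$ summand could remain non-contractive in the worst case (e.g.\ when $\mathfrak{d}x_t = 0$ but $\mathfrak{d}\lambda_t \neq 0$). This is exactly where the extra $\eta\mu_g\|\mathfrak{d}y_t\|^2$ surplus and the full-row-rank assumption on $A$ become essential: I will relate $\|\mathfrak{d}y_t\|^2$ to the ``$\lambda$-component'' $\|\tfrac{1}{\beta}\mathfrak{d}\lambda_t - \mathfrak{d}Ax_t\|^2$ via the update rule $\mathfrak{d}\lambda_{t+1} = \mathfrak{d}\lambda_t - \beta(A\mathfrak{d}x_t + B\mathfrak{d}y_t)$ (which gives $\tfrac{1}{\beta}\mathfrak{d}\lambda_t - \mathfrak{d}Ax_t = \tfrac{1}{\beta}\mathfrak{d}\lambda_{t+1} + \mathfrak{d}By_t$), while using the minimum singular value of $A$ from full row rank to bound $\|A^\top v\|$ below by a multiple of $\|v\|$. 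Balancing all three sources of contraction --- the $(1-\tfrac{2\eta\nu\mu}{\nu+\mu})$ factor on $\|x\|^2$, the leftover gradient term folded in via $c$, and the $\mu_g\|\mathfrak{d}y_t\|^2$ surplus channeling into the $\lambda$-component --- should produce an explicit $\mathfrak{L} \in (0,1)$ depending on $\eta$, $\nu$, $\mu$, $\mu_g$, $\beta$, and the singular values of $A$ and $B$. Finally, I will pick $\eta$ sufficiently small inside $(0, \tfrac{2}{\nu+\mu})$ so that all constants are valid and the resulting $\mathfrak{L}$ is strictly below $1$.
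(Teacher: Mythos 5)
Your first half tracks the paper's own proof closely: upgrading the subgradient monotonicity of $g$ to $\langle \mathfrak{d}\lambda_{t+1}, B\mathfrak{d}y_t\rangle \geq \mu_g\|\mathfrak{d}y_t\|^2$, replacing Fact~\ref{fact:gd} by Fact~\ref{fact:SC_bound}, and reading the $x$-coefficient of Definition~\ref{def:norm_SC} as ``contraction factor plus absorption term'' are all exactly what the paper does (the paper bounds the cross term by $\frac{1}{2\eta}\bigl(\|\mathfrak{d}x_t-\eta\mathfrak{d}\nabla f(x_t)\|^2+\|\mathfrak{d}x_{t+1}\|^2\bigr)$ rather than by your $\sqrt{P+Q}\sqrt{R+S}$ step, but that is a cosmetic difference). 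The genuine gap is in how you produce contraction for the $w$-component $\|\lambda-\beta Ax\|$. You propose to channel the surplus $\mu_g\|\mathfrak{d}y_t\|^2$ into that component via the $\lambda$-update identity and a minimum-singular-value bound on $A$; but $\|\mathfrak{d}y_t\|$ does not control $\|\frac{1}{\beta}\mathfrak{d}\lambda_t-A\mathfrak{d}x_t\|$ in general --- take $B=0$ (or any instance where the $y$-minimizer is insensitive to $\lambda_t-\beta Ax_t$): then $\mathfrak{d}y_t\equiv 0$, your surplus vanishes identically, and this channel yields nothing, even though the $w$-component still must (and does) contract for a different reason. In the paper, the $w$-contraction comes from the leftover term $\bigl(\frac{2}{\nu+\mu}-\eta\bigr)\|\mathfrak{d}\nabla f(x_t)\|^2$ itself: by Lemma~\ref{lemma:step1}, $\mathfrak{d}\nabla f(x_t)=A^\top(\mathfrak{d}\lambda_{t+1}-\beta A\mathfrak{d}x_{t+1})-\beta A^\top B\,\mathfrak{d}y_t+\frac{1}{\eta}(\mathfrak{d}x_t-\mathfrak{d}x_{t+1})$, and since $AA^\top=\I+DD^\top\succeq\I$ after Remark~\ref{remark:transform}, one gets $\|\mathfrak{d}\nabla f(x_t)\|^2\geq\frac{1}{4}\|\mathfrak{d}\lambda_{t+1}-\beta A\mathfrak{d}x_{t+1}\|^2-\frac{1}{\eta^2}\|\mathfrak{d}x_t\|^2-\frac{1}{\eta^2}\|\mathfrak{d}x_{t+1}\|^2-\beta^2\|A^\top B\|^2\|\mathfrak{d}y_t\|^2$; the positive $\frac14$-term is what makes the output $w$-coefficient strictly exceed the input one, while the $\mu_g\|\mathfrak{d}y_t\|^2$ surplus is used only to cancel the nuisance term $\beta^2\|A^\top B\|^2\|\mathfrak{d}y_t\|^2$. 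You actually hold the needed ingredient (re-expressing $\mathfrak{d}\nabla f$ via Lemma~\ref{lemma:step1}) but assign it the wrong role (pure absorption into the $x$-coefficient) and assign the $w$-contraction to a channel that can fail.

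A second, related miscalibration is your closing plan to take $\eta$ ``sufficiently small''. The mechanism above forces $\eta$ into a window bounded away from $0$: cancelling the $\|\mathfrak{d}y_t\|^2$ nuisance needs $\eta\geq\frac{2}{\nu+\mu}-\frac{2\mu_g}{\beta^2\|A^\top B\|^2}$, and making the output $x$-coefficient $1-\frac{1}{\eta}\bigl(\frac{2}{\nu+\mu}-\eta\bigr)$ exceed the input coefficient $\bigl(1-\frac{2\eta\nu\mu}{\nu+\mu}\bigr)+\frac{1}{\eta}\bigl(\frac{2}{\nu+\mu}-\eta\bigr)$ needs $\eta>\frac{4}{\nu+\mu+\sqrt{(\nu+\mu)^2+8\nu\mu}}$. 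As $\eta\to 0$ the $\frac{1}{\eta}\bigl(\frac{2}{\nu+\mu}-\eta\bigr)$ cost of the re-expression blows up and the inequality degenerates, so the paper instead picks $\eta$ in the nonempty interval (\ref{ineq:eta_range_for_SC}). Your plan would need both corrections --- the gradient-term route to $w$-contraction and the lower bound on $\eta$ --- to go through.
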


\begin{proof}
		Note that we consider two collections of variables,
		where one collection has variables with superscripts.
		Hence, we use the notation $\mathfrak{d} z := z - z'$ to indicate the difference between the
    original variable and the one with the superscript. 

\noindent \textbf{Inequalities from the Optimality of $y_t$.}
As in Lemma~\ref{lemma:step2}, the optimality of $y_t$ and $y'_t$ in \eqref{eq:inequalities:update_y} implies the following.

\begin{align}
    0 \in \partial g(y_t) + \beta B^\top\left( Ax_{t} + By_t - c - \frac{1}{\beta}\lambda_t \right)\notag\\
    0 \in \partial g(y_t') + \beta B^\top\left( Ax_{t}' + By_t' - c - \frac{1}{\beta}\lambda_t' \right) \notag
\end{align}

Since $g$ is $\mu_g$-strongly convex,
we have the monotonicity of subgradient,
i.e., $s \in \partial g(y_t)$
and $s' \in \partial g(y_t')$ imply that
$\langle s - s', y_t - y'_t \rangle \geq \mu_g \|y_t-y'_t\|^2$.
After rearranging and using \eqref{eq:inequalities:update_lambda}, we have

\begin{align} \label{eq:inequalities:subgradient_mononicity_SC}
\left\langle  \mathfrak{d}\lambda_{t+1}, B\mathfrak{d}y_t \right\rangle =
\left\langle \mathfrak{d}\lambda_t - \beta(A\mathfrak{d}x_t + B\mathfrak{d}y_t), B\mathfrak{d}y_t \right\rangle \ge \mu_g \|\mathfrak{d}y_t\|^2.
\end{align}

\noindent \textbf{Inequalities from the Optimality of $x_{t+1}$.}

From Lemma~\ref{lemma:step1},
the optimality of $x_{t+1}$ from \eqref{eq:inequalities:update_x} implies
the following.

\begin{align}
    \left\langle \nabla f(x_t) + \frac{1}{\eta} (x_{t+1} - x_t) + \beta A^\top \left( Ax_{t+1} + By_t - c - \frac{1}{\beta}\lambda_{t+1} \right), x - x_{t+1} \right\rangle = 0, & \forall x \in \R^n  \label{eq:inequalities:first-order-x}\\
    \left\langle \nabla f(x_t') + \frac{1}{\eta} (x_{t+1}' - x_t') + \beta A^\top \left( Ax_{t+1}' + By_t' - c - \frac{1}{\beta}\lambda_{t+1}' \right), x - x_{t+1}' \right\rangle = 0, & \forall x \in \R^n  \label{eq:inequalities:first-order-x'}
\end{align}

Substituting $x = x_{t+1}'$ in \eqref{eq:inequalities:first-order-x} and $x = x_{t+1}$ in \eqref{eq:inequalities:first-order-x'}, summing \eqref{eq:inequalities:first-order-x} and \eqref{eq:inequalities:first-order-x'} leads to
a lower bound for the following quantity:

\begin{align}\label{eq:bound_by_mu_g}
    &\left\langle\frac{1}{\eta} \mathfrak{d} x_t  -\mathfrak{d} \nabla f\left(x_t\right), \mathfrak{d} x_{t+1}\right\rangle
     =\frac{1}{\eta}\left\|\mathfrak{d} x_{t+1}\right\|^2+\left\langle\beta\left(A \mathfrak{d} x_{t+1}+B \mathfrak{d} y_t\right)-\mathfrak{d} \lambda_{t+1}, A \mathfrak{d} x_{t+1}\right\rangle \notag\\
    & =\frac{1}{\eta}\left\|\mathfrak{d} x_{t+1}\right\|^2+\left\langle\beta A \mathfrak{d} x_{t+1}-\mathfrak{d} \lambda_{t+1}, A \mathfrak{d} x_{t+1}+B \mathfrak{d} y_t\right\rangle+\left\langle\mathfrak{d} \lambda_{t+1}, B \mathfrak{d} y_t\right\rangle \notag\\
    & \geqslant \frac{1}{\eta}\left\|\mathfrak{d} x_{t+1}\right\|^2+\frac{1}{\beta}\left\langle\mathfrak{d}\lambda_{t+1}-A\mathfrak{d}x_{t+1}, \mathfrak{d}\lambda_{t+1}-A\mathfrak{d}x_{t+1}-(\mathfrak{d}\lambda_{t}-A\mathfrak{d}x_{t})\right\rangle+\mu_g\left\|\mathfrak{d} y_t\right\|^2 \tag{using \eqref{eq:inequalities:subgradient_mononicity_SC}}\\
    & \geqslant \frac{1}{\eta}\|\mathfrak{d} x_{t+1}\|^2+\frac{1}{2\beta}\left\|\mathfrak{d}\lambda_{t+1}-A\mathfrak{d}x_{t+1}\right\|^2-\frac{1}{2\beta}\left\|\mathfrak{d}\lambda_{t}-A\mathfrak{d}x_{t}\right\|^2+\mu_g\left\|\mathfrak{d} y_t\right\|^2,   
\end{align}

where the last inequality comes from $\|a\|^2+\|b\|^2\geq 2\langle a,b\rangle$.

We next derive an upper bound for $\left\langle\frac{1}{\eta} \mathfrak{d} x_t  -\mathfrak{d} \nabla f\left(x_t\right), \mathfrak{d} x_{t+1}\right\rangle$.
Using the Cauchy-Schwarz inequalty, this is as most:

\begin{align} \label{ineq:bounded_inner_product}
     &\frac{1}{\eta}\left\|\mathfrak{d} x_t-\eta \mathfrak{d} \nabla f\left(x_t\right)\right\| \cdot  \left\|\mathfrak{d} x_{t+1}\right\| 
    \leqslant \frac{1}{2 \eta}\left(\left\|\mathfrak{d} x_t-\eta \mathfrak{d} \nabla f\left(x_t\right)\right\|^2+\left\|\mathfrak{d} x_{t+1}\right\|^2\right) \notag
    \\
    &\leqslant\frac{1}{2\eta}\left(1-\frac{2\eta \nu  {\mu} }{\nu+ {\mu} }\right)\|\mathfrak{d} x_t\|^2+\frac{1}{2}\left(\eta-\frac{2}{\nu+ {\mu} }\right)\|\mathfrak{d} \nabla f(x_t)\|^2+\frac{1}{2\eta}\left\|\mathfrak{d} x_{t+1}\right\|^2.
\end{align}

where the last inequality comes from Fact~\ref{fact:SC_bound}.

Combining the upper and the lower bounds, we get

\begin{align} 
    &\frac{1}{\eta}\left\|\mathfrak{d} x_{t+1}\right\|^2+\frac{1}{\beta}\left\|\mathfrak{d}\lambda_{t+1}-A\mathfrak{d}x_{t+1}\right\|^2+\left(\frac{2}{\nu+ {\mu} }-\eta\right)\left\|\mathfrak{d} \nabla f\left(x_t\right)\right\|^2+2\mu_g \|\mathfrak{d} y_t\|^2 \notag
    \\ \leq
    &\frac{1}{\eta}\left(1-\frac{2 \eta \nu  {\mu} }{\nu+ {\mu} }\right)\left\|\mathfrak{d} x_{t}\right\|^2+\frac{1}{\beta}\left\|\mathfrak{d}\lambda_{t}-A\mathfrak{d}x_{t}\right\|^2. \label{SC_contractive}
\end{align}

We next give a lower bound for $\|\mathfrak{d} \nabla f(x_t)\|^2$. 

Below we use the inequality:
$\|p + q + r + s \|^2 \geq \frac{1}{4} \|p\|^2 - \|q\|^2 - \|r\|^2 - \|s\|^2$.
Hence, we have:

\begin{align}
    &\left\|\mathfrak{d} \nabla f\left(x_t\right)\right\|^2=\left\|A^{\top}\left[\mathfrak{d} \lambda_{t+1}-\beta\left(A \mathfrak{d} x_{t+1}+B \mathfrak{d} y_t\right)\right]+\frac{1}{\eta}\left(\mathfrak{d} x_t-\mathfrak{d} x_{t+1}\right)\right\|^2 \notag\\
    & =\left\| A^{\top}(\mathfrak{d} \lambda_{t+1}-\beta A \mathfrak{d}x_{t+1})+\frac{1}{\eta}\left(\mathfrak{d} x_t-\mathfrak{d} x_{t+1}\right)-\beta A^{\top} B\mathfrak{d} y_{t}\right\|^2 \notag \\
    &\geq \frac{1}{4}\left\| A^{\top}(\mathfrak{d} \lambda_{t+1}-\beta A \mathfrak{d}x_{t+1})\right\|^2-\frac{1}{\eta^2}\left\|\mathfrak{d} x_t\right\|^2-\frac{1}{\eta^2}\left\|\mathfrak{d} x_{t+1}\right\|^2-\left\|\beta A^{\top} B\mathfrak{d} y_{t}\right\|^2 \\
    &\geq \frac{1}{4}\left\|\mathfrak{d} \lambda_{t+1}-\beta A \mathfrak{d}x_{t+1}\right\|^2-\frac{1}{\eta^2}\left\|\mathfrak{d} x_t\right\|^2-\frac{1}{\eta^2}\left\|\mathfrak{d} x_{t+1}\right\|^2-\beta^2\left\|A^{\top}B\right\|^2\left\|\mathfrak{d} y_t\right\|^2, \notag
\end{align}
where the last inequality
follows from the spectral norm of the operator $A^{\top} B$
and the fact that
the minimum eigenvalue of $A A^{\top} = I + D D^{\top}$ is at least 1.


Apply this inequality to (\ref{SC_contractive}), we have
\begin{align}
    & \left[\frac{1}{ \eta}-\frac{1}{\eta^2}\left(\frac{2}{\nu+ {\mu} }-\eta\right)\right]\left\|\mathfrak{d} x_{t+1}\right\|^2 + 
		\left[\frac{1}{\beta}+\frac{1}{4}\left(\frac{2}{\nu+ {\mu} }-\eta\right)\right]\left\|\mathfrak{d} \lambda_{t+1}-\beta A \mathfrak{d}x_{t+1}\right\|^2 \notag
    \\
    & +2\mu_g\left\|\mathfrak{d} y_t\right\|^2-\beta^2\left\|A^{\top}B\right\|^2 \left(\frac{2}{\nu+ {\mu} }-\eta\right)\left\|\mathfrak{d} y_t\right\|^2 \notag\\
    & \leqslant\left[\frac{1}{ \eta}\left(1-\frac{2 \eta \nu  {\mu} }{\nu+ {\mu} }\right)+\frac{1}{ \eta^2}\left(\frac{2}{\nu+ {\mu} }-\eta\right)\right]\left\|\mathfrak{d} x_t\right\|^2+\frac{1}{\beta}\left\|\mathfrak{d} \lambda_{t}-\beta A \mathfrak{d}x_{t}\right\|^2. \label{ineq:operator_bound}
\end{align}

We first wish to eliminate the term
involving $\left\|\mathfrak{d} y_t\right\|^2$.
It suffices to choose $\eta$ such that the following holds:
$2\mu_g-\beta^2\left\|A^{\top} B\right\|^2\left(\frac{2}{\nu+ {\mu} }-\eta\right) \geq 0$,

which is equivalent to $\eta \geq \frac{2}{\nu+ {\mu} }-\frac{2 \mu_g}{\beta^2\|A^{\top}B\|^2}$.

We next multiply (\ref{ineq:operator_bound}) by $\eta$
to get an inequality of the form:

 $P \cdot \|\mathfrak{d}x_{t+1}\|^2+ Q \cdot \|\mathfrak{d}\lambda_{t+1}-\beta A\mathfrak{d}x_{t+1}\|^2\leq R \cdot \|\mathfrak{d}x_{t}\|^2+ S \cdot \|\mathfrak{d}\lambda_{t}-\beta A\mathfrak{d}x_{t}\|^2$,

where $R := \left(1-\frac{2 \eta \nu  {\mu} }{\nu+ {\mu} }\right)+\frac{1}{\eta}\left(\frac{2}{\nu+ {\mu} }-\eta\right)$ and $S := \frac{\eta}{\beta}$ have the right values,
because according to Definition~\ref{def:norm_SC}, we have
$\|(\mathfrak{d} x_t, \mathfrak{d} \lambda_t)\|^2_* 
=
R \cdot \|\mathfrak{d}x_{t}\|^2+ S \cdot \|\mathfrak{d}\lambda_{t}-\beta A\mathfrak{d}x_{t}\|^2$

Moreover, we have $Q := \frac{\eta}{\beta}+\frac{\eta}{4}\left(\frac{2}{\nu+ {\mu} }-\eta\right)
> S$, whenever $\eta < \frac{2}{\nu + \mu}$.

Finally, we have $P := 1-\frac{1}{\eta}\left(\frac{2}{\nu+ {\mu} }-\eta\right)$,
and we wish to find $\eta$ such that $P > R$.
If we can achieve this,
we can define $\mathfrak{L} := \max\{ \frac{R}{P}, \frac{S}{Q}\} < 1$
such that:

$\|(\mathfrak{d} x_{t+1}, \mathfrak{d} \lambda_{t+1})\|^2_* 
\leq \mathfrak{L} \cdot( 
P \cdot \|\mathfrak{d}x_{t+1}\|^2+ Q \cdot \|\mathfrak{d}\lambda_{t+1}-\beta A\mathfrak{d}x_{t+1}\|^2) \leq
\mathfrak{L} \cdot \|(\mathfrak{d} x_t, \mathfrak{d} \lambda_t)\|^2_* $.

To ensure $P > R$, it suffices to satisfy: $\frac{2\nu  {\mu} }{\nu+ {\mu} }-\frac{2}{\eta^2}\left(\frac{2}{\nu+ {\mu} }-\eta\right)>0$,
which is equivalent to

$\eta >
\frac{4}{\nu+ {\mu} +\sqrt{\left(\nu+ {\mu} \right)^2+8 \nu  {\mu} }}
$.

\ignore{
In order to derive a contractive map from this inequality, it sufficies to let $a>c$ and $b>d$. Hence we require
\begin{align}
    & \begin{array}{l}
        \frac{2\nu  {\mu} }{\nu+ {\mu} }-\frac{2}{\eta^2}\left(\frac{2}{\nu+ {\mu} }-\eta\right)>0 \\
        2\mu_g-\beta^2\left\|A^{\top} B\right\|^2\left(\frac{2}{\nu+ {\mu} }-\eta\right)>0 \notag
        \end{array}
\end{align}
to make \ref{ineq:operator_bound} imply a contractive mapping, 
}

To summarize, we check that we can pick the value of $\eta$ such that:

\begin{align} \label{ineq:eta_range_for_SC}
    \eta \in \left( \max \left\{\frac{4}{\nu+ {\mu} +\sqrt{\left(\nu+ {\mu} \right)^2+8 \nu  {\mu} }}, \frac{2}{\nu+ {\mu} }-\frac{2 \mu_g}{\beta^2\|A^{\top}B\|^2}\right\}, \frac{2}{\nu+  {\mu} }\right).
\end{align}
Since this interval is non-empty, the proof is completed.
\end{proof}

\subsection{Achieving One-Step Privacy under Strongly Convex Assumption}
\label{sec:one_step_privacy_SC}

Based on the same discussion as Lemma~\ref{lemma:K_privacy}, we get a similar result as in previous sections. 
In the following analysis, we assume that $\eta$ satisfies the assumption in Lemma~\ref{lemma:contractive_mapping}.
\begin{lemma}[One-Step Privacy of Operator $K$ under Strongly Convex Assumption]
    \label{lemma:K_privacy_SC}
    Suppose the objective functions are strongly convex as
		stated in Assumption~\ref{Assumption_strongly-convex}. Given inputs $(\widetilde{x}_t, \lambda_t)$, $(\widetilde{x}'_t, \lambda'_t)$
		and Markov operator $K \in \mcal{K}$ (corresponding to two
		ADMM iterations using convex $\nu$-smooth functions
		and $\mcal{N}(0, \sigma^2 \I_n)$ noise),
		we have:
		\begin{align}
        \Dz\left(K(\widetilde{x}_t, \lambda_t) \bigl\| K(\widetilde{x}'_t, \lambda'_t) \right) \le C_{\mcal{K}}\left\| (\widetilde{x}_{t} - \widetilde{x}'_{t}, 
				\lambda_{t} - \lambda'_{t})	\right\|^2_*, \notag
    \end{align}
		 where $C_{\mcal{K}} = \frac{\mathfrak{L}}{2 \sigma^2} \max\left(\frac{2}{\left(1-\frac{2 \eta \nu  {\mu} }{\nu+ {\mu} }\right)+\frac{1}{\eta}\left(\frac{2}{\nu+ {\mu} }-\eta\right)}, \frac{3}{\eta\beta}\right)$, and $\eta > 0$ and $0<\mathfrak{L}<1$ constants
		chosen as in Lemma~\ref{lemma:contractive_mapping}.
\end{lemma}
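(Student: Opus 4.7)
The proof follows the same template as Lemma~\ref{lemma:K_privacy}, with only the final step upgraded using strong convexity. In order, I will (i) invoke Lemma~\ref{lemma:equiv} to reduce $K(\widetilde{x}_t,\lambda_t)$ to the adaptive composition $(\mcal{M}_2 \circ \mcal{M}_1)(\widetilde{x}_t,\lambda_t)$; (ii) apply Lemma~\ref{lemma:composition_privacy} verbatim --- it depends only on convexity of $g$ and of each $f_t$, both implied by Assumption~\ref{Assumption_strongly-convex} --- to obtain the intermediate inequality
\begin{align}
\Dz\!\left(K(\widetilde{x}_t,\lambda_t) \,\bigl\|\, K(\widetilde{x}'_t,\lambda'_t)\right) \le \frac{1}{2\sigma^2}\!\left(2\|x_{t+1}-x'_{t+1}\|^2 + \tfrac{3}{\beta^2}\|\lambda_{t+1}-\lambda'_{t+1}-\beta A(x_{t+1}-x'_{t+1})\|^2\right); \notag
\end{align}
(iii) repackage the right-hand side as a multiple of $\|(x_{t+1}-x'_{t+1},\lambda_{t+1}-\lambda'_{t+1})\|_*^2$ under the \emph{new} norm of Def~\ref{def:norm_SC}; and (iv) close the argument by invoking Lemma~\ref{lemma:contractive_mapping} --- this is the one place strong convexity is used --- to pull the bound back to the input pair.

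Step~(iii) is the only piece of fresh algebra, and it is essentially one line: with $R := (1-\frac{2\eta\nu\mu}{\nu+\mu})+\frac{1}{\eta}(\frac{2}{\nu+\mu}-\eta)$ denoting the $\|x\|^2$-coefficient of the new norm, factoring out $\max(\frac{2}{R},\frac{3}{\eta\beta})$ matches the coefficients of $\|x_{t+1}-x'_{t+1}\|^2$ and $\|\lambda_{t+1}-\lambda'_{t+1}-\beta A(x_{t+1}-x'_{t+1})\|^2$ against $R$ and $\frac{\eta}{\beta}$ respectively. This bounds the right-hand side of step~(ii) by $\frac{1}{2\sigma^2}\max(\frac{2}{R},\frac{3}{\eta\beta})\cdot\|(x_{t+1}-x'_{t+1},\lambda_{t+1}-\lambda'_{t+1})\|_*^2$. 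Step~(iv) then multiplies this by~$\mathfrak{L}$, yielding precisely the constant $C_{\mcal{K}}=\frac{\mathfrak{L}}{2\sigma^2}\max(\frac{2}{R},\frac{3}{\eta\beta})$ claimed in the lemma.

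The main technical hazard I anticipate is purely a bookkeeping one: two distinct customized norms are simultaneously in play. The original norm of Def~\ref{defn:norm} is implicitly used \emph{inside} the proof of Lemma~\ref{lemma:m2_privacy} (to pass from $\|x_{t+2}-x'_{t+2}\|^2$ to $\|\widetilde{x}_{t+1}-\widetilde{x}'_{t+1}\|^2$ via convex non-expansion of one ADMM iteration), whereas the new norm of Def~\ref{def:norm_SC} appears only in the outer application of Lemma~\ref{lemma:contractive_mapping}. Reserving the sharper contraction factor $\mathfrak{L}$ for that outer layer --- rather than attempting to also refine Lemma~\ref{lemma:m2_privacy} itself --- is what lets the argument reuse Lemmas~\ref{lemma:m1_privacy}, \ref{lemma:m2_privacy}, and \ref{lemma:composition_privacy} verbatim, and I believe this is the cleanest route to the stated constant.
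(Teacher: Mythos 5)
Your proposal is correct and follows essentially the same route as the paper's own proof: the paper likewise reuses the divergence bound of Lemma~\ref{lemma:composition_privacy} (asserting the analysis of Section~\ref{sec:one_step_privacy} carries over), matches the coefficients $2$ and $\frac{3}{\eta\beta}$ against the new norm of Definition~\ref{def:norm_SC} via the factor $\max\bigl(\frac{2}{R},\frac{3}{\eta\beta}\bigr)$, and then applies Lemma~\ref{lemma:contractive_mapping} once to pull the bound back to $(\widetilde{x}_t-\widetilde{x}'_t,\lambda_t-\lambda'_t)$, which is exactly your steps (ii)--(iv). Your remark about the two norms coexisting (the original norm inside Lemma~\ref{lemma:m2_privacy}, the refined norm only in the outer contraction step) is the same implicit bookkeeping the paper relies on when it declares the earlier analysis ``still valid'' in the strongly convex case.
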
 

\begin{proof}
    Similar to Section~\ref{sec:one_step_privacy}, denote $ w_t=\lambda_t-Ax_t$. 
		Observe that the analysis in
		Section~\ref{sec:one_step_privacy} is still valid for the strongly convex case.
		Hence, we can use the same divergence bound given in Lemma~\ref{lemma:composition_privacy}: 
    \begin{align}
        &\Dz\left(\left(\widetilde{w}_{t+1}, \widetilde{x}_{t+2}\right)\bigl\| \left(\widetilde{w}'_{t+1}, \widetilde{x}'_{t+2}\right)\right) \\
        &\le \frac{1}{2\sigma^2}\left(2\left\| x_{t+1} - x'_{t+1} \right\|^2 + 3\left\|\frac{1}{\beta}\lambda_{t+1} - Ax_{t+1} - \frac{1}{\beta}\lambda'_{t+1} +  Ax'_{t+1} \right\|^2\right) \notag\\
        &\le \frac{C_{\mcal{K}}}{\mathfrak{L}} \left([(1-\frac{2 \eta \nu  {\mu} }{\nu+ {\mu} })+\frac{1}{\eta}(\frac{2}{\nu+ {\mu} }-\eta)]\left\| x_{t+1} - x'_{t+1} \right\|^2 + \eta\beta\left\|\frac{1}{\beta}\lambda_{t+1} - Ax_{t+1} - \frac{1}{\beta}\lambda'_{t+1} +  Ax'_{t+1} \right\|^2\right) \notag \\
				& = \frac{C_{\mcal{K}}}{\mathfrak{L}} \cdot \| (x_{t+1} - x'_{t+1}, \lambda_{t+1} - \lambda'_{t+1} ) \|^2_* \tag{Definition~\ref{def:norm_SC}}\\
                &\le C_{\mcal{K}} \cdot \|\widetilde{x}_{t}-\widetilde{x}'_{t},\lambda_{t}-\lambda'_{t}\|_*^2 \tag{Lemma~\ref{lemma:contractive_mapping}},
    \end{align}
		
where the second inequality comes
from the choice of $C_{\mcal{K}} := \frac{\mathfrak{L}}{2 \sigma^2} \max\left(\frac{2}{\left(1-\frac{2 \eta \nu  {\mu} }{\nu+ {\mu} }\right)+\frac{1}{\eta}\left(\frac{2}{\nu+ {\mu} }-\eta\right)}, \frac{3}{\eta\beta}\right)$. 
    
\end{proof}

\subsection{Combining Everything Together to Achieve Privacy Amplification by Iteration for ADMM under Strongly Convex Assumption}
\label{sec:combining_SC}

To adopt the framework of Theorem~\ref{th:privacy_amp}, we need to show that the Markov operators $K \in \mathcal{K}$ satisfy the contractive property and one-step privacy.
We adopt the same definition of the $\infty$-Wasserstein distance as definition~\ref{defn:wasserstein2}. 
\begin{lemma}[Contractive Property of Operator $K$]
\label{lemma:K_exp_SC}
    Given inputs $(\widetilde{x}_t, \lambda_t)$ and $(\widetilde{x}'_t, \lambda'_t)$
		and Markov operator $K \in \mcal{K}$ (corresponding to two noisy
		ADMM iterations using convex $\nu$-smooth functions
		and $\mcal{N}(0, \sigma^2 \I_n)$ noise),
		the natural coupling is a witness that
		
		$$W_* \left(K(\widetilde{x}_t, \lambda_t) \| K(\widetilde{x}'_t, \lambda'_t)   \right)^2 \leq \mathfrak{L}^2 \left \| (\widetilde{x}_{t} - \widetilde{x}'_{t}, \lambda_{t} - \lambda'_{t} )      \right \|^2_*,$$
        where $0<\mathfrak{L}<1$ is chosen
				as in Lemma~\ref{lemma:contractive_mapping}.		
\end{lemma}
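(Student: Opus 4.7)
The plan is to mirror the argument used for Lemma~\ref{lemma:K_exp} in the general convex case, but replace the non-expansion bound from Lemma~\ref{lemma:ADMM_exp} with the strict contraction bound established in Lemma~\ref{lemma:contractive_mapping}. The Markov operator $K$ consists of two noisy ADMM iterations, and under the natural coupling both input scenarios use the same realizations $(N_{t+1}, N_{t+2})$ of the Gaussian masking noise. Consequently, each of the two iterations reduces to a deterministic ADMM iteration acting on the difference variables $(\mathfrak{d}\widetilde{x}, \mathfrak{d}\lambda)$, where $\mathfrak{d}$ denotes the difference across the two scenarios.

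The first step is to observe that the customized norm in Definition~\ref{def:norm_SC} depends only on $\mathfrak{d}x$ and $\mathfrak{d}\lambda - \beta A \mathfrak{d}x$. Hence, after the mask $\widetilde{x}_{t+1} \gets x_{t+1} + N_{t+1}$ is applied, the natural coupling (for which $\mathfrak{d}N_{t+1} = 0$) yields $\mathfrak{d}\widetilde{x}_{t+1} = \mathfrak{d}x_{t+1}$ and $\mathfrak{d}\lambda_{t+1} - \beta A \mathfrak{d}\widetilde{x}_{t+1} = \mathfrak{d}\lambda_{t+1} - \beta A \mathfrak{d}x_{t+1}$, so
\[
\|(\mathfrak{d}\widetilde{x}_{t+1}, \mathfrak{d}\lambda_{t+1})\|_*^2 = \|(\mathfrak{d}x_{t+1}, \mathfrak{d}\lambda_{t+1})\|_*^2.
\]
The same identity holds at iteration $t+2$.

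The second step is to invoke Lemma~\ref{lemma:contractive_mapping} twice. Applying it to the first (deterministic) ADMM iteration with input pair $(\widetilde{x}_t, \lambda_t)$ and $(\widetilde{x}'_t, \lambda'_t)$ gives $\|(\mathfrak{d}x_{t+1}, \mathfrak{d}\lambda_{t+1})\|_*^2 \leq \mathfrak{L}\|(\mathfrak{d}\widetilde{x}_t, \mathfrak{d}\lambda_t)\|_*^2$. Combined with the mask invariance above, the input to the second ADMM iteration satisfies the same bound. Applying Lemma~\ref{lemma:contractive_mapping} again yields $\|(\mathfrak{d}x_{t+2}, \mathfrak{d}\lambda_{t+2})\|_*^2 \leq \mathfrak{L}^2 \|(\mathfrak{d}\widetilde{x}_t, \mathfrak{d}\lambda_t)\|_*^2$, and one further application of the mask invariance at step $t+2$ gives the same bound on $\|(\mathfrak{d}\widetilde{x}_{t+2}, \mathfrak{d}\lambda_{t+2})\|_*^2$. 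Since this bound holds pointwise on the support of the natural coupling, that coupling is a witness for the Wasserstein inequality $W_*(K(\widetilde{x}_t, \lambda_t) \,\|\, K(\widetilde{x}'_t, \lambda'_t))^2 \leq \mathfrak{L}^2 \|(\widetilde{x}_t - \widetilde{x}'_t, \lambda_t - \lambda'_t)\|_*^2$, as required.

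I do not anticipate any serious obstacles, since all the heavy lifting has been done in Lemma~\ref{lemma:contractive_mapping}; the only small subtlety is verifying that masking with the same noise preserves the customized norm of the difference, which was the very reason the norm was designed in terms of $\lambda - \beta A x$ rather than $\lambda$ alone.
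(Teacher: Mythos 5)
Your proposal is correct and follows essentially the same route as the paper: under the natural coupling the shared noise cancels in the difference variables, so $K$ reduces to two deterministic ADMM iterations, and two applications of Lemma~\ref{lemma:contractive_mapping} give the $\mathfrak{L}^2$ factor. The only difference is that you make explicit the mask-invariance observation ($\mathfrak{d}\widetilde{x}_{t+1}=\mathfrak{d}x_{t+1}$, hence the customized norm of the difference is unchanged), which the paper leaves implicit.
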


\begin{proof}
    By the natural coupling, we are using the same randomness $(N_{t+1}, N_{t+2})$ for the two input
    scenarios. Observe that under this coupling, we can treat $K$ as two (deterministic) ADMM iterations. Moreover, in each iteration, $f$ is $ {\mu} $-strongly convex and $\nu$-smooth. Therefore, applying
    Lemma~\ref{lemma:contractive_mapping} twice gives the required result.
\end{proof}

Now we are able to prove the main result in Theorem~\ref{th:ADMM_privacy_SC}.
\begin{proof}[Proof of Theorem~\ref{th:ADMM_privacy_SC}]
The same proof as Theorem~\ref{th:privacy_amp} can be applied, according to
the established corresponding condition~(A) in Lemma~\ref{lemma:K_exp_SC}
and condition~(B) in Lemma~\ref{lemma:K_privacy_SC}. Note that $L$ in Theorem~\ref{th:privacy_amp}
corresponds to $\mathfrak{L}^2$.

However, observe that
\begin{align}
\|(x_0 - x'_0, \lambda_0-\lambda_0)\|^2_* &= \left(1-\frac{2 \eta \nu  {\mu} }{\nu+ {\mu} }\right)+\frac{1}{\eta}\left(\frac{2}{\nu+ {\mu} }-\eta\right)\|x_0 - x'_0 \|^2 + \eta \beta \|A (x_0 - x_0') \|^2 \notag
\\
&\leq \left[ \left(1-\frac{2 \eta \nu  {\mu} }{\nu+ {\mu} }\right)+\frac{1}{\eta}\left(\frac{2}{\nu+ {\mu} }-\eta\right) + \eta \beta \|A\|^2 \right] \|x_0 - x_0'\|^2,
\end{align}
which explains the factor $\left[(1-\frac{2 \eta \nu  {\mu} }{\nu+ {\mu} })+\frac{1}{\eta}(\frac{2}{\nu+ {\mu} }-\eta) + \eta \beta \|A\|^2\right]$ in the constant.

The desired bound follows given that $2T$ noisy ADMM iterations
correspond to the composition of $T$ Markov operators in the collection~$\mcal{K}$.
\end{proof}

\section{Privacy for All Users}
\label{sec:other_users}

As mentioned in the introduction,
given that we have analyzed the privacy guarantee 
from the perspective of the first user,
it is straightforward to apply the techniques in~\cite{DBLP:conf/focs/FeldmanMTT18}
to extend the privacy guarantees to all users.
For completeness, we explain the technical details.

Recall that we model the private data
of user~$t$ as some $f_t \in \mcal{D}$.
Given a distribution $Z_{t-1}$ on some sample space~$\Omega$,
we interpret iteration~$t$ using the private data $f_t$ as
a Markov operator $\Phi(f_t; \cdot) : \mcal{P}(\Omega) \rightarrow \mcal{P}(\Omega)$
that produces the distribution $Z_t := \Phi(f_t; Z_{t-1})$.

The conclusion in Theorem~\ref{th:ADMM_privacy}
and Corollary~\ref{cor:first_user} can be rephrased
in the following assumption, which states
that the privacy infringement for a user essentially
depends on how many iterations have passed after that user.
However, since \renyi divergence is only ``weakly convex'' 
(see~\cite[Lemma 25]{DBLP:conf/focs/FeldmanMTT18}),
we have to explicitly include the order~$\alpha$
of the divergence $\D_\alpha$ in the result.

\begin{assumption}[Privacy Amplification by Iteration]
\label{assume:amp}
Suppose $f_{t+1} \sim f_{t+1}' \in \mcal{D}$ are neighboring private data
for user~$t$, and users in subsequent iterations have data $f_{t+2}, f_{t+3}, \ldots, f_{t+T} \in \mcal{D}$.
Starting from the same distribution $Z_t$,
suppose $Z_{t+1} := \Phi(f_t; Z_t)$ and $Z'_{t+1} := \Phi(f'_t; Z_t)$.
For $j \geq 2$, the same $f_{t+j}$ is used in iteration $t+j$ to produce
the distributions
$Z_{t+j} := \Phi(f_{t+j}; Z_{t+j-1})$
and $Z'_{t+j} := \Phi(f_{t+j}; Z'_{t+j-1})$.

Then, privacy amplification by iteration states that for some $C > 0$,
for any $\alpha \geq 1$,

$\D_\alpha(Z_{t+T} \| Z'_{t+T}) \leq \frac{\alpha C}{T}$.
\end{assumption}

\noindent \textbf{Random Number of Iterations after a User.}
From Assumption~\ref{assume:amp}, the idea is to introduce extra randomness
such that for any user, the number~$L$ of iterations that depends on its private data (including its own iteration)
is not too small with constant probability.  Here are two ideas to introduce this
extra randomness when there are $N$ users.

\begin{itemize}
\item \emph{Random Permutation.} The $N$ users' data are processed
in a uniformly random permutation. In this case, for any fixed user, $L$ has
a uniform distribution on $\{1, 2, \ldots, N\}$
and $\E[\frac{1}{L}] = O(\frac{\log N}{N})$.

\item \emph{Random Stopping.} The $N$ users follow some deterministic 
arbitrary order. A random number $T$ is sampled from $\{\frac{N}{2} + 1,
\ldots, N\}$ and only the first $T$ users' data are used.

For each of the first $\frac{N}{2}$ users, the number
$L$ has a distribution that stochastically dominates 
the uniform distribution on $\{\frac{N}{2} + 1,
\ldots, N\}$.

For $t \geq \frac{N}{2} + 1$,
the $t$-th user's data will not be used with probability $\frac{t - N/2}{N/2}$,
in which case we use the convention $L = + \infty$;
given that its data is used, then $L$
is uniformly distributed in $\{1, 2, N - t\}$.

In any case, we can verify that $\E[\frac{1}{L}] \leq O(\frac{\log N}{N})$.
\end{itemize}

\noindent \textbf{Weak Convexity of \renyi Divergence.}
If $\D_\alpha(\cdot \| \cdot)$ were jointly convex
in both of its arguments,
then in Assumption~\ref{assume:amp},
we could simply replace the term $\frac{1}{T}$ with 
$\E[\frac{1}{L}] = O(\frac{\log N}{N})$.
However, as shown in~\cite[Lemma 25]{DBLP:conf/focs/FeldmanMTT18},
there is only a weak convexity result for $\D_\alpha$,
where we quote a special case as follows.

\begin{lemma}[Weak Convexity for $\D_\alpha$]
\label{lemma:weak_convex_D}
Fix $\alpha \geq 1$.
Suppose $\Lambda$ is an index set
such that for each $\ell \in \Lambda$,
$P_\ell$ and $Q_\ell$ are distributions
satisfying $\D_\alpha( P_\ell \| Q_\ell) \leq \frac{1}{\alpha - 1}$.

For any distribution $L$ on $\Lambda$, we have:
$\D_\alpha( P_L \| Q_L) \leq 2 \E_{\ell \gets L}[\D_\alpha( P_\ell \| Q_\ell)]$.
\end{lemma}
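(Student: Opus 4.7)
The plan is to reduce the lemma to a purely real-variable inequality on $[0,1]$. Writing $M_\ell := \E_{x \gets Q_\ell}\left[(P_\ell(x)/Q_\ell(x))^\alpha\right]$ so that $\D_\alpha(P_\ell \| Q_\ell) = \frac{1}{\alpha-1}\ln M_\ell$, the first step is to apply the data processing inequality to the pair of joint distributions on $\Lambda \times \Omega$ in which we sample $\ell \gets L$ and then, given $\ell$, draw $x$ either from $P_\ell$ or from $Q_\ell$. Forgetting the first coordinate recovers $P_L$ and $Q_L$ respectively, and since both joints share the common marginal $L$ on $\ell$, a direct computation of the \renyi divergence of the joints simplifies to
\[
\D_\alpha(P_L \| Q_L) \;\le\; \frac{1}{\alpha-1} \ln \E_{\ell \gets L}[M_\ell].
\]

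Next I would set $b_\ell := \ln M_\ell = (\alpha-1)\D_\alpha(P_\ell \| Q_\ell)$, which by hypothesis lies in $[0,1]$ (noting $M_\ell \ge 1$ via Jensen applied to the convex map $z \mapsto z^\alpha$). After dividing the displayed bound by $\alpha-1$, the lemma reduces to the real-variable inequality $\ln \E_{\ell \gets L}[e^{b_\ell}] \le 2\, \E_{\ell \gets L}[b_\ell]$ under the constraint $b_\ell \in [0,1]$. For this, convexity of $\exp$ yields the chord bound $e^b \le 1 + (e-1)b$ on $[0,1]$, so with $\mu := \E_{\ell \gets L}[b_\ell] \in [0,1]$ one obtains $\E[e^{b_L}] \le 1 + (e-1)\mu$, after which it remains to verify $\ln(1 + (e-1)\mu) \le 2\mu$. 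Both sides vanish at $\mu = 0$, and the derivative of $g(\mu) := 2\mu - \ln(1 + (e-1)\mu)$ is $g'(\mu) = 2 - \frac{e-1}{1+(e-1)\mu} \ge 3 - e > 0$ throughout $[0,1]$, so $g \ge 0$ there.

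The delicate part I expect is in using the boundedness hypothesis $\D_\alpha(P_\ell \| Q_\ell) \le \frac{1}{\alpha-1}$ to exactly the right extent. \renyi divergence is only jointly quasi-convex, not jointly convex, in its pair of arguments, so no inequality of this form can hold without restricting the magnitudes of the per-index divergences. The chord estimate $e^b \le 1 + (e-1)b$ is valid precisely on $[0,1]$, and relaxing the threshold from $\frac{1}{\alpha-1}$ to $\frac{\tau}{\alpha-1}$ would degrade the multiplicative factor to roughly $\frac{e^\tau - 1}{\tau}$; it is precisely the choice $\tau = 1$ that converts the Jensen gap (which, by itself, points in the wrong direction) into the clean multiplicative constant $2$ appearing in the conclusion.
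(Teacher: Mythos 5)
Your proof is correct; the paper does not prove this lemma itself but quotes it as a special case of Lemma~25 of Feldman et al.\ (FOCS 2018), and your argument is essentially the same one used there: pass to the joint distributions on $\Lambda \times \Omega$ with common first marginal $L$, apply data processing to get $\D_\alpha(P_L \| Q_L) \leq \frac{1}{\alpha-1} \ln \E_{\ell \gets L}\left[e^{(\alpha-1)\D_\alpha(P_\ell \| Q_\ell)}\right]$, and then use that the exponents lie in $[0,1]$ together with an elementary chord/logarithm estimate (your $e^{b} \leq 1+(e-1)b$ plus the derivative check; one can also use $e^{b} \leq 1+2b$ and $\ln(1+y) \leq y$). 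The only loose end is the boundary case $\alpha = 1$, where dividing by $\alpha-1$ is not meaningful; there the hypothesis is vacuous and the conclusion follows by continuity in $\alpha$ (or directly from joint convexity of KL divergence), which is worth a one-line remark.
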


In order to apply Lemma~\ref{lemma:weak_convex_D}
to Assumption~\ref{assume:amp},
the constant $C$ actually needs to be chosen according to~$\alpha$.
In the context of our noisy ADMM, this can be achieved
by choosing a large enough magnitude~$\sigma$ 
for the noise added in each iteration.

\begin{corollary}[Extending Privacy Guarantees to All Users]
Suppose Assumption~\ref{assume:amp} holds
with $C \leq \frac{1}{\alpha(\alpha-1)}$.
For $N$ users, suppose (i) random permutation or (ii) random stopping
is employed for the iterative process.
Then, releasing the final variable is $O(\alpha C \cdot \frac{\log N}{N})$-$\mathsf{RDP}_\alpha$ with respect to each user.
\end{corollary}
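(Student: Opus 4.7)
The plan is to fix an arbitrary target user, treat the extra randomness (random permutation, or random stopping time) as a public index, condition on it so Assumption~\ref{assume:amp} applies verbatim to the deterministic suffix, and then recombine the conditional divergences using the weak convexity Lemma~\ref{lemma:weak_convex_D}. The hypothesis $C \leq \frac{1}{\alpha(\alpha-1)}$ is there precisely so that each conditional $\D_\alpha$-divergence is at most $\frac{1}{\alpha-1}$, matching the precondition of that lemma.

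First, I would fix a user $u$ and consider two scenarios that differ only in $u$'s private datum $f_u \sim f_u'$. Let $\Pi$ denote the public source of extra randomness (a uniformly random permutation, or a stopping time $T$ drawn uniformly from $\{N/2+1,\ldots,N\}$). Conditioned on $\Pi$, the sequence of functions actually fed into the $2T$ noisy ADMM iterations is deterministic, and the number $L=L(\Pi)$ of iterations after (and including) $u$'s iteration is a deterministic function of $\Pi$; set $L=+\infty$ if $u$ is never used, in which case the two conditional output distributions coincide and contribute $0$. Applying Assumption~\ref{assume:amp} to each fixed $\Pi$ with $L(\Pi)$ playing the role of $T$ yields
\[
\D_\alpha\bigl(Z^{(\Pi)} \,\big\|\, Z'^{(\Pi)}\bigr) \;\leq\; \frac{\alpha C}{L(\Pi)} \;\leq\; \frac{1}{\alpha-1}.
\]

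Second, I would invoke Lemma~\ref{lemma:weak_convex_D} with the index set equal to the support of $\Pi$ to obtain
\[
\D_\alpha(Z \,\|\, Z') \;\leq\; 2\,\E_\Pi\!\left[\frac{\alpha C}{L(\Pi)}\right] \;=\; 2\alpha C \cdot \E\!\left[\tfrac{1}{L}\right].
\]
It then remains to show $\E[1/L] = O(\log N / N)$ for each scheme. For random permutation, $L$ is uniform on $\{1,\ldots,N\}$, so $\E[1/L] = \frac{1}{N}\sum_{k=1}^{N}\frac{1}{k} = O(\log N/N)$. For random stopping, if $u$ sits in the first $N/2$ positions of the deterministic order then $L$ stochastically dominates the uniform distribution on $\{N/2+1,\ldots,N\}$, giving $\E[1/L] = O(1/N)$; if $u$ sits in position $t \geq N/2+1$, then with probability $\frac{t - N/2}{N/2}$ its data is skipped (so $1/L = 0$), and otherwise $L$ is uniform over a set of size $\Theta(N-t+1)$, whose contribution to $\E[1/L]$ summed over $t$ is again $O(\log N/N)$.

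The main obstacle I anticipate is bookkeeping: Assumption~\ref{assume:amp} is stated for a fixed deterministic function sequence, while both randomization schemes turn the overall output into a mixture. The conditioning argument above reduces the proof to that deterministic case; the only quantitative loss is the factor of $2$ coming from Lemma~\ref{lemma:weak_convex_D}, and this is the reason the threshold $C \leq 1/(\alpha(\alpha-1))$ must be assumed (otherwise the hypothesis of that lemma fails). Combining these pieces yields the stated $O\!\bigl(\alpha C \cdot \frac{\log N}{N}\bigr)$-$\mathsf{RDP}_\alpha$ guarantee per user.
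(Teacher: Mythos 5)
Your proposal is correct and follows essentially the same route the paper intends: condition on the public randomness (permutation or stopping time), apply Assumption~\ref{assume:amp} to each fixed realization so the conditional divergence is $\frac{\alpha C}{L} \leq \frac{1}{\alpha-1}$ under the hypothesis $C \leq \frac{1}{\alpha(\alpha-1)}$, then average via the weak convexity Lemma~\ref{lemma:weak_convex_D} and use the paper's bound $\E[1/L] = O(\frac{\log N}{N})$ for both schemes. No gaps.
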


\section{Privacy and Utility Trade-off}
\label{sec:utility}

\ignore{
In this section, we suppose $\mathcal{D}$ is a distribution on $f: \mathbb{R}^n \to \mathbb{R}$, and $\f(x)=\E_{f\ \gets \mathcal{D}}[f(x)]$.
Suppose there is an unbiased oracle $\mcal{S}$ access to the gradient of the function $\f(x)$ and $\forall x\in \mathbb{R}^n$, $\E_{s \gets \mcal{S}}[s(x)]=\nabla \f(x)$.
We consider the following stochastic ADMM Algorithm~\ref{alg:ADMM_oneOracle} in this section. Compared with Algorithm~\ref{alg:one_iteration}, in the $t$-th iteration, the algorithm samples an $s_t(x)$ from $\mcal{S}$ and updates $x_{t+1}$ using an operator $\mcal{F}^{\mathcal{S}}(x, y,{\lambda})$, which can be considered similar to $\mcal{F}^f(x,y,\lambda)$ and will be defined clearly later.
Some results on the convergence rate of Algorithm~\ref{alg:ADMM_oneOracle} have been given in \cite{DBLP:conf/icml/WangB12, DBLP:conf/icml/OuyangHTG13}. For the completeness of the section, we provide a brief convergence result of this algorithm and then extend the result to the following situation: in the $t$-th iteration, the algorithm first updates $x_{t+1}$ using $\mcal{F}^{\mathcal{S}}(x_{t}, y_{t},{\lambda}_{t})$ and then add $\mcal{N}(0,\sigma^2)$ noise to $x_{t+1}$. This captures the processes in Algorithm~\ref{alg:m1} and Algorithm~\ref{alg:m2} well.
}

In this section, we focus on privacy and utility trade-off of private ADMM
that we denote as Algorithm~\ref{alg:ADMM_oneOracle}, in which each iteration
uses a randomized oracle for the gradient.
This can be viewed as a randomized version of the original ADMM iteration shown in Algorithm~\ref{alg:one_iteration}.
We will follow the proof of \cite{DBLP:conf/icml/WangB12} to show that Algorithm~\ref{alg:ADMM_oneOracle} also achieves $O(\frac{1}{\sqrt{T}})$ convergence.

We use the same notations as in Algorithm~\ref{alg:one_iteration},
with additional definitions as follows.
Let $\mathcal{D}$ be a distribution on $f: \mathbb{R}^n \to \mathbb{R}$ and write $\f(x)=\E_{f\gets \mathcal{D}}[f(x)]$.
We assume there is an unbiased oracle $\mcal{S}$ access to the gradient of the function $\f(x)$ such that $\forall x\in \mathbb{R}^n$, $\E_{s \gets \mcal{S}}[s(x)]=\nabla \f(x)$.

\noindent \textbf{User Model.}  We assume that the user
at iteration~$t$ samples its private function~$f_t$ from the
distribution $\mathcal{D}$ independently.
This assumption is needed to ensure that
the private data from all users can reflect the
the function $\f(x)=\E_{f\gets \mathcal{D}}[f(x)]$ in the objective function.

\noindent \emph{Oracle Definition.}
Recall that $\sigma^2$ is the variance of the noise
used in Corollary~\ref{cor:first_user} .
We pick some $\rho > 0$ such that $\sigma = \eta \rho$.
Consequentially,
at iteration~$t$,
we define the randomized gradient oracle $\mcal{S}(x_t)$ as the following steps:

\begin{enumerate}

\item Sample $f_t$ from $\mathcal{D}$ and compute  $s_t := \nabla f_t(x_t)$.

\item Sample a vector $z_t$ from $\mcal{N}(0, \rho^2 \mathbb{I}_n)$.

\item Return $g_t := s_t + (\I + \eta \beta A^{\top} A) z_t$.

\end{enumerate}

The notation $\mcal{F}^{\mathcal{S}}(x_t, y_t,{\lambda}_t)$ means the following:

\begin{enumerate}
\item Access the oracle $\mcal{S}(x_t)$ to produce some sample $g_t$ (which is
supposed to approximate $\nabla \mathfrak{f}(x)$);

\item Return $(\I + \eta \beta A^{\top} A)^{-1} \{x_t - \eta \cdot [g_t + A^{\top} (\beta(B y_t - c) - \lambda_t )] \}$.
\end{enumerate}

\noindent \emph{Reparametrization.}
Observe that  $\mcal{F}^{\mathcal{S}}(x_t, y_t,{\lambda}_t)$
is equivalent to first sample $f_t$ from $\mcal{D}$ and
$N$ from $\mcal{N}(0,\eta^2\rho^2 \mathbb{I}_n)$
and then return
$\mcal{F}^{\nabla f_t}(x_t, y_t,{\lambda}_t) + N$.

\ignore{
We first introduce the notation $\mcal{F}^{\mathcal{S}}(x, y,{\lambda})$ as an operation consisting the following two steps:

\begin{enumerate}
\item Sample an estimate $s(x)$ of the gradient $\nabla \f(x)$ from $\mcal{S}$.
\item Based on this  $s(x)$, return
$$
(\I + \eta \beta A^{\top} A)^{-1}
\{x - \eta \cdot [s(x) + A^{\top} (\beta(B y - c) - \lambda )] \}.
$$
\end{enumerate}
}

\begin{algorithm}
	\caption{ADMM with Oracle}
	\label{alg:ADMM_oneOracle}
	
	\KwIn{Parameters $\beta$ and $\eta$ to define $\mcal{L}$ and $\mcal{H}$, and number $T$ of iterations.}
	
	Initialize arbitrary $(x_0, y_0) \in \R^n \times \R^\ell $, $\lambda_0=0\in \mathbb{R}^m$,  an unbiased orable $\mcal{S}$ for $\nabla \f(x)$
    such that $\forall x\in \mathbb{R}^n$, $\E_{s\gets \mcal{S}}[s(x)]=\nabla \f(x)$.
	
	\For{$t=0, \ldots, T-1$}{
		
        $x_{t+1} \gets \mcal{F}^{\mcal{S}}(x_{t}, y_{t},{\lambda}_{t})$
		\label{ln:step1_oracle}

        $y_{t+1} \gets \mcal{G}({\lambda}_{t} - \beta A x_{t+1} )$		

        $\lambda_{t+1} \gets {\lambda}_{t} - \beta (A x_{t+1} + B y_{t+1} - c)$
		\label{ln:step3}

		\label{ln:step2}
		
	}	
	\KwRet $(x_T, y_T, \lambda_T)$

\end{algorithm}

The following theorem provides a $O(\frac{1}{\sqrt{T}})$ convergence rate for Algorithm~\ref{alg:ADMM_oneOracle}.
We need some upperbound on
$\E_{f \gets \mcal{D}}[\|\nabla f(x)\|^2]$
that depends on the given distribution $\mcal{D}$.

\begin{theorem}[Convergence Rate of Algorithm~\ref{alg:ADMM_oneOracle}]
\label{thm:utility1}

\ignore{
Let the sequences $\{(x_t,y_t,\lambda_t)\}_{t\in [T]}$ be generated by Algorithm~\ref{alg:ADMM_oneOracle}.
Let $\overline{x}_{[a,b]}=\frac{1}{b-a+1} \sum_{i=a}^{b} x_{i}$, $\overline{y}_{[a,b]}=\frac{1}{b-a+1} \sum_{i=a}^{b} y_{i}$.
Assume that $\exists G\in \mathbb{R}$ such that $\forall x\in \mathbb{R}^n$, $\forall s\in \mcal{S}$, $\E[\|s(x)\|^2]\leq G^2$.
Let $\eta=\frac{1}{\sqrt{T}}$, for any $(x^*,y^*)$ such that $Ax^*+By^*=c$, we have
}
Suppose there exists $G \ge 0$ such that $\forall x\in \mathbb{R}^n$, $\E_{f \gets \mcal{D}}[\|\nabla f(x)\|^2]\leq G^2$.
If $\eta=\frac{1}{\sqrt{T}}$ in Algorithm~\ref{alg:ADMM_oneOracle},
and let the sequences $\{(x_t,y_t,\lambda_t)\}_{t\in [T]}$ be generated by Algorithm~\ref{alg:ADMM_oneOracle},
and $\overline{x}_{[a,b]}:=\frac{1}{b-a+1} \sum_{i=a}^{b} x_{i}$, $\overline{y}_{[a,b]}:=\frac{1}{b-a+1} \sum_{i=a}^{b} y_{i}$,
then for any $(x^*,y^*)$ such that $Ax^*+By^*=c$, we have

\begin{align}
&\E\left[ \f(\overline{x}_{[0,T-1]})-\f(x^*)+g(\overline{y}_{[1,T]})-g(y^*)+\frac{\beta}{2}\|A\overline{x}_{[1,T]}+B\overline{y}_{[1,T]}-c\|^2\right]
\notag
\\
&\leq \frac{\beta \|By_0-By^*\|^2}{2T}+\frac{\|x_0-x^*\|^2}{2\sqrt{T}}
+\frac{G^2+n \rho^2}{2\sqrt{T}}+\frac{n\beta^2 \rho^2  \|A\|^4}{2T^{1.5}}+\frac{n\beta \rho^2 \|A\|^2}{T}.
\end{align}
\end{theorem}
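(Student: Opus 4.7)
The plan is to follow the standard Wang--Banerjee stochastic ADMM convergence template, specialized to the fact that the randomness in the oracle is an independent Gaussian noise on top of the true per-sample gradient. The crucial observation, already highlighted in the excerpt, is that the update $x_{t+1}=\mcal{F}^{\mcal{S}}(x_t,y_t,\lambda_t)$ is equivalent (in distribution) to $\mcal{F}^{\nabla f_t}(x_t,y_t,\lambda_t)+N_t$ with $N_t\sim\mcal{N}(0,\eta^2\rho^2\I_n)$. This lets me split the analysis into (i) a deterministic per-iteration inequality derived from the first-order optimality of the $x$- and $y$-updates, and (ii) an expectation computation that absorbs both the sampling randomness in $f_t$ and the additive Gaussian noise $z_t$.

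The per-iteration step starts from Lemma~\ref{lemma:step1}, but now the optimality condition for $x_{t+1}$ carries the residual noise term: after rearrangement, $\nabla f_t(x_t)=A^\top\hat{\lambda}_{t+1}-\tfrac{1}{\eta}(x_{t+1}-x_t)-(\I+\eta\beta A^\top A)z_t$, where $\hat{\lambda}_{t+1}:=\lambda_t-\beta(Ax_{t+1}+By_t-c)$. I combine this with convexity of $f_t$ to bound $f_t(x_t)-f_t(x^*)\le\langle\nabla f_t(x_t),x_t-x^*\rangle$, expand via the three-point identity $\langle x_{t+1}-x_t,x_t-x^*\rangle=\tfrac{1}{2}(\|x_{t+1}-x^*\|^2-\|x_t-x^*\|^2-\|x_{t+1}-x_t\|^2)$, and use Lemma~\ref{lemma:step2} applied with $y=y^*$ to get $g(y_{t+1})-g(y^*)\le\langle\lambda_{t+1},B(y_{t+1}-y^*)\rangle$. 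Using the dual update $\lambda_{t+1}=\lambda_t-\beta(Ax_{t+1}+By_{t+1}-c)$ and $Ax^*+By^*=c$, the cross terms $\langle\lambda_{t+1},\cdot\rangle$ rearrange into the telescoping quantities $\tfrac{\beta}{2}(\|By_t-By^*\|^2-\|By_{t+1}-By^*\|^2)$ and $\tfrac{\beta}{2}\|Ax_{t+1}+By_{t+1}-c\|^2$ (with a sign that feeds the constraint-violation term), together with $\tfrac{1}{2\beta}(\|\lambda_t\|^2-\|\lambda_{t+1}\|^2)$ if needed, by standard algebra.

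Next I take expectation conditioned on the history up to iteration $t$. Since $f_t$ is sampled independently, $\E[\nabla f_t(x_t)\mid x_t]=\nabla\f(x_t)$, so $\E[f_t(x_t)-f_t(x^*)]=\E[\f(x_t)-\f(x^*)]$; and the linear $z_t$-terms vanish because $z_t$ is independent and centered. The quadratic leftover in $\|x_{t+1}-x_t\|^2$ is bounded via the update expression by Cauchy--Schwarz: $\|x_{t+1}-x_t\|^2$ contributes at most $\eta^2\E\|g_t+A^\top(\beta(By_t-c)-\lambda_t)\|^2$, and using $\|(\I+\eta\beta A^\top A)^{-1}\|\le 1$, the part stemming from the oracle noise is $\eta^2\E\|g_t\|^2\le\eta^2\,G^2+\eta^2\,\E\|(\I+\eta\beta A^\top A)z_t\|^2$. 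A trace computation, using $\tr((\I+\eta\beta A^\top A)^2)=n+2\eta\beta\tr(A^\top A)+\eta^2\beta^2\tr((A^\top A)^2)\le n+2\eta\beta n\|A\|^2+\eta^2\beta^2 n\|A\|^4$, produces exactly the three noise terms appearing in the theorem: $\eta\,n\rho^2$, $\eta^2\beta n\rho^2\|A\|^2$, and $\eta^3\beta^2 n\rho^2\|A\|^4$. The residual terms involving $\lambda_t$ or $A^\top\lambda_t$ can be absorbed into the telescoping $\|\lambda_t\|^2$-bracket after choosing the reference $\lambda$ at zero (consistent with the initialization $\lambda_0=0$).

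Finally, summing the per-iteration inequality from $t=0$ to $T-1$, the $x$- and $y$-distances telescope to $\tfrac{1}{2\eta}\|x_0-x^*\|^2$ and $\tfrac{\beta}{2}\|By_0-By^*\|^2$, respectively. Dividing by $T$, applying Jensen's inequality to pull $\f$ and $g$ inside the averages $\overline{x}_{[0,T-1]}$ and $\overline{y}_{[1,T]}$, and using convexity of $\|\cdot\|^2$ for the constraint term $\tfrac{\beta}{2}\|A\overline{x}_{[1,T]}+B\overline{y}_{[1,T]}-c\|^2$, I plug in $\eta=1/\sqrt{T}$ to match the announced rates. The main obstacle is the bookkeeping around the noise: the condition number of $\I+\eta\beta A^\top A$ affects both the optimality residual (through the explicit $(\I+\eta\beta A^\top A)z_t$ term) and the squared-norm bound on $x_{t+1}-x_t$, so the Young-type inequalities have to be set up carefully to avoid introducing a spurious $\|A\|^2$-dependence on the $\tfrac{1}{2\sqrt{T}}$ leading terms, while still yielding the $\|A\|^2$ and $\|A\|^4$ pieces from the trace of $(\I+\eta\beta A^\top A)^2$ at the advertised powers of $T$.
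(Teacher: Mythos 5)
Your overall route is the same as the paper's: a Wang--Banerjee-style per-iteration inequality obtained from the optimality conditions of the $x$- and $y$-updates (with the Gaussian noise reparametrized as $\mcal{F}^{\nabla f_t}(x_t,y_t,\lambda_t)+N_t$, $N_t\sim\mcal{N}(0,\eta^2\rho^2\I_n)$), telescoping sums in $x$, $By$ and $\lambda$, a zero-mean cross term killed in expectation, Jensen's inequality on the averages, and a trace bound $\E\bigl[N_t^\top(\eta\beta A^\top A+\I)^2 N_t\bigr]\leq n\eta^2\rho^2(1+\eta\beta\|A\|^2)^2$ that, multiplied by $\tfrac{\eta}{2}$ and expanded with $\eta=1/\sqrt{T}$, yields exactly the three noise terms in the statement. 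Up to factors of $2$ in your listed coefficients, that bookkeeping matches the paper.

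The genuine gap is in how you control the leftover $\tfrac{1}{2\eta}\|x_{t+1}-x_t\|^2$ from the three-point identity. You propose to bound it through the update formula, getting $\tfrac{\eta}{2}\|g_t+A^\top(\beta(By_t-c)-\lambda_t)\|^2$, and then to ``absorb'' the resulting terms in $\lambda_t$ (and in the constraint residual) into the telescoping $\tfrac{1}{2\beta}(\|\lambda_t\|^2-\|\lambda_{t+1}\|^2)$ bracket. This does not work: that bracket telescopes to $\tfrac{1}{2\beta}(\|\lambda_0\|^2-\|\lambda_T\|^2)\leq 0$ when $\lambda_0=0$, so it cannot dominate a sum of \emph{additional positive} per-iteration terms $\Theta(\eta)\,\|A^\top\lambda_t\|^2$; you would need a uniform bound on the dual iterates, which is neither assumed nor established, and the final bound contains no such terms. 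The paper avoids this entirely by never bounding $\|x_{t+1}-x_t\|$ through the update: it splits $\langle \hat{s}_t, x_t-x^*\rangle=\langle \hat{s}_t, x_{t+1}-x^*\rangle+\langle \hat{s}_t, x_t-x_{t+1}\rangle$ (where $\hat{s}_t=s_t(x_t)+(\beta A^\top A+\tfrac1\eta)N_t$ is only the noisy gradient), handles the first piece with the optimality conditions and the dual update to produce the telescoping quantities, and applies Young's inequality only to the second piece, $\langle \hat{s}_t, x_t-x_{t+1}\rangle\leq\tfrac{\eta}{2}\|\hat{s}_t\|^2+\tfrac{1}{2\eta}\|x_t-x_{t+1}\|^2$, after which $\tfrac{1}{2\eta}\|x_t-x_{t+1}\|^2+\tfrac{1}{\eta}\langle x_t-x_{t+1},x_{t+1}-x^*\rangle$ collapses exactly into $\tfrac{1}{2\eta}(\|x_t-x^*\|^2-\|x_{t+1}-x^*\|^2)$. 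Thus the only squared norm that must be averaged is $\tfrac{\eta}{2}\|\hat{s}_t\|^2\leq\tfrac{\eta}{2}\bigl(G^2+n\rho^2(1+\eta\beta\|A\|^2)^2\bigr)$, which is what gives the advertised bound. A secondary omission: your ``standard algebra'' naturally produces the residual $\|Ax_{t+1}+By_t-c\|^2$, not $\|Ax_{t+1}+By_{t+1}-c\|^2$; converting one into the other requires the monotonicity step $\|Ax_{t+1}+By_{t+1}-c\|\leq\|Ax_{t+1}+By_t-c\|$ (proved in the paper via the subgradient monotonicity of $g$ at $y_t$ and $y_{t+1}$), which should be stated and proved rather than assumed.
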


\begin{lemma}
\label{lemma:utility_onestep}

\ignore{
Let the sequences $\{(x_t,y_t,\lambda_t)\}_{t\in [T]}$ be generated by Algorithm~\ref{alg:ADMM_oneOracle}.
Suppose in the $t$-th iteration and in line~\ref{ln:step1_oracle}, the algorithm samples $s_t(x)$ from $\mcal{S}$ and add $N_t$ gaussian noise, then
}
Using the same assumptions in Theorem~\ref{thm:utility1},
suppose $f_t$ is sampled from $\mcal{D}$ and
$s_t(x) = \nabla f_t(x)$ and $N_t$ is sampled from
$\mcal{N}(0,\eta^2\rho^2 \mathbb{I}_n)$.
Then, we have:


\begin{align}
\f(x_t)-\f(x^*)+g(y_{t+1})-g(y^*)\leq &\frac{1}{2\beta}(\|\lambda_{t}\|^2-\|\lambda_{t+1}^2\|)-\frac{\beta}{2}\|Ax_{t+1}+By_{t}-c\|^2
\notag
\\
&+\frac{\beta}{2}[\|By^*-By_{t}\|^2-\|By^*-By_{t+1}\|^2]
\notag
\\
&+\frac{\eta}{2}\|s_t(x_t)+(\beta A^\top A +\frac{1}{\eta})N_t\|^2+\frac{1}{2\eta}(\|x_{t}-x^*\|^2-\|x_{t+1}-x^*\|^2)
\notag
\\
&+\langle \nabla \f(x_t)-(s_t(x_t)+(\beta A^\top A +\frac{1}{\eta})N_t),x_{t}-x^*\rangle.
\notag
\end{align}
\end{lemma}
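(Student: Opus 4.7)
\textbf{Proof proposal for Lemma~\ref{lemma:utility_onestep}.}  My plan is to follow the standard one-iteration template for stochastic/linearized ADMM convergence (as in Wang--Banerjee and Ouyang et al.), but carefully accommodating the reparametrization that identifies the noisy oracle update with a deterministic ADMM step using the surrogate gradient $\tilde{g}_t := s_t(x_t) + (\beta A^\top A + \tfrac{1}{\eta}) N_t$.  First I would write $x_{t+1} = \mathcal{F}^{\nabla f_t}(x_t,y_t,\lambda_t) + N_t$ and plug this into the first-order condition of Lemma~\ref{lemma:step1}; the noise terms combine exactly to produce the identity
\begin{equation*}
A^\top\!\bigl[\lambda_t - \beta(Ax_{t+1} + By_t - c)\bigr] \;=\; \tilde{g}_t + \tfrac{1}{\eta}(x_{t+1} - x_t).
\end{equation*}
So $x_{t+1}$ behaves like a clean linearized ADMM iterate driven by $\tilde{g}_t$ in place of $\nabla \f(x_t)$, which is the crucial algebraic simplification.

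Next, using convexity of $\f$ I would split
$\f(x_t)-\f(x^*) \le \langle \nabla\f(x_t)-\tilde{g}_t,\, x_t-x^*\rangle + \langle \tilde{g}_t,\, x_t-x^*\rangle$, isolating the martingale-like remainder that appears verbatim in the target bound.  Then I would write $\langle \tilde{g}_t, x_t-x^*\rangle = \langle \tilde{g}_t, x_t-x_{t+1}\rangle + \langle \tilde{g}_t, x_{t+1}-x^*\rangle$, apply Young's inequality to the first piece to obtain $\tfrac{\eta}{2}\|\tilde{g}_t\|^2 + \tfrac{1}{2\eta}\|x_t-x_{t+1}\|^2$, and substitute the above identity into the second piece.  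The resulting $-\tfrac{1}{\eta}\langle x_{t+1}-x_t, x_{t+1}-x^*\rangle$ is rewritten via the three-point identity as $\tfrac{1}{2\eta}(\|x_t-x^*\|^2 - \|x_{t+1}-x^*\|^2 - \|x_{t+1}-x_t\|^2)$, and the $\|x_{t+1}-x_t\|^2$ terms cancel, leaving the clean $x$-telescope $\tfrac{1}{2\eta}(\|x_t-x^*\|^2 - \|x_{t+1}-x^*\|^2)$ together with a residual inner-product $\langle \lambda_t - \beta(Ax_{t+1}+By_t-c), A(x_{t+1}-x^*)\rangle$.

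For the $y$-term I would invoke Lemma~\ref{lemma:step2} to get $B^\top\lambda_{t+1} \in \partial g(y_{t+1})$, so by convexity $g(y_{t+1})-g(y^*) \le \langle \lambda_{t+1}, B(y_{t+1}-y^*)\rangle$.  Using $Ax^*+By^*=c$ to rewrite $B(y_{t+1}-y^*) = \tfrac{1}{\beta}(\lambda_t-\lambda_{t+1}) - A(x_{t+1}-x^*)$ and the standard identity $-\tfrac{1}{\beta}\langle \lambda_{t+1},\lambda_{t+1}-\lambda_t\rangle = \tfrac{1}{2\beta}(\|\lambda_t\|^2 - \|\lambda_{t+1}\|^2) - \tfrac{\beta}{2}\|Ax_{t+1}+By_{t+1}-c\|^2$ yields the $\lambda$-telescope plus a term $-\langle \lambda_{t+1}, A(x_{t+1}-x^*)\rangle$.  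Adding this to the residual from the $x$-analysis, the two inner products collapse to the single cross term $\beta\langle B(y_{t+1}-y_t),\, A(x_{t+1}-x^*)\rangle$ because $\lambda_t - \beta(Ax_{t+1}+By_t-c) - \lambda_{t+1} = \beta B(y_{t+1}-y_t)$.

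The main obstacle, and the only non-routine step, is converting the $\|Ax_{t+1}+By_{t+1}-c\|^2$ penalty (which came out of the $\lambda$-update) into the required $\|Ax_{t+1}+By_t-c\|^2$ penalty plus the $By_t$--$By_{t+1}$ telescope.  I would handle this by decomposing $A(x_{t+1}-x^*) = (Ax_{t+1}+By_t-c) + (By^* - By_t)$ in the cross term and expanding $\|Ax_{t+1}+By_{t+1}-c\|^2 = \|Ax_{t+1}+By_t-c\|^2 + 2\langle Ax_{t+1}+By_t-c, B(y_{t+1}-y_t)\rangle + \|B(y_{t+1}-y_t)\|^2$.  The $\langle Ax_{t+1}+By_t-c, B(y_{t+1}-y_t)\rangle$ pieces cancel exactly, and the remaining $\beta\langle B(y_{t+1}-y_t), By^*-By_t\rangle - \tfrac{\beta}{2}\|B(y_{t+1}-y_t)\|^2$ is rewritten using the polarization identity
\begin{equation*}
2\langle By_{t+1}-By_t,\, By^*-By_t\rangle \;=\; \|B(y_{t+1}-y_t)\|^2 + \|By^*-By_t\|^2 - \|By^*-By_{t+1}\|^2,
\end{equation*}
after which the $\|B(y_{t+1}-y_t)\|^2$ contributions cancel and only the desired telescope $\tfrac{\beta}{2}(\|By^*-By_t\|^2 - \|By^*-By_{t+1}\|^2)$ survives.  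Collecting all terms gives exactly the claimed inequality.
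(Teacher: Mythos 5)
Your proposal is correct and follows essentially the same route as the paper's proof: the same reparametrization of the noise into the surrogate gradient $s_t(x_t)+(\beta A^\top A+\tfrac{1}{\eta})N_t$ via the optimality identity of Lemma~\ref{lemma:step1}, the same use of Lemma~\ref{lemma:step2} and convexity of $\f$ and $g$, Young's inequality on $\langle \cdot, x_t-x_{t+1}\rangle$, and the same three-point/polarization identities yielding the $\lambda$-, $x$- and $By$-telescopes together with the $-\tfrac{\beta}{2}\|Ax_{t+1}+By_t-c\|^2$ penalty. The only difference is bookkeeping order (you collect the cross term $\beta\langle B(y_{t+1}-y_t),A(x_{t+1}-x^*)\rangle$ and expand the penalty at the end, while the paper applies the polarization earlier), which is immaterial.
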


\begin{proof}
By Lemma~\ref{lemma:step1}, adding noise $N_t$ to $x_{t+1}$ is equivalent to adding noise
$\frac{1}{\eta} \cdot (\eta\beta A^\top A + \I) N_t$ to $s_t(x_t)$, we have
$$
s_t(x_t)+(\beta A^\top A +\frac{1}{\eta})N_t=A^\top (\lambda_t-\beta(Ax_{t+1}+By_{t}-c))-\frac{1}{\eta}(x_{t+1}-x_{t})
=A^\top (\lambda_{t+1}+\beta(By_{t+1}-By_{t}))+\frac{1}{\eta}(x_{t}-x_{t+1}).
$$

Also, Lemma~\ref{lemma:step2} gives that
$$
B^\top(\lambda_t-(A x_{t+1}+By_{t+1}-c))=B^\top \lambda_{t+1}\in \partial g(y_{t+1}).
$$

By convexity of $\f(x)$, we have

\begin{align}
\f(x_t)-\f(x^*)+g(y_{t+1})-g(y^*)
\leq &\langle s_t(x_t)+(\beta A^\top A +\frac{1}{\eta})N_t,x_{t+1}-x^*\rangle
+\langle\lambda_{t+1},B(y_{t+1}-y^*)\rangle
\notag
\\
&+\langle s_t(x_t)+(\beta A^\top A +\frac{1}{\eta})N_t,x_{t}-x_{t+1}\rangle
\notag
\\
&+\langle \nabla \f(x_t)-(s_t(x_t)+(\beta A^\top A
+\frac{1}{\eta})N_t),x_{t}-x^*\rangle.
\label{eq:utility1}
\end{align}

For the first term at the right hand side of (\ref{eq:utility1}), we have
\begin{align}
&\langle s_t(x_t)+(\beta A^\top A +\frac{1}{\eta})N_t,x_{t+1}-x^*\rangle=
\langle A^\top (\lambda_{t+1}+\beta(By_{t+1}-By_{t})),x_{t+1}-x^*\rangle +\frac{1}{\eta}\langle x_t-x_{t+1},x_{t+1}-x^*\rangle
\notag
\\
=&\langle \lambda_{t+1}+\beta (By_{t+1}-By_{t}),A(x_{t+1}-x^*)\rangle+\frac{1}{\eta}\langle x_t-x_{t+1},x_{t+1}-x^*\rangle
\notag
\\
=&\langle \lambda_{t+1},Ax_{t+1}-c+By^*\rangle+\beta\langle B(y_{t+1}-y_t),Ax_{t+1}-c+By^*\rangle+\frac{1}{\eta}\langle x_t-x_{t+1},x_{t+1}-x^*\rangle
\notag
\\
=&\langle \lambda_{t+1},Ax_{t+1}-c+By^*\rangle+\frac{\beta}{2}\|Ax_{t+1}+By_{t+1}-c\|^2-\frac{\beta}{2}\|Ax_{t+1}+By_{t}-c\|^2
\notag
\\
&+\frac{\beta}{2}[\|By^*-By_{t}\|^2-\|By^*-By_{t+1}\|^2]+\frac{1}{\eta}\langle x_t-x_{t+1},x_{t+1}-x^*\rangle.
\notag
\end{align}

So the first two terms at the right hand side of (\ref{eq:utility1}) can be written as
\begin{align}
& \langle s_t(x_t)+(\beta A^\top A +\frac{1}{\eta})N_t,x_{t+1}-x^*\rangle+\langle\lambda_{t+1},B(y_{t+1}-y^*)\rangle \notag\\
=&\langle \lambda_{t+1},Ax_{t+1}+By_{t+1}-c\rangle+\frac{\beta}{2}\|Ax_{t+1}+By_{t+1}-c\|^2-\frac{\beta}{2}\|Ax_{t+1}+By_{t}-c\|^2
\notag
\\
&+\frac{\beta}{2}[\|By^*-By_{t}\|^2-\|By^*-By_{t+1}\|^2]+\frac{1}{\eta}\langle x_t-x_{t+1},x_{t+1}-x^*\rangle
\notag
\\
=&\frac{1}{\beta}\langle \lambda_{t+1},\lambda_{t}-\lambda_{t+1}\rangle+\frac{1}{2\beta}\|\lambda_{t}-\lambda_{t+1}\|^2-\frac{\beta}{2}\|Ax_{t+1}+By_{t}-c\|^2
\notag
\\
&+\frac{\beta}{2}[\|By^*-By_{t}\|^2-\|By^*-By_{t+1}\|^2]+\frac{1}{\eta}\langle x_t-x_{t+1},x_{t+1}-x^*\rangle
\notag
\\
=&\frac{1}{2\beta}(\|\lambda_{t}\|^2-\|\lambda_{t+1}\|^2)-\frac{\beta}{2}\|Ax_{t+1}+By_{t}-c\|^2
\notag
\\
&+\frac{\beta}{2}[\|By^*-By_{t}\|^2-\|By^*-By_{t+1}\|^2]+\frac{1}{\eta}\langle x_t-x_{t+1},x_{t+1}-x^*\rangle.
\end{align}

For the third term at the right hand side of (\ref{eq:utility1}), by Fenchel-Young's inequality,
$$
\langle s_t(x_t)+(\beta A^\top A +\frac{1}{\eta})N_t,x_{t}-x_{t+1}\rangle\leq \frac{\eta}{2}\| s_t(x_t)+(\beta A^\top A +\frac{1}{\eta})N_t\|^2+\frac{1}{2\eta}\|x_t-x_{t+1}\|^2.
$$

Finally, by equality
$$
\frac{1}{2\eta}\|x_t-x_{t+1}\|^2+\frac{1}{\eta}\langle x_t-x_{t+1},x_{t+1}-x^*\rangle=\frac{1}{2\eta}(\|x_{t}-x^*\|^2-\|x_{t+1}-x^*\|^2),
$$
formula (\ref{eq:utility1}) can be upperbounded by
\begin{align}
\f(x_t)-\f(x^*)+g(y_{t+1})-g(y^*)\leq &\frac{1}{2\beta}(\|\lambda_{t}\|^2-\|\lambda_{t+1}^2\|)-\frac{\beta}{2}\|Ax_{t+1}+By_{t}-c\|^2
\notag
\\
&+\frac{\beta}{2}[\|By^*-By_{t}\|^2-\|By^*-By_{t+1}\|^2]+\frac{\eta}{2}\| s_t(x_t)+(\beta A^\top A +\frac{1}{\eta})N_t\|^2
\notag
\\
&+\frac{1}{2\eta}(\|x_{t}-x^*\|^2-\|x_{t+1}-x^*\|^2)
\notag
\\
&+\langle \nabla \f(x_t)-(s_t(x_t)+(\beta A^\top A +\frac{1}{\eta})N_t),x_{t}-x^*\rangle.
\notag
\end{align}
\end{proof}

\begin{lemma}
\label{lemma:utility_regulizer}
Let the sequences $\{(x_t,y_t,\lambda_t)\}_{t\in [T]}$ be generated by Algorithm~\ref{alg:ADMM_oneOracle}.
Then
$$
\|Ax_{t+1}+By_{t+1}-c\|^2\leq \|Ax_{t+1}+By_{t}-c\|^2
$$
\end{lemma}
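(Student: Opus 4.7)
The plan is to exploit the optimality of both $y_t$ and $y_{t+1}$ as minimizers produced by the operator $\mcal{G}$ in successive iterations, and then use the monotonicity of $\partial g$ together with the dual update to produce a one-sided inner product bound.

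First, for $t \geq 1$, the variable $y_t$ is computed in the previous iteration via $y_t \gets \mcal{G}(\lambda_{t-1} - \beta A x_t)$, and then $\lambda_t = \lambda_{t-1} - \beta(A x_t + B y_t - c)$. As shown in the proof of Lemma~\ref{lemma:step2}, this yields $B^\top \lambda_t \in \partial g(y_t)$. The same reasoning applied to iteration $t$ gives $y_{t+1} \gets \mcal{G}(\lambda_t - \beta A x_{t+1})$ with $\lambda_{t+1} = \lambda_t - \beta(A x_{t+1} + B y_{t+1} - c)$, hence $B^\top \lambda_{t+1} \in \partial g(y_{t+1})$.

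Next, I would invoke the monotonicity of the subgradient of the convex function $g$ applied at $y_t$ and $y_{t+1}$, which gives
$$
\langle B^\top \lambda_{t+1} - B^\top \lambda_t,\; y_{t+1} - y_t \rangle \;\geq\; 0,
$$
or equivalently $\langle \lambda_{t+1} - \lambda_t,\; B(y_{t+1} - y_t)\rangle \geq 0$. Substituting the dual update $\lambda_{t+1} - \lambda_t = -\beta(Ax_{t+1} + By_{t+1} - c)$ and dividing by $-\beta < 0$ flips the sign to produce
$$
\langle Ax_{t+1} + By_{t+1} - c,\; By_t - By_{t+1}\rangle \;\geq\; 0.
$$

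Finally, I would finish by expanding $\|Ax_{t+1} + By_t - c\|^2$ around $Ax_{t+1} + By_{t+1} - c$:
$$
\|Ax_{t+1} + By_t - c\|^2 = \|Ax_{t+1} + By_{t+1} - c\|^2 + 2\langle Ax_{t+1} + By_{t+1} - c,\; By_t - By_{t+1}\rangle + \|By_t - By_{t+1}\|^2,
$$
and the previous inequality together with nonnegativity of the quadratic term yields the claim. The only mild subtlety — and the closest thing to an obstacle — is that $y_0$ is initialized arbitrarily, so the argument only needs the optimality condition for $y_t$ with $t \geq 1$; for the edge case $t = 0$ one would either restrict the statement to $t \geq 1$ or note that the convergence proof invokes this lemma only in the summed-telescope form where the first iteration is harmless. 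Otherwise the proof reduces to the two-line calculation above and requires no further ingredients.
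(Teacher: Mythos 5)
Your proof is correct and follows essentially the same route as the paper's: both arguments combine the optimality conditions $B^\top\lambda_t \in \partial g(y_t)$ and $B^\top\lambda_{t+1} \in \partial g(y_{t+1})$ (your monotonicity step is just the sum of the paper's two subgradient inequalities), substitute the dual update, and conclude via the expansion of $\|Ax_{t+1}+By_t-c\|^2$, which the paper phrases as a polarization identity. Your remark about the arbitrary initialization of $y_0$ is a fair observation of an edge case the paper's proof silently passes over, but it does not change the substance of the argument.
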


\begin{proof}
By the optimal condition of $y_{t+1}$ in Lemma~\ref{lemma:step2},
$$
B^\top(\lambda_t-\beta (Ax_{t+1}+By_{t+1}-c))=B^\top \lambda_{t+1}\in \partial g(y_{t+1}).
$$

By convexity of $g(y)$,
\begin{align}
& g(y_{t+1})-g(y_{t})\leq \langle B^\top\lambda_{t+1},y_{t+1}-y_{t}\rangle
\notag
\\
& g(y_{t})-g(y_{t+1})\leq \langle B^\top\lambda_{t},y_{t}-y_{t+1}\rangle
\notag
\end{align}

Summing the above formulas together we have
\begin{align}
0 &\leq \langle \lambda_{t}-\lambda_{t+1},By_{t}-By_{t+1}\rangle
\notag
\\
&=\beta\langle Ax_{t+1}+By_{t+1}-c,By_{t}-By_{t+1}\rangle
\notag
\\
&=\frac{\beta}{2}(\|Ax_{t+1}+By_{t}-c\|^2-\|By_{t}-By_{t+1}\|^2-\|Ax_{t+1}+By_{t+1}-c\|^2)
\notag
\\
&\leq \frac{\beta}{2}(\|Ax_{t+1}+By_{t}-c\|^2-\|Ax_{t+1}+By_{t+1}-c\|^2).
\notag
\end{align}

\end{proof}

\begin{proof}[Proof of Theorem~\ref{thm:utility1}]
By Lemma~\ref{lemma:utility_onestep} and \ref{lemma:utility_regulizer}, in each iteration we have
\begin{align}
&\f(x_t)-\f(x^*)+g(y_{t+1})-g(y^*)+\frac{\beta}{2}\|Ax_{t+1}+By_{t+1}-c\|^2
\notag
\\
&\leq \frac{1}{2\beta}(\|\lambda_{t}\|^2-\|\lambda_{t+1}^2\|)
+\frac{\beta}{2}[\|By^*-By_{t}\|^2-\|By^*-By_{t+1}\|^2]+\frac{\eta}{2}\| s_t(x_t)+(\beta A^\top A +\frac{1}{\eta})N_t\|^2
\notag
\\
&+\frac{1}{2\eta}(\|x_{t}-x^*\|^2-\|x_{t+1}-x^*\|^2)+\langle \nabla \f(x_t)-(s_t(x_t)+(\beta A^\top A +\frac{1}{\eta})N_t),x_{t}-x^*\rangle.
\notag
\end{align}

Summing over $t$ from 0 to $T-1$, we have

\begin{align}
\sum_{t=0}^{T-1} &\left[ \f(x_t)-\f(x^*)+g(y_{t+1})-g(y^*)+\frac{\beta}{2}\|Ax_{t+1}+By_{t+1}-c\|^2\right]
\notag
\\
\leq &\frac{\beta \|By_0-By^*\|^2}{2}+\frac{\|x_0-x^*\|^2}{2\eta}
+\frac{\eta\sum_{t=0}^{T-1}\|s_t(x_t)+(\beta A^\top A +\frac{1}{\eta})N_t\|^2}{2}
\notag
\\
&+\sum_{t=0}^{T-1}\langle \nabla \f(x_t)-(s_t(x_t)+(\beta A^\top A +\frac{1}{\eta})N_t),x_{t}-x^*\rangle.
\notag
\end{align}

Dividing both sides by $T$, and applying the Jensen's inequality,

\begin{align}
& \f(\overline{x}_{[0,T-1]})-\f(x^*)+g(\overline{y}_{[1,T]})-g(y^*)+\frac{\beta}{2}\|A\overline{x}_{[1,T]}+B\overline{y}_{[1,T]}-c\|^2
\notag
\\
\leq &\frac{\beta \|By_0-By^*\|^2}{2T}+\frac{\|x_0-x^*\|^2}{2\eta T}
+\frac{\eta\sum_{t=0}^{T-1}\|s_t(x_t)+(\beta A^\top A +\frac{1}{\eta})N_t\|^2}{2T}
\notag
\\
&+\frac{1}{T}\sum_{t=0}^{T-1}\langle \nabla \f(x_t)-(s_t(x_t)+(\beta A^\top A +\frac{1}{\eta})N_t),x_{t}-x^*\rangle.
\notag
\end{align}

By the assumption of the oracle $\mcal{S}$, $\E[\| s_t(x)\|^2]\leq G^2$ for all $t$. And for all $t$ we have

\begin{align}
\E &[\|s_t(x_t)+(\beta A^\top A +\frac{1}{\eta})N_t\|^2]
= \E[\|s_t(x_t)\|^2] +\frac{1}{\eta^2} \E[N_t^\top(\eta\beta A^\top A+ \mathbb{I})^2 N_t]
\notag
\\
 \leq&
 G^2+ \frac{n \eta^2 \rho^2}{\eta^2}( \eta \beta\|A\|^2+1)^2
 =G^2+ n \rho^2( \eta \beta\|A\|^2+1)^2.
\end{align}

Setting $\eta=\frac{1}{\sqrt{T}}$ and taking expectation, we have
\begin{align}
&\E\left[ \f(\overline{x}_{[0,T-1]})-\f(x^*)+g(\overline{y}_{[1,T]})-g(y^*)+\frac{\beta}{2}\|A\overline{x}_{[1,T]}+B\overline{y}_{[1,T]}-c\|^2\right]
\notag
\\
&\leq \frac{\beta \|By_0-By^*\|^2}{2T}+\frac{\|x_0-x^*\|^2}{2\sqrt{T}}
+\frac{G^2+n \rho^2}{2\sqrt{T}}+\frac{n\beta^2 \rho^2  \|A\|^4}{2T^{1.5}}+\frac{n\beta \rho^2 \|A\|^2}{T}.
\end{align}

\end{proof}

\ignore{
Now we're ready to give the privacy and utility trade-off of the following situation.
Suppose $\mcal{D}$ is a distribution on a class of functions $f:\mathbb{R}^n\to \mathbb{R}$,
and $\mcal{S}$ is an unbiased oracle access to the gradient of function $\f(x)$.
We consider an operation $\widehat{\mcal{F}}^\mcal{S}(x,y,\lambda)$ that returns $\mcal{F}^\mcal{S}(x,y,\lambda)+\mcal{N}(0,\eta^2\rho^2)$.
}

\ignore{
The ADMM with oracle is provided in Algorithm~\ref{alg:ADMM_oneOracle}. Note that the iteration fully captures the behavior of Mechanism $\M_2$. The difference lies in the fact that, in $\M_2$, $\mcal{N}(0,\sigma^2 \mathbb{I}_n)$ noises are added, whereas here we write the noises as $N \sim \mcal{N}(0,\eta^2\rho^2 \mathbb{I}_n)$.
}

In the following corollary, we provide the privacy guarantee for Algorithm~\ref{alg:ADMM_oneOracle}. The proof follows the same reasoning as presented in Corollary~\ref{cor:first_user}, with the modification of setting $\sigma^2$ in Corollary~\ref{cor:first_user} to $\eta^2\rho^2$.

\begin{corollary}
Consider two scenarios for running Algorithm~\ref{alg:ADMM_oneOracle}
$2T+1$ iterations,
where the only difference
in the two scenarios
is that the corresponding sampled estimate of the gradient in the first iteration
can be different neighboring functions $s_0 \sim_\Delta s'_0$ (as
in Definition~\ref{defn:neighbor_functions}).
In other words, the initial $(x_0, \lambda_0)$
and the estimate of gradient $s_t$ sampled from the oracle $\mcal{S}$ in subsequent iterations $t \geq 1$ are identical
in the two scenarios.

Then, the solutions from the two scenarios
satisfy the following.

\begin{compactitem}

\item \emph{Local Privacy.} After the first iteration, we have

$$\Dz(x'_1 \| x_1) \leq \frac{ \Delta^2}{2 \rho^2}.$$

\item \emph{Privacy Amplification.}  After the final iteration $2T+1$, we have

$$\Dz(({x}_{2T+1}, \lambda_{2T+1}) \| ({x}'_{2T+1}, \lambda'_{2T+1})) \leq \frac{C}{T} \cdot\frac{ \Delta^2}{2 \rho^2},$$

where $C := \max\{2, \frac{3}{\beta \eta} \} \cdot (1+  \beta \eta \cdot \|A\|^2)$.
\end{compactitem}
\end{corollary}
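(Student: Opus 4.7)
The plan is to show that Algorithm~\ref{alg:ADMM_oneOracle} is distributionally equivalent to the noisy ADMM procedure of Corollary~\ref{cor:first_user} with effective noise variance $\sigma^2 = \eta^2\rho^2$, after which the desired bounds follow by direct substitution. First I would formalize the reparametrization already noted just before the statement: plugging $g_t = s_t + (\I + \eta\beta A^\top A)z_t$ into $\mcal{F}^{\mcal{S}}(x_t,y_t,\lambda_t)$ gives $\mcal{F}^{\mcal{S}}(x_t,y_t,\lambda_t) = \mcal{F}^{\nabla f_t}(x_t,y_t,\lambda_t) - \eta z_t$, so each oracle iteration exactly realises one exact ADMM $x$-update followed by injecting a fresh Gaussian $N_t := -\eta z_t \sim \mcal{N}(0,\eta^2\rho^2 \I_n)$ into the primal variable. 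This puts Algorithm~\ref{alg:ADMM_oneOracle} in one-to-one distributional correspondence with the noisy ADMM of Section~\ref{sec:stochastic_ADMM}.

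For the local privacy bound I would couple the two scenarios through the common $z_0$ used in iteration $t=0$. By Lemma~\ref{lemma:step1} and the fact that $(\I + \eta\beta A^\top A)^{-1}$ has all eigenvalues in $(0,1]$, the two deterministic components of $x_1$ and $x'_1$ differ by at most $\eta\|s_0(x_0)-s'_0(x_0)\|\le \eta\Delta$. Both $x_1$ and $x'_1$ are Gaussian with common covariance $\eta^2\rho^2\I_n$ around these two means, so Fact~\ref{fact:zcdp_gaussian} immediately gives $\Dz(x_1\|x'_1) \le \frac{(\eta\Delta)^2}{2\eta^2\rho^2} = \frac{\Delta^2}{2\rho^2}$.

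For the amplification bound I would repeat the argument of Corollary~\ref{cor:first_user} verbatim, coupling the remaining $2T$ iterations by a common choice of oracle samples $z_t$ (and of course $f_t$) in both scenarios. The factor $1 + \eta\beta\|A\|^2$ in the constant $C$ arises exactly as in the proof of Theorem~\ref{th:ADMM_privacy} when converting the Euclidean bound $\|x_1-x'_1\|\le \eta\Delta$ on the $x$-component into a bound on the customized $\|\cdot\|_*$-distance of the joint state. Substituting $\sigma^2 = \eta^2\rho^2$ into the conclusion of Corollary~\ref{cor:first_user} then yields $\frac{C}{T}\cdot\frac{\eta^2\Delta^2}{2\eta^2\rho^2} = \frac{C}{T}\cdot\frac{\Delta^2}{2\rho^2}$, which is exactly the stated bound.

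The main obstacle I anticipate is a mild mismatch in update order: Algorithm~\ref{alg:ADMM_oneOracle} proceeds as $x \to y \to \lambda$, whereas Lemma~\ref{lemma:ADMM_exp}, Lemmas~\ref{lemma:K_exp}--\ref{lemma:K_privacy}, and the Markov-operator analysis of Section~\ref{sec:combining} all assume the $y \to \lambda \to x$ ordering of Algorithm~\ref{alg:one_iteration}. In particular, after iteration $t=0$ of Algorithm~\ref{alg:ADMM_oneOracle} the two scenarios already differ in $\lambda_1$, whereas in Algorithm~\ref{alg:one_iteration} one has $\lambda_1 = \lambda'_1$. Under the natural coupling, however, $\lambda_1-\lambda'_1 = -\beta(A(x_1-x'_1)+B(y_1-y'_1))$ and both differences are controlled by $\eta\Delta$, so the customized norm still contracts across one (re-ordered) iteration. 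Consequently the amplification rate $1/T$ is preserved, and the effect of the re-ordering is absorbed into bookkeeping inside the constant~$C$.
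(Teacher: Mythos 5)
Your first two paragraphs match the paper's own (very terse) argument: the paper simply observes that $\mcal{F}^{\mcal{S}}(x_t,y_t,\lambda_t)$ is distributed as $\mcal{F}^{\nabla f_t}(x_t,y_t,\lambda_t)+N$ with $N\sim\mcal{N}(0,\eta^2\rho^2\I_n)$ and then cites Corollary~\ref{cor:first_user} with $\sigma^2$ replaced by $\eta^2\rho^2$, which is exactly your reparametrization plus substitution, and your local-privacy computation is the same eigenvalue argument via Lemma~\ref{lemma:step1}.

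The weak point is your last paragraph. You correctly notice that Algorithm~\ref{alg:ADMM_oneOracle} updates in the order $x\to y\to\lambda$ while the amplification machinery is stated for the $y\to\lambda\to x$ ordering of Algorithm~\ref{alg:one_iteration}, but your resolution — bounding $\lambda_1-\lambda'_1$ via $B(y_1-y'_1)$ and letting the re-ordering be ``absorbed into bookkeeping inside the constant $C$'' — does not prove the corollary as stated: the statement asserts the \emph{same} constant $C=\max\{2,\tfrac{3}{\beta\eta}\}(1+\beta\eta\|A\|^2)$ as Corollary~\ref{cor:first_user}, so any argument that inflates $C$ (and your coupling bound on $B(y_1-y'_1)$ is itself unproved; it would need a separate non-expansiveness argument for $\mcal{G}$) falls short. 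The clean fix is a regrouping of the atomic updates rather than a modified contraction estimate: the run of Algorithm~\ref{alg:ADMM_oneOracle} over $2T+1$ iterations is the atomic sequence $x_1,y_1,\lambda_1,x_2,\dots,x_{2T+1},y_{2T+1},\lambda_{2T+1}$, which you can parse as one initial noisy $x$-update (where the two scenarios use $s_0$ versus $s'_0$ while $y_0,\lambda_0$ are still common, so the $\lambda$-components of the state remain \emph{equal} and the $x$-components differ by at most $\eta\Delta$ under the natural coupling), followed by exactly $2T$ full iterations of Algorithm~\ref{alg:one_iteration} with identical functions $f_1,\dots,f_{2T}$, followed by a trailing $(y_{2T+1},\lambda_{2T+1})$ update that is a deterministic function of the released pair and is therefore handled by the data-processing inequality, Fact~\ref{fact:zCDP}(a). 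With this grouping Theorem~\ref{th:ADMM_privacy} applies verbatim with $\sigma^2=\eta^2\rho^2$, the count of $2T$ framework iterations is exactly right, and the constant $C$ is untouched, which is what the stated bound requires.
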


\ignore{
Based on Theorem~\ref{thm:utility1}, we turn to the privacy and utility trade-off of our \textit{private ADMM}.
We use $\widehat{\mcal{F}}^\mcal{S}(x,y,\lambda)$ as an operation that returns $\mcal{F}^\mcal{S}(x,y,\lambda)+Z$, where $Z$ is drawn from $\mcal{N}(0,\eta^2\rho^2\mathbb{I}_m)$ every time we access the operation,
and we use $x_{t+1} \gets \widehat{\mcal{F}}^{\mcal{S}}(x_{t}, y_{t},{\lambda}_{t})$ as the update rule in Line~\ref{ln:step1_oracle} in Algorithm~\ref{alg:ADMM_oneOracle}.
Note that now the iteration fully capture the behaviour of Mechanism $\M_2$, and the convergence result is ready as follows.
\begin{theorem}
\label{thm:utility}

\ignore{
Suppose $\mcal{D}$ is a distribution on a class of functions $f:\mathbb{R}^n\to \mathbb{R}$,
where all functions have the gradient bound $S_{\f}$.
Let $\f(x)=\E_{f\gets \mathcal{D}}[f(x)]$ and consider the objective function~(\ref{objective}) with constraints (\ref{prob:modified}) and (\ref{prob:domain}).
Suppose in Algorithm~\ref{alg:ADMM_oneOracle}, $x_{t+1}$ is updated by operation $\mcal{\widehat{F}}^\mcal{S}(x_{t},y_{t},\lambda_{t})$ instead of $\mcal{F}^\mcal{S}(x_{t},y_{t},\lambda_{t})$.
Assume that the rank of $A$ is $m$.
Denote $V=(\frac{\beta}{\nu}\|A\|+1)$,
and $\eta=\frac{1}{\sqrt{T}}$.
Let the sequences $\{x_t,y_t,\lambda_t\}_{t\in [T]}$ be generated by Algorithm~\ref{alg:ADMM_oneOracle}.
Then in every iteration the output $x_t$ satisfies $(\frac{2S^2_\f}{\rho^2})$ local zCDP.
}

Suppose $\forall f\in \text{supp}(\mcal{D}), x \in \R^n$ we have $\|\nabla f(x)\| \le S_\f$.
Let the update rule of $x_{t+1}$ in Line~\ref{ln:step1_oracle} of Algorithm~\ref{alg:ADMM_oneOracle} be $x_{t+1} \gets \widehat{\mcal{F}}^{\mcal{S}}(x_{t}, y_{t},{\lambda}_{t})$.
Assume that $A = [\mathbb{I}_m D]$ for some $m \times (n-m)$ matrix $D$ and denote $V=(\frac{\beta}{\nu}\|A\|_2+1)$.
Let the sequences $\{x_t,y_t,\lambda_t\}_{t\in [T]}$ be generated by Algorithm~\ref{alg:ADMM_oneOracle},
then in every iteration the output $x_t$ satisfies $(\frac{2S^2_\f}{\rho^2})$ local zCDP.

Additionally, if $\eta=\frac{1}{\sqrt{T}}$ in Algorithm~\ref{alg:ADMM_oneOracle}, and $\overline{x}_{[a,b]}=\frac{1}{b-a+1} \sum_{i=a}^{b} x_{i}$, $\overline{y}_{[a,b]}=\frac{1}{b-a+1} \sum_{i=a}^{b} y_{i}$,
then for any $(x^*,y^*)$ such that $Ax^*+By^*=c$, we have

\begin{align}
\E&\left[ \f(\overline{x}_{[0,T-1]})-\f(x^*)+g(\overline{y}_{[1,T]})-g(y^*)+\frac{\beta}{2}\|A\overline{x}_{[1,T]}+B\overline{y}_{[1,T]}-c\|^2\right]
\notag
\\
&\leq \frac{\beta \|By_0-By^*\|^2}{2T}+\frac{\|x_0-x^*\|^2}{2\sqrt{T}}
+\frac{S_{\f}^2+m\rho^2 V^2}{\sqrt{T}}.
\label{eq:utility_convergence}
\end{align}
\end{theorem}

\begin{proof}
    Note that $\forall f,f'\in \text{supp}(\mcal{D})$, $\|\nabla f(x)-\nabla f'(x)\|\leq 2S_\f$.
    The local privacy follows from Corollary~\ref{cor:first_user}.
    By Lemma~\ref{lemma:step1}, we observe that adding noise $Z\sim \mcal{N}(0,\eta^2\rho^2 \mathbb{I}_m)$ on $x_{t+1}$ is equivalent to adding noise $(\beta A^TA+\frac{1}{\eta})Z$ on the gradient estimation $s_{t}(x_{t})$.
    Suppose in the $t$th iteration, the algorithm samples $s_t(x)$, then $\widehat{s}_t(x)=s_t(x)+(\beta A^TA+\frac{1}{\eta})Z$ is also an unbiased estimator of $\nabla \f(x)$.
    We bound $E[\|\widehat{s}_t(x)\|^2]$ as follows.
    \begin{align}
    E[\|\widehat{s}_t(x)\|^2]  &= \E[\|s_t(x)\|^2]
    +\frac{1}{\eta^2}\E\left[Z^T(\eta\beta A^TA+\mathbb{I}_m)^2Z\right]
    \\
    &\leq S_\f^2+\frac{m\eta^2\rho^2}{\eta^2} (\frac{\beta}{\nu}\|A\|_2+1)^2= S_\f^2+m\rho^2V^2 \tag{by $\eta \leq \frac{1}{\nu}$}.
    \end{align}

    A direct application of Theorem~\ref{thm:utility1} implies

    \begin{align}
    \E&\left[ \f(\overline{x}_{[0,T-1]})-\f(x^*)+g(\overline{y}_{[1,T]})-g(y^*)+\frac{\beta}{2}\|A\overline{x}_{[1,T]}+B\overline{y}_{[1,T]}-c\|^2\right]
    \\
    &\leq \frac{\beta \|By_0-By^*\|^2}{2T}+\frac{\|x_0-x^*\|^2}{2\sqrt{T}}
    +\frac{S_{\f}^2+m\rho^2 V^2}{\sqrt{T}}.
    \end{align}
\end{proof}
}

\section{Experiments}
\label{sec:experiment}

In this section, we conduct experiments to explore how the utility is affected by contraction factor $\mathfrak{L}$ and variance of the noise $\sigma$. We implemented the experiments using Python. 
And to ensure reproducibility, we fixed the random seeds for all experiments. 
Our focus is on achieving accurate and reproducible results and do not prioritize running time. 
As such, the experiments can be run on any computer without concern for hardware specifications and will finish within a few minutes. The results will always be the same regardless of the computing environment.
The source code is available at: \url{https://github.com/KawaiiMengshi/Privacy-Amplification-by-Iteration-for-ADMM}.

\noindent\textbf{LASSO Problem.}
We consider a generalised LASSO problem, a.k.a. Elastic Net Regularization~\cite{zou2005regularization}. Given a dataset $U = \{(a_i, b_i)\in \R^n \times \R\}_{i\in [N]}$ and parameters $c_1, c_2$, the generalised LASSO aims at solving the following regression problem:

\begin{align*}
\min_{x\in \R^n} \quad& \frac{1}{N}\sum_{i\in[N]} (\langle a_i,x\rangle-b_i)^2+c_1\|x\|_1+c_2\|x\|_2^2
\end{align*}

Let $f_{U_i}(x) = (\langle a_i,x\rangle-b_i)^2$, $\f(x)=\frac{1}{N}\sum_{i\in[N]} f_{U_i}(x)$, and $g(y)=c_1\|y\|_1+c_2\|y\|_2^2$, the problem could be decomposed into the following program that fits into the ADMM framework:

\begin{subequations}
  \label{prob:lasso_program}
\begin{align}
  \label{prob:Lasso_obj}
  \min \quad& \f(x)+g(y)
  \\ \text{s.t.} \quad &x-y=0.
\end{align}
\end{subequations}

\noindent\textbf{Data Generation.}
Using ADMM to solve Elastic Net Regularizetion problem has been considered in previous work \cite{DBLP:journals/siamis/GoldsteinOSB14}. 
They use one example proposed by \cite{zou2005regularization} to test the behaviour of the ADMM.
So here we adopt a unified way similar to it for generating the synthetic data:
\ignore{
We assume that there is some database $D=(a_i,b_i):i=1,\dots,N$, where each pair $(a_i,b_i)$ consists of a vector $a_i$ and a scalar $b_i$. The goal is to find some vector $w$ such that $\langle a_i, w \rangle$ can predict $b_i$. As in
\cite{DBLP:conf/icml/CyffersBB23}, we will first consider a synthetic database $D$.

We consider a general form of the Lasso problem, or also known as the Elastic Net Regularization\cite{zou2005regularization}, which correspond to the following regression problem:

We formulate the above optimization problem into the following ADMM,

Suppose the number of the data points in $D$ is $N$ and each $a_i$ is in $m$-dimensional space.
The data is generated as the following step:

\begin{enumerate}
  \item For each $a_i$, let $a_{i,j}=51z_i$ for $j=1,2,\dots,[N/5]$
  and $a_{i,j}=z_i$ for $j=[N/5]+1,[N/5]+2,\dots,N$, where each $z_i$ is sampled from the standard $\mcal{N}(0,1)$ gaussian distribution.
  \item Scaling all $a_i$ such that each of them lies on the $m$-dimensional sphere.
  \item Let $x\in \mathbb{R}^m$, $x_j=3$ for $j=1,2,\dots,[N/5]$, and $x_j=0$ for other entries.
  \item Let $b_i=\langle a_i,x\rangle + \mathfrak{n}_{b_i}$, where
  $\mathfrak{n}_{b_i}$ is sampled from $\mcal{N}(0, \sigma_b^2)$ gaussian distribution.
  \end{enumerate}

}

\begin{enumerate}

  \item For each $a_i \in \R^n$, we first set $a'_i \in \R^n$ such that
    $$a'_{ij} = \begin{cases}
    50z_{ij}, & \text{if } j \le \lfloor n/5 \rfloor, \\
    z_{ij}, & \text{otherwise}
    \end{cases}$$
    where $z_{ij}$ are sampled independently from the standard gaussian distribution $\mcal{N}(0,1)$.
    Further, we let $a_i = \frac{\sqrt{\mu}}{\|a'_i\|_2} a'_i$, where $\mu$ is a parameter such that $f_{\mcal{D}_i}$ and $\f$ are $2\mu$-strongly convex.

  \item Let $x'\in \R^n$ be such that $x'_j=3$ for $j \le \lfloor n/5 \rfloor$ and $0$ otherwise. We set $b_i=\langle a_i, x'\rangle + \mathfrak{n}_{b_i}$ where $\mathfrak{n}_{b_i} \sim \mcal{N}(0, \sigma_b^2)$ independently.
\end{enumerate}

This generation is motivated by that, suppose we set $g(y)=0$, the task of the program will be to reveal the secret $x'$ when $\sigma_b^2$ is relatively small compared to $\mu$.

In the experiments we set $n=64$, $N=1000$ and $\sigma_b=0.01$.

\ignore{
Next we illustrate each step of running Mechanism~$M_2$.
Let $f_i(x)=(\langle a_i,x\rangle -b_i)^2$.
Each time one uniformly samples an $f_t$ from $\{f_i(x)\}_{i\in [N]}$.
Suppose $f_t(x)=(\langle a,x\rangle -b)^2$ for some $(a,b)$, the mechanism does the following calculations:
$$
\nabla f_t(x_t)=2(\langle a,x_t\rangle-b)a,
$$

$$
x_{t+1}=\frac{1}{1+\eta\beta}\left[x_t-\eta(\nabla f_{t}(x_t)+\beta y_t -\lambda_t)\right].
$$

$y_t$ and $\lambda_t$ are updated based on the Mechanism~$M_2$.
}

\noindent\textbf{Behaviour of the Algorithm.}
We consider a random sampling setting.
That is, given a designed total number of steps $T$,
the sequence $(f_1, f_2,\cdots, f_T)$ will be constructed by uniformly sampling $T$ functions from $\{f_{U_i}\}_{i\in [N]}$ independently.

Note that mechanisms~$\M_1$ and $\M_2$ are proposed for simplifying the analysis of privacy amplification by iteration,
we focus on the algorithm based on a sequential running of Algorithm~\ref{alg:one_iteration} for $T$ times,
i.e., given initial $(x_0, \lambda_0)$, at step $t\in\{0, 1, \cdots, T-1\}$, we let $(x_t, \lambda_t)$ be the input of Algorithm~\ref{alg:one_iteration},
which outputs $(x_{t+1}, \lambda_{t+1})$ as the input of the next iteration. For the initial value, each dimension of $x_0$ is set to be $3$, and $\lambda_0$ is set to be $0$.

Further, in each step, we assume there is an additional gaussian noise added into $x_{t+1}$ in Line~\ref{ln:x} of Algorithm~\ref{alg:one_iteration}, i.e.
$$
x_{t+1} \gets \mcal{F}^{\nabla f_{t+1}}(x_{t}, y_{t},{\lambda}_{t+1}) + z_{t+1}
$$
where $z_{t+1}\sim \mcal{N}(0, \sigma^2\I_n)$.

In the end, we evaluate $\E[\f(x_{t})+g(y_{t})]-[\f(x^*)+g(y^*)]$ where $(x^*,y^*)$ represent the optimal solution of (\ref{prob:lasso_program}). 
Due to the randomized setting, we denote the optimality gap in the $t$-th iteration as $\f(x_{t})+g(y_{t})-[\f(x^*)+g(y^*)]$,
and will run the same experiment for $100$ times and use the \textit{average optimality gap} as the estimate of the expectation.

\ignore{

Our experiments will mainly consider the effects of two varying parameters:

\begin{enumerate}
  \item the contraction parameter $\mathfrak{L}$,
  \item the standard deviation $\sigma$ of the noise added in each iteration.
\end{enumerate}

}

\ignore{
To investigate the performances under different values of $\mathfrak{L}$, we can change the parameters $c_1$, $c_2$, and the radius of $\{a_i\}_{i\in [N]
}$.
Once we have fixed all parameters, the value of $\mathfrak{L}$ can be explicitly derived from the proof of Lemma~\ref{lemma:contractive_mapping}.
The parameter can be calculated as follows.

\begin{align}
 &\nu=\max_i (2a_i^Ta_i) \\ &\mu = \min_i (2a_i^Ta_i)\\ &\mu_g=2c_2 \\
 &\eta \in \left( \max \left\{\frac{4}{\nu+ {\mu} +\sqrt{\left(\nu+ {\mu} \right)^2+8 \nu  {\mu} }}, \frac{2}{\nu+ {\mu} }-\frac{2 \mu_g}{\beta^2\|A^{\top}B\|^2}\right\}, \frac{2}{\nu+  {\mu} }\right)
\end{align}

And $\mathfrak{L}$ is obtained by
\begin{align}
 &R := \left(1-\frac{2 \eta \nu {\mu} }{\nu+ {\mu} }\right)+\frac{1}{\eta}\left(\frac{2}{\nu+ {\mu} }-\eta\right) \\
 &S :=\frac{\eta}{\beta} \\
 &Q := \frac{\eta}{\beta}+\frac{\eta}{4}\left(\frac{2}{\nu+ {\mu} }-\eta\right) \\
 &P := 1-\frac{1}{\eta}\left(\frac{2}{\nu+ {\mu} }-\eta\right)
 \\
 &\mathfrak{L}:=\max \{\frac{R}{P},\frac{S}{Q}\} .
\end{align}
}

\noindent\textbf{Utility under Different Choices of $\mathfrak{L}$'s.}
We first show how the contraction factor $\mathfrak{L}$ will affect the utility of the algorithm.

Note that different $\mathfrak{L}$'s are resulted from a completely different choice of $\f$ and $g$,
and that $\f$ is $2\mu$-smooth $2\mu$-strongly convex and $g$ is $2c_2$-strongly convex.
With $\nu=\mu, \mu_g=2c_2$, $\mathfrak{L}$ could be computed using Lemma~\ref{lemma:contractive_mapping}.
For parameters, $\mu, \beta, c_2, c_1$ and $\mathfrak{L}$ in the experiments are listed in Table~\ref{table:L}.
Given the limitation of $\eta$ as shown in (\ref{ineq:eta_range_for_SC}), we let $\eta$ to be
$$
\eta = \frac{1}{2}\left[ \max \left\{\frac{4}{\nu+ {\mu} +\sqrt{\left(\nu+ {\mu} \right)^2+8 \nu  {\mu} }}, \frac{2}{\nu+ {\mu} }-\frac{2 \mu_g}{\beta^2\|A^{\top}B\|^2}\right\}+ \frac{2}{\nu+  {\mu} }\right].
$$

The result is shown in Figure~\ref{fig:utility_L}.
From the figure, we could see that, under the same noise level, larger $\mathfrak{L}$ leads to slower convergence.
However, we have to highlight that, due to the different choices of the problems, i.e., different $\f$'s, to generate different $\mathfrak{L}$'s,
the results may unavoidably be affected by the different choices of other parameters listed in Table~\ref{table:L}.

\noindent \textbf{Statistical Test.}
We present a comprehensive set of statistical tests to establish significant differences in the convergence rate across various levels of $\mathfrak{L}$. 
For each iteration $t$ at a fixed level of $\mathfrak{L}$, we have collected 100 samples of optimality gaps. 
Using a standard two-sample t-test, our analysis involves comparing the mean value of the optimality gap in iteration $t$ with the gap in the fifth iteration after it to determine if there is a statistically significant difference.
We record the smallest number of iterations where this gap fails to reach statistical significance, which serves as a measure of the convergence rate.
The confidence level is set at $0.95$, and the results are presented in Table~\ref{table:L}, labeled as ``convergence iterations". 
These results are consistent with the observations from the figure.

\ignore{
We list the parameters in the experiment in Table~\ref{table:L}.
For each setting, we focus on $\E[\f(x_{t})+g(y_{t})]$.
Due to the randomized setting, we run the same experiment for $100$ times and use the average score as the estimate of the expectation.
Figure~\ref{fig:utility_L} shows the curve under different $\mathfrak{L}$'s.
}

\begin{table}[htbp]
\caption{Parameters for different $\mathfrak{L}$}
\label{table:L}
\centering
\begin{tabular}{clccccc}
\hline
$\mu$ & $\beta$ & $c_2$ & $c_1$ & $\eta$ & $\mathfrak{L}$  &convergence iterations\\ \hline
0.25    & 0.9     & 0.1   & 0.01  & 1.95   & 0.95     &26      \\
0.09    & 0.5     & 0.1   & 0.01  & 4.81   & 0.91     &13      \\
0.0225   & 0.3     & 0.1   & 0.01  & 20.00  & 0.85    &7       \\
0.01    & 0.15    & 0.1   & 0.01  & 43.30  & 0.80     &6      \\ \hline
\end{tabular}
\end{table}

\ignore{
To investigate the performances under different value of $\sigma$,
we'll fix all other parameters as follows: $c_1=0.01,c_2=0.1, \text{Radius}=0.5, \beta=0.9, \mathfrak{L}=0.97$.
Similarly, we run the same experiment for $100$ times and calculate the average score.
Figure~\ref{fig:utility_sigma} shows the curve under different $\sigma$'s.
}

\begin{table}[]
  \caption{$p$ value for mean value test among different $\sigma$'s}
  \label{table:pvalue}
  \centering
  \begin{tabular}{c|ccccc}
  \hline
  \diagbox[dir=SE]{$\sigma$}{$\sigma$} & 0.05     & 0.1      & 0.2      & 0.5     & 0.7     \\ \hline
  0.05     & 1.00        & $4.20\times 10^{-27}$  & $1.87\times 10^{-71}$ & $4.57\times 10^{-78}$ & $2.30\times 10^{-79}$ \\
  0.1      & $4.20\times 10^{-27}$  & 1.00        & $3.29\times 10^{-62}$ & $4.12\times 10^{-77}$ & $1.11\times 10^{-78}$ \\
  0.2      & $1.87\times 10^{-71}$ & $3.29\times 10^{-62}$ & 1.00        & $6.70\times 10^{-74}$ & $2.22\times 10^{-76}$ \\
  0.5      & $4.57\times 10^{-78}$  & $4.12\times 10^{-77}$  & $6.70\times 10^{-74}$  & 1.00       & $1.83\times 10^{-36}$ \\
  0.7      & $2.30\times 10^{-79}$  & $1.11\times 10^{-78}$  & $2.22\times 10^{-76}$  & $1.83\times 10^{-36}$ & 1.00       \\ \hline
  \end{tabular}
  \end{table}

\noindent\textbf{Utility under Different Choices of $\sigma$'s.}
To investigate how $\sigma$ will affect the utility,
we fix $\mu=0.25, \beta=0.9, c_2=0.1, c_1=0.01$ and hence $\mathfrak{L}=0.95$.
Setting $\sigma=0.05,0.10,0.20,0.50,0.70$, the result is shown in Figure~\ref{fig:utility_sigma}.
The figure indicates that for small values of $\sigma\leq 0.1$, the average optimality gap is small, whereas larger values of $\sigma$, such as $\sigma=0.7$, result in a larger optimality gap. This observation can be explained by the fact that as the iteration number $t$ increases, the optimality gap gradually decreases and the pair $(x_t, y_t)$ attains a state of stability. However, the introduction of noise to $x_t$ hinders the convergence process, leading to its stagnation at a specific level and making further convergence unachievable. Consequently, as the value of $\sigma^2$ increases, the convergence of $(x_t, y_t)$ becomes progressively difficult, resulting in a larger optimality gap.

\noindent \textbf{Statistical Test.}
We also present the results of a statistical test to demonstrate that the optimality gaps vary across different levels of noise, using a standard two-sample t-test for mean values. 
We perform the test on the optimality gaps of the $100$th iteration in different noise settings. 
The corresponding $p$ values are reported in Table~\ref{table:pvalue}, where the $(i,j)$th entry represents the $p$ value of the test between the noise levels referred to by the header of the table. A $p$ value of $p\leq 0.05$ indicates that we can assert, with at least $95\%$ confidence, that the optimality gaps are indeed different.
This also confirms that noise level will affect the utility of the algorithm. 
\begin{figure}[htbp]
\centering
\subfloat[Convergence rate under different $\mathfrak{L}$'s]
{
\begin{minipage}{7cm}
\centering
\includegraphics[scale=0.5]{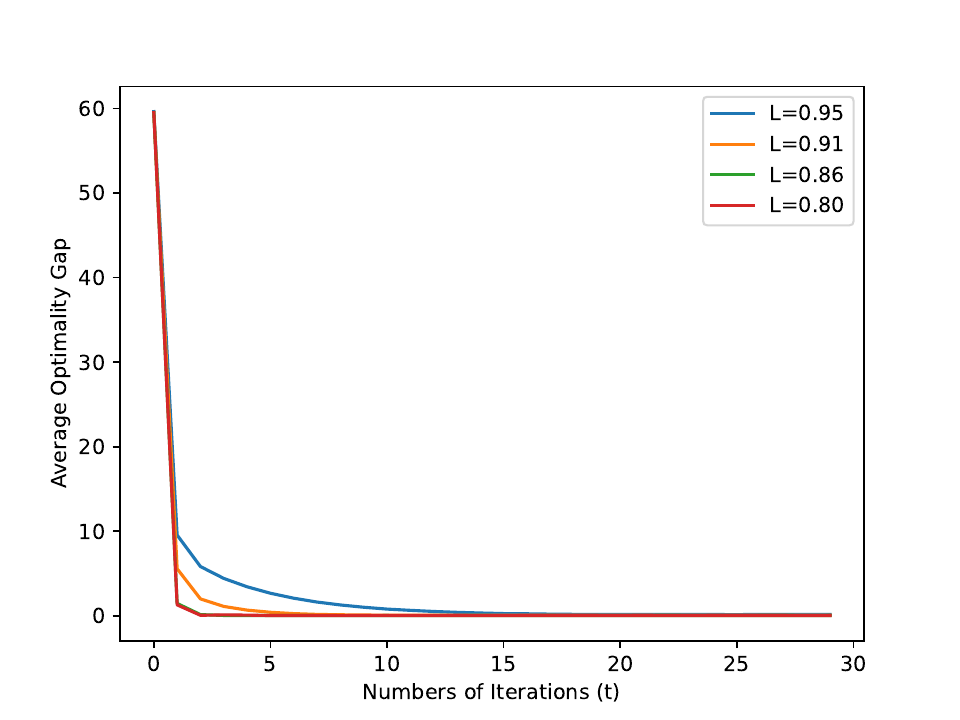}
\end{minipage}
\label{fig:utility_L}
}
\subfloat[Convergence rate under different $\sigma$'s]
{
\begin{minipage}{7cm}
\centering
\includegraphics[scale=0.5]{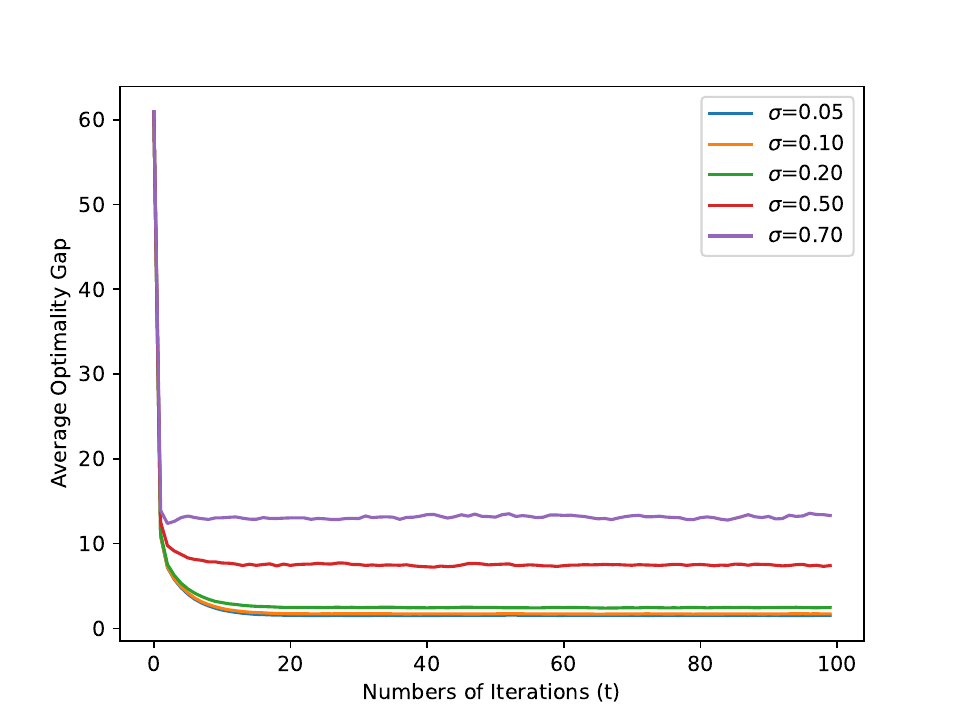}
\end{minipage}
\label{fig:utility_sigma}
}
\caption{}
\end{figure}

\section{Potential Improvements}
\label{sec:improvement}

We discuss whether the factor $\frac{1}{\beta \eta}$
in Theorem~\ref{th:ADMM_privacy}
may be improved.

\begin{compactitem}
\item \emph{Dependency on $\eta$.}  Observe that as $\eta$ tends to 0,
each ADMM iteration in Algorithm~\ref{alg:one_iteration}
essentially makes no update to the $x$ variable.  However, for some specific
case like $A = \I$, the $\lambda$
variable acts as a prefix sum for the $x$'s over all iterations.
Hence, given two different initial solutions,
after $T$ iterations, the magnitude of the difference in the $\lambda$ variables has a factor of~$T$.
Observe that the sum of $T$ independent copies of Gaussian noises only has variance proportional to~$T$, which means that the corresponding $\Dz$-divergence will have a factor
of $\Omega(T)$.  Hence, some inverse proportional dependence on $\eta$
in the divergence seems necessary if we insist on releasing the final~$\lambda$ variable.

\item \emph{Dependency on $\beta$.} Observe that as $\beta$
tends to 0, each ADMM iteration in Algorithm~\ref{alg:one_iteration}
degenerates exactly to the gradient descent update 
as in~\cite{DBLP:conf/focs/FeldmanMTT18}, which has no dependence on any~$\beta$ parameter.
This apparent paradox comes from the intermediate variable~$w$ in the first step of
our composition proof in Lemma~\ref{lemma:m1_privacy}, which has a positive divergence
even as $\beta$ tends to 0.

The best analogy we have is the following situation.
Consider two vectors $w$ and $w'$ such that $\|w - w'\| = 1$.
Let $\beta > 0$ and consider Gaussian noise $N$ sampled from
$\mcal{N}(0, \I)$ to define $\widetilde{w}_\beta = \beta w + \beta N$
and $\widetilde{w}' = \beta w' + \beta N$.
Then, as $\beta$ tends to 0, both random vectors
$\widetilde{w}_\beta$ and $\widetilde{w}'_\beta$
tend to 0, but their divergence $\Dz(\widetilde{w}_\beta \| \widetilde{w}'_\beta)$
stays 1, as long as $\beta > 0$.  However, observe that 
the divergence of the two limiting vectors (which are both 0) is 0.
Hence, the ratio of the limit of the divergence
to the divergence of the limits is $+\infty$,
which is somehow reflected in the $\frac{1}{\beta}$ factor in our bound.

This suggests that our analysis may not be tight for small~$\beta > 0$,
but it will be challenging to find an alternative method
to analyze the privacy guarantee of two consecutive noisy ADMM iterations,
or even more non-trivial to find a totally different framework
to analyze privacy amplification by iteration for ADMM.
\end{compactitem}

\section{Conclusion}
\label{sec:conclusion}

We have applied the coupling framework~\cite{BalleBGG19} 
to achieve privacy amplification by iteration for ADMM.
Specifically, we have recovered the  factor of $\frac{1}{T}$
(or $\frac{L^T}{T}$ for some $0 < L <1$ in
the strongly convex case)
in the $\Dz$-divergence as the number~$T$ of iteration increases.
We have performed experiments to evaluate the empirical
performance of our methods in
Section~\ref{sec:experiment}.
Discussion of whether the factor $\frac{1}{\beta \eta}$
in Theorem~\ref{th:ADMM_privacy}
may be improved is given in Section~\ref{sec:improvement}.

\ignore{
\begin{compactitem}
\item \emph{Dependency on $\eta$.}  Observe that as $\eta$ tends to 0,
each ADMM iteration in Algorithm~\ref{alg:one_iteration}
essentially makes no update to the $x$ variable.  However, for some specific
case like $A = \I$, the $\lambda$
variable acts as a prefix sum for the $x$'s over all iterations.
Hence, given two different initial solutions,
after $T$ iterations, the magnitude of the difference in the $\lambda$ variables has a factor of~$T$.
Observe that the sum of $T$ independent copies of Gaussian noises only has variance proportional to~$T$, which means that the corresponding $\Dz$-divergence will have a factor
of $\Omega(T)$.  Hence, some inverse proportional dependence on $\eta$
in the divergence seems necessary if we insist on releasing the final~$\lambda$ variable.

\item \emph{Dependency on $\beta$.} Observe that as $\beta$
tends to 0, each ADMM iteration in Algorithm~\ref{alg:one_iteration}
degenerates exactly to the gradient descent update 
as in~\cite{DBLP:conf/focs/FeldmanMTT18}, which has no dependence on any~$\beta$ parameter.
This apparent paradox comes from the intermediate variable~$w$ in the first step of
our composition proof in Lemma~\ref{lemma:m1_privacy}, which has a positive divergence
even as $\beta$ tends to 0.

The best analogy we have is the following situation.
Consider two vectors $w$ and $w'$ such that $\|w - w'\| = 1$.
Let $\beta > 0$ and consider Gaussian noise $N$ sampled from
$\mcal{N}(0, \I)$ to define $\widetilde{w}_\beta = \beta w + \beta N$
and $\widetilde{w}' = \beta w' + \beta N$.
Then, as $\beta$ tends to 0, both random vectors
$\widetilde{w}_\beta$ and $\widetilde{w}'_\beta$
tend to 0, but their divergence $\Dz(\widetilde{w}_\beta \| \widetilde{w}'_\beta)$
stays 1, as long as $\beta > 0$.  However, observe that 
the divergence of the two limiting vectors (which are both 0) is 0.
Hence, the ratio of the limit of the divergence
to the divergence of the limits is $+\infty$,
which is somehow reflected in the $\frac{1}{\beta}$ factor in our bound.

This suggests that our analysis may not be tight for small~$\beta > 0$,
but it will be challenging to find an alternative method
to analyze the privacy guarantee of two consecutive noisy ADMM iterations,
or even more non-trivial to find a totally different framework
to analyze privacy amplification by iteration for ADMM.
\end{compactitem}
}

\bibliographystyle{alpha}
\bibliography{admm,ref}

\end{document}